\def\eqref#1{equation~\ref{#1}}
\def\1{\bm{1}}
\DeclareMathAlphabet{\mathsfit}{\encodingdefault}{\sfdefault}{m}{sl}
\SetMathAlphabet{\mathsfit}{bold}{\encodingdefault}{\sfdefault}{bx}{n}
\newtheorem{theorem}{Theorem}[section]  %
\newtheorem{lemma}[theorem]{Lemma}
\newtheorem{proposition}[theorem]{Proposition}
\newtheorem{definition}{Definition}
\newtheorem{conjecture}[theorem]{Conjecture}
\title{Toward Physics-guided Time Series Embedding}
\author{Jiaxi Hu\textsuperscript{\rm 1}, Bowen Zhang\textsuperscript{\rm 1}, Qingsong Wen\textsuperscript{\rm 2}, Fugee Tsung\textsuperscript{\rm 1}, Yuxuan Liang\textsuperscript{\rm 1}\textsuperscript{\rm $\dagger$}\\
    \faBuilding~ \textsuperscript{\rm 1}The Hong Kong University of Science and Technology (Guangzhou)\\
    \textsuperscript{\rm 2}Squirrel Ai Learning, USA\\
    \vspace{0.5em}
    \faEnvelope~ jhu110@connect.hkust-gz.edu.cn~ zhangbw0102@gmail.com\\ qingsongedu@gmail.com~ season@ust.hk~ yuxliang@outlook.com\\
    \vspace{0.5em}
     This work was done when B. Zhang was a research intern at HKUST(GZ) \\
Y. Liang is the corresponding author
}
\definecolor{dark2orange}{rgb}{0.9, 0.4, 0.}
\definecolor{dark2purple}{rgb}{0.4, 0.4, 0.8}
\definecolor{c1}{HTML}{900C3F}
\definecolor{c2}{HTML}{900C3F}
\definecolor{c3}{HTML}{fc6160}
\definecolor{myblue}{HTML}{E6F3FC} 
\definecolor{mygray}{HTML}{DBE2E9} 
\definecolor{mygreen}{HTML}{006400} 
\definecolor{lightred}{RGB}{255,200,200}
\definecolor{darkred}{RGB}{255,120,120}
\definecolor{lightgreen}{RGB}{200,255,200}
\definecolor{lightorange}{RGB}{255,230,200}
\newcommand{\para}[1]{\iftoggle{arxiv}{\paragraph{#1}}{\textbf{#1}}}
\newcommand{\supres}[1]{{\colorbox{darkred}{#1}}}
\newcommand{\boldres}[1]{{\colorbox{lightred}{#1}}}
\newcommand{\secondres}[1]{{\colorbox{lightorange}{#1}}}
\begin{document}

\maketitle


\begin{abstract}
{In various scientific and engineering fields, the primary research areas have revolved around physics-based dynamical systems modeling and data-driven time series analysis. According to the embedding theory, dynamical systems and time series can be mutually transformed using observation functions and physical reconstruction techniques. Based on this, we propose Embedding Duality Theory, where the parameterized embedding layer essentially provides a linear estimation of the non-linear time series dynamics. This theory enables us to bypass the parameterized embedding layer and directly employ physical reconstruction techniques to acquire a data embedding representation. Utilizing physical priors results in a 10$\times$ reduction in parameters, a 3$\times$ increase in speed, and maximum performance boosts of 18\% in expert, 22\% in few-shot, and 53\% in zero-shot tasks without any hyper-parameter tuning. 
All methods are encapsulated as a plug-and-play module}.
\end{abstract}

\section{Introduction}
\label{sec:intro}

The explosion of real-time sensing data from the physical world opens up new opportunities for data-driven time series analysis, achieving widespread recognition in energy, transportation, education, meteorology, and other domains by leveraging the strong fitting capabilities of neural networks~\citep{jin2024survey,nie2022time,hu2024time,mao2024time}. However, deep time series models struggle to comprehend the underlying physical laws of data, leading to a propensity for \textbf{\textit{overfitting}} and \textbf{\textit{lacking generalizability}} to unseen data \citep{zeng2023transformers,zhang2023probts,hu2024attractor}. 

In numerous scientific and engineering disciplines, another central focus lies in dynamical systems that evolve over space and time, exampled in fluid mechanics, thermodynamics, and neuroscience \citep{brunton2020machine,tan2023selecting,chen2021high}. According to the Takens \citep{takens2006detecting} and Whitney theorems \citep{whitney1936differentiable}, time series can be viewed as observations stemming from underlying dynamical systems,  leading to a principal way to model the essence of time series data.

\begin{figure}[!h]
    \centering
    \vspace{-0.5em}
    \includegraphics[width=1\columnwidth]{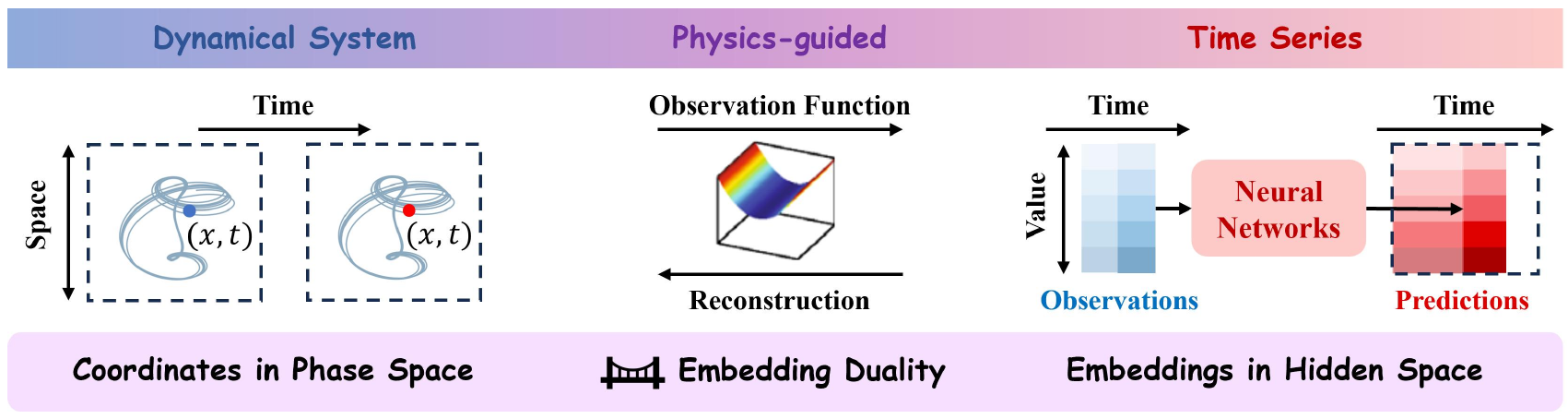}
    \vspace{-1em}
    \caption{Dynamical systems embody physical laws unfolding in space and time, with time series as the low-dimensional observations. Our embedding duality theory bridges these two frameworks, demonstrating that parameterized hidden state representations are the model's estimation of dynamical system structures.}
    \vspace{-0.5em}
\label{fig:overall}
\end{figure}

Our main goal is to develop a rich body of empirical and theoretical connections between the two frameworks. As illustrated in Figure \ref{fig:overall}, the dynamical system is primarily built on spatial coordinates sampled from physical equations, encapsulating the first-principle physical laws as they evolve over time. On the other hand, data-driven time series analysis first projects time series data into a high-dimensional latent space by a trainable embedding layer, relying on neural networks to model temporal dependencies based on the hidden representations. According to the Embedding Theory \citep{sauer1991embedology}, dynamical systems and time series can be mutually transformed through observation functions and numerical reconstruction techniques. Building on this inspiration, we introduce the concept of
\begin{wrapfigure}{r}{0.382\textwidth}
  \begin{center}
  \vspace{-1em}
    \includegraphics[width=0.382\textwidth]{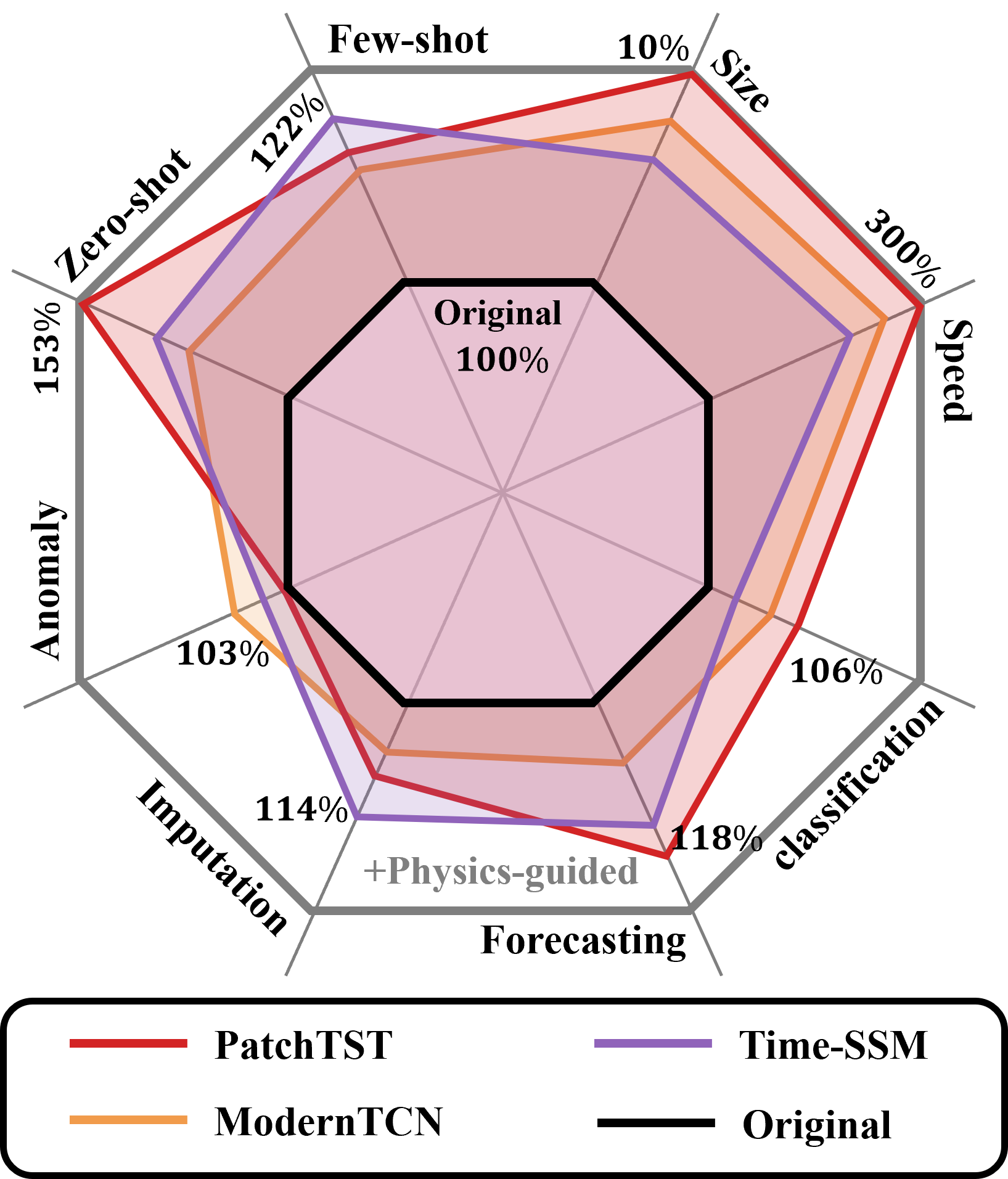}
  \end{center}
  \vspace{-0.7em}
  \caption{\small{Performance comparison of physics-guided time series embedding versus original method across eight aspects.}}
  \vspace{-2em}
  \label{fig:radar}
\end{wrapfigure}
\textbf{\textit{Embedding Duality}}: \textit{hidden state representation in deep time series model is equivalent to the underlying dynamical system structure of the data in phase space}. 
Theoretically, we demonstrate that parameterized embeddings serve as a linear estimation of underlying nonlinear dynamics, inheriting various physical properties of the system. Moreover, the feature space of system dynamics will transform into an ellipsoid space with model gradients. Empirically, Various experimental results, including dim scaling law, causal modeling, and visualizations, further support our propositions.

Supported by the Embedding Duality, we can skip parameterized embedding layers and directly apply physical priors and numerical techniques to reconstruct the underlying dynamical system structure as data embeddings (referred to as physics-guided time series embedding). Harnessing the powerful fitting capability of neural networks, we aim to subsequently parameterize the function-to-function dynamical system evolution within the Sobolev space for various downstream tasks. Leveraging the physical priors, as shown in Figure \ref{fig:radar}, where the octagon values quantify the performance improvement achieved through physics-guided embeddings, we have improved multiple model architectures on real-world time series analysis benchmarks. As immediate consequences of this paper:
\linebreak
\begin{itemize}[leftmargin=*,itemsep=0pt,topsep=0pt]
  \item We innovatively integrate dynamical system Embedding Theory into the time series analysis tasks, bridging physical embedding and parameterized embedding from both theoretical and empirical perspectives. The \textbf{\textit{Embedding Duality}} and various dynamical evidence are proposed.
  \item For expert models, which train from scratch on a specific dataset, we evaluate over ten embedding techniques, with our proposed physics-guided embedding achieving up to a \textbf{10$\times$} reduction in parameters, a \textbf{3$\times$} speed increase, an \textbf{18}\% performance boost, and improved robustness across four time series analysis tasks and three neural network architectures. Notably, the physics-guided model reaches optimal performance \textbf{{without}} hyper-parameter tuning, while the carefully designed architecture yields marginal gains depending on specific hyper-parameters \citep{qiu2024tfb}.
  \item For foundation models, which leverage pre-training on diverse datasets or few samples, our methods lead to maximum improvements of \textbf{53}\% in zero-shot and \textbf{22}\% in few-shot tasks. This generality is expected to promote the emergence of physics-guided large-scale time series foundation models.
\end{itemize}

\section{Formulation}

\begin{definition}[\textbf{Dynamical System}]
    Let the domain $S$ be an open subset of $\mathbbm{R}^d$ and set an integer $k \geq 1$. Define the system state as $\boldsymbol{x}: S \mapsto \mathbbm{R}^m$ where $\boldsymbol{x}=\left(x^1, \ldots, x^m\right)$. Then, an expression of:
$$
\mathcal{F}\left(D^k \boldsymbol{x}(s), D^{k-1} \boldsymbol{x}(s), \ldots, D \boldsymbol{x}(s), \boldsymbol{x}(s), s\right)=0
$$
is called a $k^{\text {th}}$ order system of partial differential equation (or ordinary differential equation when $d=1$), where $\mathcal{F}: \mathbbm{R}^{m d^k} \times \mathbbm{R}^{m d^{k-1}} \times \ldots \times \mathbbm{R}^{m d} \times \mathbbm{R}^m \times S \mapsto \mathbbm{R}^m$ and $s \in S$. 
    Continuous systems typically exist on locally differentiable manifold spaces $\mathcal{M}$ \citep{vlachos2008state,hu2024attractor,de2023time}, where $\mathbbm{R}$ in the definition can be viewed as the discrete record.

\label{def:DS}
\normalsize
\end{definition}

\noindent{\bfseries Problem Statement.}~
Given multivariant historical sampled data $\bm{U}\in\mathbb{R}^{C \times T}$, the time series analysis model aims to derive a nonlinear functional mapping ${f}:\bm{U}\rightarrow\bm{Y}$ for various downstream tasks, e.g., forecasting, classification, anomaly detection, imputation. Adhere to the standard deep learning paradigm, ${f}$ can be decomposed into Embedding, Encoder, and Decoder parts, while in this paper:

(1) \textbf{Embedding} employs mathematical methods to reconstruct the underlying dynamical system based on time series data $\bm{U}$, \faSearch ~\underline{which is the research focus of this paper presented in Section \ref{sec:physics-guided emb}}. 

(2) \textbf{Encoder} serves as a flexible architecture, with CNN-based \citep{luo2024moderntcn}, Transformer-based \citep{wen2023transformers}, and SSM-based \citep{hu2024time} models selected for this paper. Linear models \citep{zeng2023transformers}, which generally do not require embedding layers, fall outside the scope.

(3) \textbf{Decoder} follows mainstream time series model paradigms, utilizing token flattening and projection \citep{wang2024tssurvey} operations to generate various output results depending on the task.

\section{Related Work}
\noindent \textbf{Parameterized Embedding in Deep Time Series Models.}~
The embedding technique, serving as a space transformation $\mathbbm{R}^n \mapsto \mathbbm{R}^m$, facilitates the mapping of discrete sparse features into continuous dense vector representations, laying a solid foundation for success across various machine learning domains. In time series analysis benchmark, mainstream methods utilize patch operation (Figure \ref{fig:embeddings}a) to conduct local linear projection \citep{nie2022time}, with certain models using convolutions to address inter-block information isolation \citep{hu2024twins,zhang2024multi,lin2024sparsetsf}. Additionally, for audio modeling tasks, windowed spectral transformation techniques (Figure \ref{fig:embeddings}b), such as short-time Fourier transform, are commonly employed to extract time-frequency representations, thereby addressing concerns related to information density \citep{erol2024audio}. Grid embeddings (Figure \ref{fig:embeddings}c) are commonly employed to handle the spatial-temporal relationships within the data \citep{gupta2021multiwavelet,wu2023solving}. Furthermore, several studies leveraged self-supervised learning to obtain enhanced embedding representations \citep{lee2023learning,fraikin2023t,zhang2024self}.

\noindent \textbf{Embedding Theory for Dynamics System Reconstruction.}~
Since significant results proposed by \citep{whitney1936differentiable,takens2006detecting} and formalized in \citep{sauer1991embedology}, the Embedding Theory has pervaded through almost all aspects of nonlinear dynamical systems (Definition \ref{def:DS}). The time series $x(t)\in\mathbbm{R}^{n}$ can be broadly interpreted as successive, though not always regular, observations of a dynamical system $\mathcal{F}\in\mathbbm{R}^{m}$ via a measurement function $h: \mathbbm{R}^{m} \mapsto \mathbbm{R}^{n} (n\textless m)$. The main goal is to reconstruct the underlying system and explore its properties, which paved the way for developing numerous techniques like derivatives \citep{packard1980geometry}, integrals \citep{RevModPhys.70.1455}, time delay \citep{takens2006detecting, abarbanel1994predicting}, and principal component embedding \citep{broomhead1986extracting}. The system dynamics is subsequently learned using preferred modeling tools such as recurrent neural networks \citep{sangiorgio2020robustness}, state space models \citep{alonso2024state, hu2024attractor}, and reservoir computing \citep{haluszczynski2019good,yan2024emerging}, etc. Attraos \citep{hu2024attractor} pioneered using the time delay embedding technique in time series forecasting tasks, while our paper extensively explores various embedding techniques (Section \ref{sec:tech-detail}), and provides comprehensive theoretical (Section \ref{sec:emb duality}) and experimental (Section \ref{sec:exp}) analysis.

\section{Physics-guided Time Series Embedding}
\label{sec:physics-guided emb}

In this section, we commence by elucidating the implementation of our physic-guided embeddings, including the Time Delay, Principal Component, High-order Derivatives, and Integral-Differential methods. We summarize their properties before progressing to the proposed Embedding Duality.

\begin{figure}
    \centering
    \hspace*{-0.5ex}
    \begin{adjustbox}{width=1.03\columnwidth, center}
    \includegraphics[width=1.\columnwidth]{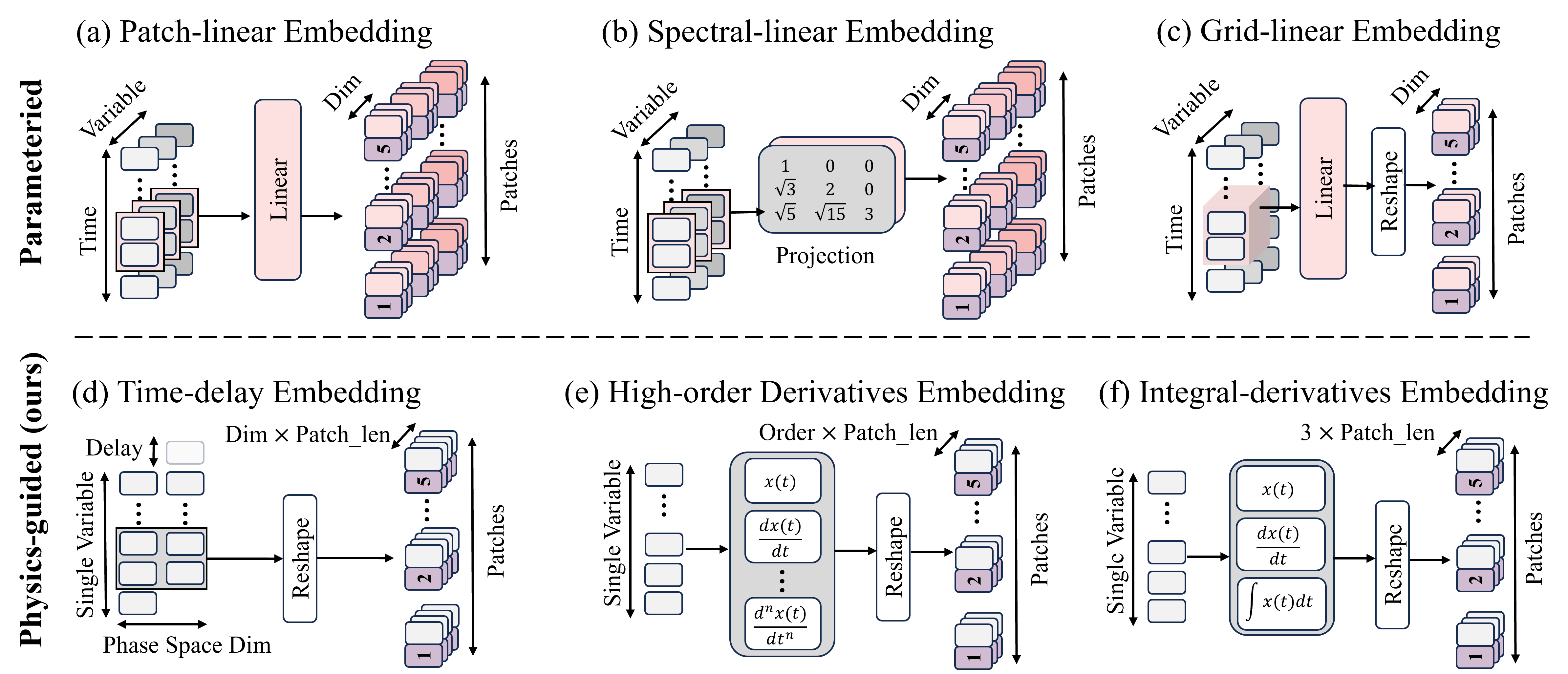}
    \end{adjustbox}
    \caption{Existing parameterized embedding (a-c) and physics-guided non-parametric embedding (d-f) techniques. 
(a) Each time series patch utilizes a shared linear projection layer to obtain hidden representations.
(b) Dense time series are processed using windowed spectral transformations with gradients for adaptability.
(c) Multivariant time series are embedded using a shared linear layer on a grid measure.
(d) Time Delay embedding based on predetermined hyper-parameters.
(e) Higher-order derivative values are concatenated to reconstruct dynamical structures.
(f) Integral terms can replace higher-order derivatives to address numerical instability.
}
\label{fig:embeddings}
\vspace{-1em}
\end{figure}

\subsection{Technical Details}
\label{sec:tech-detail}
\paragraph{Time Delay Embedding (TD-Emb).}
As shown in Figure \ref{fig:overall}(d), time delay embedding augments a scalar time series $x\in\mathbbm{R}^{T}$ into a higher-dimensional dynamical system $\mathcal{F}\in\mathbbm{R}^{m\times (T-(m-1)\tau)}$, where $\mathcal{F}(t)=(x(t), x(t-\tau), \ldots, x(t-(m-1) \tau))$, by embedding dimension $m$ and time delay $\tau$. Theoretically, when $m$ exceeds twice the dynamical dimension, the homeomorphic structure can be reconstructed \citep{vlachos2008state}. In this paper, $\tau$ is set heuristically as a quarter of the most dominant period in the signal\footnote{The sine wave ${x}(t)=\sin(\omega t)$ yields the most circular embedding in a 2D plane with $\tau=2\pi/4\omega$} and $m$ is determined by the CC method \citep{kim1999nonlinear}. 

For multivariant time series, guided by the Lyapunov exponents of each variable, we can either employ a channel-independent strategy (CI) or concatenate $\{\mathcal{F}_i\}_{i=1}^C$ \citep{vlachos2008state} as a whole to employ a channel-dependent strategy (CD). This will be omitted in the following descriptions.\linebreak
\paragraph{Principal Component Embedding (PC-Emb).}
TD-Emb is the most popular method for visualizing the dynamical structures of systems within Euclidean space. However, its performance is highly sensitive to the choice of hyper-parameters. As an alternative, PC-Emb, outlined in Eq. \ref{eq:pcs}, begins by applying TD-Emb to obtain $X$, followed by the computation of the covariance matrix $C$ from $X$. Finally, a $k$-dimensional Principal Component Analysis (PCA) is performed to derive the system representation $\mathcal{F}$. Where $X\in\mathbbm{R}^{m\times (T-(m-1))}$, $C\in\mathbbm{R}^{m\times m}$, and $\mathcal{F}\in\mathbbm{R}^{m\times k}$.
\begin{equation}
\begin{small}
    X = \text{TD-Emb}(m, \tau=1, x) \quad\quad\quad\quad C_{i j}=\langle X_{i j}\rangle \quad\quad \quad\quad \mathcal{F}=\text{PCA}(k, C)
\end{small}
\label{eq:pcs}
\end{equation}
\paragraph{High-order Derivatives Embedding (HD-Emb).}
In addition to the TD-Emb method, we leverage the multi-order characteristics of the system in Definition \ref{def:DS} by directly concatenating high-order derivatives to construct $\mathcal{F}\in\mathbbm{R}^{(m+1)\times T}$. In Eq. \ref{eq:derivates}, we utilize the Forward Differencing technique to approximate this continuous process, with hyper-parameters: order $m$ and discrete step size $\Delta$.
\begin{equation}
\begin{small}
    \mathcal{F}(t)=\left(x(t), \frac{d x(t)}{d t}, \ldots, \frac{d^m x(t)}{d t^m}\right) ~\textbf{(Continuous)} \quad \quad \quad \frac{d x(t)}{d t} \approx \frac{x(t+\Delta)-x(t)}{\Delta} ~\textbf{(Discrete)}
\end{small}
\label{eq:derivates}
\end{equation}
Although $m$ and $\Delta$ can still be calculated using numerical methods \citep{tan2023selecting}, our experiment results indicate that $m=3$ and $\Delta=1$ generally yield the best performance. In some studies related to ordinary differential equations and state-space models \citep{smith2022simplified,gu2023mamba,hu2024time}, $\Delta$ is defined as a learnable parameter to selectively emphasize important information in the data. In this research, we prioritize the efficiency of non-parameterized physical priors, while the exploration of trainable High-order Derivatives embedding is left for future work.

\paragraph{Integral-differential Embedding (ID-Emb).}
However, in the HD-Emb method, the approximations of successive higher-order derivatives are generally negatively impacted by the signal-to-noise ratio. In Eq. \ref{eq:indiff}, an alternative method is to replace the high-order terms with the integral value by only hyper-parameter $\Delta$, where continuous integration can be approximated using summation operations.
\begin{equation}
\begin{small}
\mathcal{F}(t)=\left(\int_{-\infty}^t x(t) d t, x(t), \frac{d x(t)}{d t}\right) ~\textbf{(Continuous)} \quad \int_{-\infty}^t x(t) d t \approx \Delta \sum_{i=1}^{T}x(t+i\Delta) ~\textbf{(Discrete)}
\label{eq:indiff}
\end{small}
\end{equation}
\paragraph{Patch \& Padding.}
In order to reduce computational complexity and enhance model stability, we adhere to mainstream practices \citep{nie2022time} by segmenting the obtained dynamical system $\mathcal{F}$ using two parameters: patch length and stride. For lengths that are not divisible, we employ the left zero-padding operation, which achieves optimal performance compared to other padding types.


\paragraph{Discussion.} As shown in Table \ref{tab:various embs}, we provide a comprehensive comparison of the four methods. Both TD-Emb and HD-Emb achieve optimal performance in two metrics each. In contrast, ID-Emb is constrained by a fixed dimension of 3, and PC-Emb tends to lose critical nonlinear information during the PCA process, resulting in poor performance. Consequently, the experimental section will primarily focus on the first two methods. Furthermore, we have encapsulated various embedding methodologies within the code repository, enabling direct invocation with \textbf{\textit{a single line of code}}.

\begin{table}[t!]
        \centering
	\caption{\small Comprehension comparison for various physics-guided embedding methods.}
 \renewcommand{\arraystretch}{0.8} 
        \setlength{\tabcolsep}{5pt}
        \begin{adjustbox}{width=1\columnwidth, center}
        \resizebox{1\columnwidth}{!}{
        \begin{tabular}{c|c|c|c|c|c}
\toprule
Method & Interpretability & Performance & Robustness & Convergence Rate & Hyper-parameter \\ 
\midrule
\faSearch ~\textbf{Time Delay (TD-Emb)} & \textbf{Best} & \underline{Good} & \underline{Good} & \textbf{Best} & $m,\tau\leftarrow$ physical prior \\
\midrule
\faSearch ~\textbf{High-order Derivatives (HD-Emb)} & \underline{Good} & \textbf{Best} & \textbf{Best} & \underline{Good} &  $m,\Delta\leftarrow$ typically (3,1) \\
\midrule
{Integral-differential (ID-Emb)} & Trivial & Trivial & \underline{Good} & \underline{Good} & $\Delta\leftarrow$ typically (1) \\
\midrule
{Principal Component (PC-Emb)} & Trivial & Poor & Poor & Poor & $m,k\leftarrow$ physical prior \\
\bottomrule
\end{tabular}
}
\end{adjustbox}
\label{tab:various embs}
\end{table}

\subsection{Theoretical Justification for Embedding Duality}
\label{sec:emb duality}


\begin{proposition}
The embedding method, which uses a shared linear matrix, is an integral transformation $h(t) = \int_{-\infty}^t x(s)\phi(t,s)\mathrm{d}\mu(s)$ with limited time-invariant measure $\mu$ and polynomial basis $\phi$.
\label{theo:integral}
\end{proposition}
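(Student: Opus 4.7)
The plan is to express a shared-weight linear embedding as a causal discrete convolution, lift it into a continuous integral, and then realize the weight profile via polynomial interpolation, producing the claimed representation.

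Concretely, a patch embedding of length $L$ produces, at position $t$, the scalar $h(t) = \sum_{i=0}^{L-1} w_i\, x(t-i)$, where $w = (w_0, \ldots, w_{L-1})$ is a single row of the shared projection matrix. Because the same $w$ is reused at every $t$, the map $x \mapsto h$ is translation-invariant, i.e.\ a discrete causal convolution. I would then introduce the atomic measure $\mu = \sum_{i=0}^{L-1} \delta_{t-i}$; its support is bounded (the ``limited'' clause) and, after the substitution $u = t-s$, it depends only on the lag $u$ rather than on absolute time (the ``time-invariant'' clause). This immediately yields $h(t) = \int_{-\infty}^{t} x(s)\, w(t-s)\, d\mu(s)$, so the only remaining task is to realize the discrete weight profile $w(\cdot)$ as (the restriction of) a polynomial.

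For the polynomial-basis clause I would invoke the unisolvence theorem: any finite sequence of $L$ weights $\{w_i\}_{i=0}^{L-1}$ sampled at distinct nodes admits a unique interpolating polynomial $P$ of degree at most $L-1$ with $P(i) = w_i$. Setting $\phi(t,s) := P(t-s)$ defines a translation-invariant polynomial kernel obeying $\phi(t,t-i) = w_i$, whence
\[
h(t) \;=\; \int_{-\infty}^{t} x(s)\,\phi(t,s)\, d\mu(s),
\]
as asserted. To make the representation canonical, I would re-expand $P$ in an orthogonal family, e.g.\ Legendre polynomials on the patch window, in the spirit of the HiPPO formulation, so that the coefficients decouple under $\mu$ and the representation is numerically stable and interpretable as a projection onto a fixed polynomial subspace.

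The main obstacle is interpretive rather than algebraic. First, one must be precise that ``polynomial basis'' refers to a basis of the kernel space over the patch window, not of the input signal itself, so that the finite patch length caps the polynomial degree at $L-1$ and makes interpolation exact. Second, the causal bound $\int_{-\infty}^{t}$ tacitly assumes the weights are non-anticipative; for standard non-overlapping patching this is a matter of indexing convention, but it must be checked to keep the support of $\mu$ inside $(-\infty, t]$. Once these two points are settled, the proposition reduces to the interpolation argument above and sets the stage for the next step in the section, namely viewing parameterized embeddings as the linear surrogate for a genuinely nonlinear phase-space reconstruction.
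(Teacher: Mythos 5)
Your construction is correct, and it proves the literal statement by a route that differs from the paper's. The paper does not give a self-contained derivation at all: its proof is an appeal to the HiPPO line of work, observing that patch operations and convolutions are parameterized continuous convolutions under a uniform, finite measure window, ``akin to a polynomial basis function projection,'' with different basis/measure choices recovering Fourier, wavelet, and S4-type transforms. You instead argue constructively: shared weights $\Rightarrow$ causal discrete convolution, an atomic finite-support measure supplies the ``limited, time-invariant'' clause (your lag-variable reading of time-invariance matches the intended sense, since the window slides rigidly with $t$), and unisolvence gives an exact degree-$\le L-1$ interpolant $P$ so that $\phi(t,s)=P(t-s)$ is a polynomial kernel. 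This buys you an exact, elementary, citation-free proof, but note what it costs: because any finite weight profile admits such an interpolant, the ``polynomial basis'' clause becomes essentially vacuous in your reading, whereas the paper's HiPPO framing is what carries the interpretive weight used immediately afterward --- namely that the embedding projects the input history onto a fixed polynomial spectral space with the measure governed by patch length and $\phi$ parameterized by the embedding matrix. Your closing remark about re-expanding $P$ in Legendre polynomials on the patch window is exactly the bridge between the two arguments; if you foreground that step (per output row of the shared matrix, giving a family of polynomial basis functions rather than one interpolant), your proof subsumes the paper's intended reading rather than merely satisfying the statement's letter.
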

Considering polynomials as universal approximators for dynamical systems \citep{bollt2021explaining}, Proposition \ref{theo:integral} indicates that parameterized embedding methods (Figure \ref{fig:embeddings}(a-c)) essentially projects input time series into polynomial spectral space to represent the dynamical structure, where the measure $\mu$, or weight function sometimes, is governed by patch length, $\phi$ is parameterized by the embedding matrix. 

\begin{proposition}
    For the full-rank embedding matrix, the embedding process is a similarity transformation that maintains the original dynamical properties (system eigenvalues).
\label{theo:linear}
\end{proposition}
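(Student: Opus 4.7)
The plan is to realize the embedding as an invertible change of coordinates on phase space and then verify that the induced map on the embedded state carries the same spectral data as the original dynamics. First, I would write the underlying dynamical system in discrete time as $\boldsymbol{x}_{t+1}=\mathcal{F}(\boldsymbol{x}_t)$ (the continuous case proceeds in parallel by working with the vector field) and linearize at a fixed point to obtain the Jacobian $A=D\mathcal{F}(\boldsymbol{x}^{*})$, whose spectrum encodes the dynamical properties of interest. By Proposition \ref{theo:integral}, any shared-linear embedding acts as an integral transform with a polynomial basis, so its action on a patch is captured by a fixed matrix $E$ that does not depend on $t$.

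Second, full-rankness of $E$ gives a well-defined left inverse $E^{+}$ with $E^{+}E=I$, so the change of variable $\boldsymbol{y}=E\boldsymbol{x}$ yields the induced linearization $\boldsymbol{y}_{t+1}=E A E^{+}\boldsymbol{y}_t$. The new system matrix $B:=E A E^{+}$ is by construction conjugate to $A$, hence $\det(\lambda I-B)=\det(\lambda I-A)$, which immediately gives equality of eigenvalues with their algebraic and geometric multiplicities, and therefore of trace, determinant, and characteristic polynomial. For the nonlinear map, the same conjugation produces $\widetilde{\mathcal{F}}:=E\mathcal{F}E^{+}$ with $D\widetilde{\mathcal{F}}(E\boldsymbol{x}^{*})=E A E^{+}$, so the similarity statement transfers to the tangent level at every fixed point, preserving hyperbolicity type, local Lyapunov exponents, and other spectral invariants.

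The hardest step will be the non-square case, where the embedding dimension exceeds the intrinsic system dimension: then $E$ is only injective, $E^{+}$ must be interpreted as the Moore--Penrose pseudoinverse, and $E A E^{+}$ is conjugate to $A$ only when restricted to $\mathrm{range}(E)$, with spurious zero eigenvalues appearing in the orthogonal complement. I plan to handle this by restricting the similarity claim to the invariant subspace $\mathrm{range}(E)$, verifying that embedded orbits stay inside it, and interpreting the kernel directions as dynamically inert padding so that the essential spectrum is unaffected. A secondary subtlety is that full-rankness of the polynomial basis from Proposition \ref{theo:integral} requires the measure $\mu$ to be non-degenerate on the relevant function class, which I would check by direct inspection of the patch weight before combining the two arguments into the final statement.
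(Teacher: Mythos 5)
Your core argument is sound and reaches the same algebraic conclusion as the paper, but by a noticeably different route. The paper does not linearize at a fixed point: it takes the continuous nonlinear, time-variant dynamics $\dot{x}=g(x(t),t)$ and argues that the patch operation itself produces the linearization, via the patch-averaged Jacobian $\bar{J}=\frac{1}{P}\sum_{i=1}^{P}J(x(t_i))$ with $J(x)=\partial g/\partial x$, so that the hidden-state evolution is $\bm{h}_{i+1}=\bm{W}\bar{J}\bm{W}^{-1}\bm{h}_i$ and the full-rank $\bm{W}$ enters only as the conjugating matrix. That averaging step is what supports the surrounding narrative (embedding as a \emph{linear estimation} of nonlinear dynamics along arbitrary trajectory segments, and the remark that longer patches discard more nonlinearity), and it requires no equilibrium to exist; your fixed-point linearization $A=D\mathcal{F}(\boldsymbol{x}^{*})$ proves eigenvalue preservation only locally at equilibria, which is a narrower reading of ``dynamical properties'' than the paper intends, though not wrong for the literal spectral claim. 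Conversely, your handling of the rectangular case --- interpreting $E^{+}$ as the Moore--Penrose pseudoinverse, restricting to the invariant subspace $\mathrm{range}(E)$, and identifying the extra spectrum as spurious zeros --- is more careful than the paper, which simply writes $\bm{W}^{-1}$ and implicitly assumes a square full-rank matrix. In short: the shared kernel is that conjugation by a full-rank matrix preserves the characteristic polynomial; the paper buys a trajectory-level, patch-based interpretation of where the linear map comes from, while you buy rigor in the non-square setting and explicit bookkeeping of multiplicities at the cost of tying the statement to fixed points.
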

Typically, the dense matrix in deep models is considered to be full rank. Proposition \ref{theo:linear} shows that the parameterized embedding process is a mere space coordinate transformation, with non-linear time series dynamics linearized by the average Jacobin value within the data patch. 

\begin{lemma}
A continuous function $f$ is K-Lipschitz when $\|f(x_1)-f\left(x_2\right)\|\leq K\|x_1-x_2\|$, then:

(1) The state space model with the negative diagonal matrix $\bm{A}$ and normalization layers is 1-Lipschitz.

(2) The fully connected and convolution neural network with normalization layers is 1-Lipschitz.

(3) The standard dot-product attention is not Lipschitz. The $L_2$ attention is bounded Lipschitz.
\label{theo:liptz}
\end{lemma}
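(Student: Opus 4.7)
The plan is to establish each clause by bounding the Lipschitz constant of the elementary operations and composing with the standard rule that $g\circ f$ is $K_1K_2$-Lipschitz when $f$ is $K_1$- and $g$ is $K_2$-Lipschitz under the same norm, together with the fact that the Lipschitz constant of an affine map $x\mapsto Wx+b$ is the spectral norm $\|W\|_2$.

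For clause~(1), I would discretize $\dot h=\bm{A}h+\bm{B}x$ with $\bm{A}=\mathrm{diag}(-\lambda_1,\dots,-\lambda_n)$, $\lambda_i>0$, to obtain $h_{t+1}=\bar{\bm A}h_t+\bar{\bm B}x_t$ with $\bar{\bm A}=\exp(\bm{A}\Delta)$. The spectral norm is $\|\bar{\bm A}\|_2=\max_i e^{-\lambda_i\Delta}\le 1$, so the state recurrence is non-expansive in $h$; unrolling with the triangle inequality and then bounding the input-to-state map by the convergent series $\sum_{k\ge 0}\|\bar{\bm A}\|^k\|\bar{\bm B}\|$ shows the raw SSM is contractive. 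Composing with a normalization layer whose Jacobian $J_{\mathrm{LN}}(x)=\tfrac{1}{\sigma}\bigl(I-\tfrac{1}{n}\mathbf{1}\mathbf{1}^\top-\tfrac{1}{n}\hat x\hat x^\top\bigr)$ with $\hat x=(x-\mu)/\sigma$ has spectrum in $[0,1/\sigma]$ then pins the overall constant at $1$ under the standard unit-variance convention. For clause~(2), the same composition argument applies directly: each fully-connected or convolution layer is affine with spectral-normalized (or weight-normalized) weight of spectral norm $\le 1$ (for convolutions this is the spectral norm of the associated Toeplitz/circulant operator, equivalently the $\ell^\infty$ norm of the Fourier symbol), standard elementwise nonlinearities such as $\mathrm{ReLU}$, $\mathrm{GELU}$, $\tanh$ are $1$-Lipschitz, and the normalization layer is $1$-Lipschitz by the same $J_{\mathrm{LN}}$ bound.

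For clause~(3), I would first exhibit a counterexample for standard softmax attention: fixing $V$ and letting $Q=K=\alpha I$ with $\alpha\to\infty$, the logits $QK^\top/\sqrt d$ scale as $\alpha^2/\sqrt d$ while the read-out depends discontinuously on which logit attains the maximum; computing $\partial y/\partial\alpha$ explicitly via the product rule shows it diverges, which rules out any global Lipschitz constant. For $L_2$ attention, the logit matrix $-\|q_i-k_j\|^2/\sqrt d$ replaces $q_i^\top k_j/\sqrt d$. Since the softmax Jacobian has operator norm $\le 1/2$ and the map $(Q,K)\mapsto -\|q_i-k_j\|^2$ is Lipschitz on any bounded domain, chaining through the value read-out $V$ gives a bounded Lipschitz constant on bounded inputs.

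The main obstacle will be in clause~(3): naive norm bounds on the softmax Jacobian stay finite, so the blow-up for standard attention only appears after carefully tracking the coupling between unbounded $QK^\top$ logits and the saturation of the softmax. I would follow the construction of Kim et al.\ (2021) to make the divergent direction explicit, and for $L_2$ attention I would localize to a ball of radius $R$ in the input space so that the quadratic logits remain Lipschitz. A secondary technical point in clauses~(1) and (2) is the lower bound on $\sigma$ inside $J_{\mathrm{LN}}$; I would justify this by adopting the bias-free, unit-scale normalization convention already used in the encoder blocks of the previous section, which guarantees $\|J_{\mathrm{LN}}\|_2\le 1$ and closes the argument.
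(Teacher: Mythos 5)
Note first that the paper does not actually reprove these facts: its ``proof'' defers clause (1) to Lemma~2.8 of Time-SSM \citep{hu2024time} (plus a left half-plane remark) and clauses (2)--(3) to \citep{kim2021lipschitz}. Your attempt to rebuild the argument from scratch is a different route, but it has concrete gaps, the most serious in clause (3). For $L_2$ attention you restrict to a ball of radius $R$ and conclude a ``bounded Lipschitz constant on bounded inputs''; local Lipschitzness on a compact set is automatic for any smooth map, including standard dot-product attention, so this both fails to establish the global statement the lemma intends and erases the very contrast the clause asserts. The actual content of \citep{kim2021lipschitz} is that $L_2$ self-attention with tied query/key weights is \emph{globally} Lipschitz, and the proof hinges on the exponential decay of the softmax weights in the squared distances offsetting the quadratic growth of the logits --- a cancellation your chaining argument (softmax Jacobian norm times a locally Lipschitz logit map) never exploits. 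Relatedly, your divergence construction differentiates with respect to the scale $\alpha$ placed on $Q=K$; Lipschitz continuity here is with respect to the input sequence at fixed parameters, so the blow-up must be exhibited along a family of inputs (in Kim et al.\ the Jacobian contains terms growing with the input itself), and softmax is smooth, so there is no ``discontinuous'' dependence, only unbounded derivatives.

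Clauses (1)--(2) also do not close as written. Non-expansiveness of the state recurrence ($\|\bar{\bm A}\|_2\le 1$) does not give a 1-Lipschitz input-output map: your own unrolled bound is $\sum_{k\ge 0}\|\bar{\bm A}\|^k\|\bar{\bm B}\|=\|\bar{\bm B}\|/(1-\|\bar{\bm A}\|)$, which need not be $\le 1$ and can be large when $\lambda_{\min}\Delta$ is small. Nor can the normalization layer ``pin the overall constant at 1'': the layer-norm Jacobian norm scales like $1/\sigma$ with $\sigma$ the \emph{input-dependent} standard deviation, so a ``unit-variance convention'' is not something one can adopt --- layer normalization is not globally 1-Lipschitz --- and in clause (2) you additionally assume spectrally normalized weights, a hypothesis absent from the statement. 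The paper sidesteps all of this by citation; a self-contained proof must either import the precise hypotheses under which the cited results hold (the tied-weight $L_2$ attention of \citep{kim2021lipschitz}, the discretization and norm assumptions of Lemma~2.8 in \citep{hu2024time}) or state the extra assumptions explicitly.
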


The Lipschitz continuity restricts the dynamical structure under small perturbations, ensuring that when $K$=1, the dynamical properties are generally preserved. Lemma \ref{theo:liptz} allows us to disregard the influence of the encoder architecture in most cases, even though the transformer backbone may not always be optimal, to focus exclusively on the dynamical changes within the embedding layer. Moreover, it provides a solid foundation for flexibly replacing the embedding layer as needed.


\paragraph{Dynamical Feature Space} As illustrated in Figure \ref{fig:emb_duality}(a), for the embedding projection matrix $V=\operatorname{eig}(v^{1},\cdots,v^{M})$ initialized with a normal distribution, when the dimension is sufficiently large, its feature space can be considered spherical. For the gradient $\nabla$ passed into the embedding layer, we can apply the singular value decomposition (SVD) with the diagonal matrix $S=\operatorname{diag}(\sigma_1,\cdots,\sigma_M)$ and orthogonal matrix $U=(u^{1},\cdots,u^{M})$, specifically, $\nabla v^{m}=\sigma u^{m}$. 
This process can be described as the transformation of a spherical feature space (slice) into an ellipsoidal feature space.

\begin{lemma}
    The Lyapunov exponents $\lambda_m=\lim\limits_{t \rightarrow \infty} \frac{1}{t} \ln \sigma_m(t)$ of the system attractors are the mean logarithmic growth rates of the principal axes lengths of the ellipsoidal feature space. 
\label{theo:lyps}
\end{lemma}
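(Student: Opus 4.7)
The plan is to link the gradient-driven deformation of the embedding feature space, already described in the excerpt via the SVD $\nabla v^{m}=\sigma_m u^{m}$, with the classical Oseledets characterization of Lyapunov exponents as asymptotic logarithmic singular values of a flow's Jacobian. First I would fix notation: treat the embedding matrix $V$ as a point on the manifold of full-rank $M\times M$ matrices (justified by Proposition \ref{theo:linear}), and model its evolution under training as a (possibly noisy) dynamical system $V_{t+1}=V_t+\eta\,\nabla_t$, or in continuous time $\dot V=G(V,t)$. The unit ball in the hidden space is carried by $V_t$ to an ellipsoid whose principal semi-axes are exactly the singular values $\sigma_m(t)$ appearing in the stated SVD, which is the starting geometric fact I would record before moving on.

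Next I would invoke the definition of Lyapunov exponents for the induced flow on the tangent bundle. For an initially spherical perturbation $\delta V_0$, its image $\delta V_t$ becomes ellipsoidal with semi-axes $\sigma_m(t)$; by the multiplicative ergodic theorem (Oseledets), whenever the limit exists one has $\lambda_m=\lim_{t\to\infty}\frac{1}{t}\ln\sigma_m(t)$, which is exactly the expression in the statement. The remaining step is to argue that these $\sigma_m(t)$ are the \emph{same} as the principal semi-axes of the feature-space ellipsoid pictured in Figure \ref{fig:emb_duality}(a); this follows because the gradient map $\nabla$ acts on the spherical feature slice precisely through left multiplication by $U\,\mathrm{diag}(\sigma_m)\,V^{\top}$, so its image is an ellipsoid with the $\sigma_m$ as axis lengths. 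I would then take $\frac{1}{t}\ln(\cdot)$ term by term, send $t\to\infty$, and identify each entry with $\lambda_m$.

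To make the identification between "gradient flow on the embedding" and "flow of the underlying dynamical system" consistent with the Embedding Duality, I would cite Proposition \ref{theo:integral} and Proposition \ref{theo:linear}: the former says the embedding implements an integral transform with a time-invariant measure, so the spectral content of the system is faithfully carried into the hidden space; the latter says the embedding is a similarity transform, hence preserves eigenvalues and, by standard stability theory, the Lyapunov spectrum of the linearization. Lemma \ref{theo:liptz} then allows me to ignore the encoder: because the CNN/SSM/normalized architectures are $1$-Lipschitz, they contribute nothing to the asymptotic growth rate of perturbations, so all the relevant exponential separation is captured by the singular values of the embedding's gradient operator.

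The main obstacle, I expect, is the existence of the limit and the justification that the training-time SVD $(\sigma_1(t),\dots,\sigma_M(t))$ really behaves like a cocycle to which Oseledets applies, rather than a transient sequence of singular values of stochastic gradient updates. I would handle this by restricting to the asymptotic regime where $V_t$ tracks an attractor of the training dynamics (so the update map is effectively stationary) and by imposing the mild integrability condition $\mathbb{E}\,\ln^{+}\!\|\nabla_t\|<\infty$ needed for Oseledets; under these conditions the limit exists almost surely, the ordering $\lambda_1\ge\cdots\ge\lambda_M$ matches the ordering of the semi-axes, and the lemma follows. A secondary, but lighter, subtlety is treating the non-full-rank case: there I would either perturb infinitesimally and pass to the limit, or simply restrict the statement to the Oseledets-regular full-rank stratum that Proposition \ref{theo:linear} already assumes.
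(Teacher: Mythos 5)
There is a genuine gap, and it lies in which cocycle you feed to Oseledets. The lemma is a statement about the \emph{system attractors}: the $\sigma_m(t)$ are the singular values of the variational (tangent) flow $D\varphi_t(x)$ of the underlying dynamical system, i.e.\ the semi-axes of the ellipsoid obtained by evolving an infinitesimal sphere of initial conditions in phase space for system time $t$, and the limit $t\to\infty$ is over the system's time evolution. This is the standard Oseledets/Benettin characterization, and it is exactly what the paper relies on — its proof is a citation to the standard nonlinear-dynamics literature (Section 1.2 of Skokos), with the ellipsoidal feature space of the embedding offered only as an interpretive analogy. Your first paragraph instead models the \emph{training} iteration $V_{t+1}=V_t+\eta\nabla_t$ as the dynamical system and takes the $\sigma_m(t)$ to be singular values of accumulated gradient updates, then argues these coincide with the principal axes in the lemma. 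That identification is the crux, and it is not established: Propositions \ref{theo:integral} and \ref{theo:linear} only say the embedding is an integral transform and a similarity transformation of the (patch-averaged) Jacobian — they say nothing about the gradient sequence produced by a loss, an optimizer, a learning rate, and a data ordering forming the Jacobian cocycle of the data-generating flow. In general it does not: the asymptotic singular-value growth of SGD updates depends on the optimization landscape and hyper-parameters, not on the Lyapunov spectrum of the attractor, so the "remaining step" where you equate the two ellipsoids is the step that fails. The auxiliary conditions you add (training tracking an attractor, stationarity, $\mathbb{E}\ln^{+}\|\nabla_t\|<\infty$) would make Oseledets applicable to the training cocycle, but that would yield Lyapunov exponents \emph{of the training dynamics}, which is a different object from what the lemma asserts.

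The salvageable core of your argument is the middle of your second paragraph: for an initially spherical infinitesimal perturbation evolved by the tangent flow of the system itself, the image is an ellipsoid with semi-axes $\sigma_m(t)$, and the multiplicative ergodic theorem (under the usual integrability hypothesis on the tangent map, not on gradients) gives existence of $\lambda_m=\lim_{t\to\infty}\frac{1}{t}\ln\sigma_m(t)$ for almost every initial condition on the attractor. If you state the proof at that level — tangent flow, SVD of the fundamental matrix solution of the variational equation, Oseledets — you recover the paper's (cited) argument directly, and the appeals to Lemma \ref{theo:liptz} and to the gradient SVD of Figure \ref{fig:emb_duality}(a) become unnecessary for the lemma itself; they belong to the surrounding Embedding Duality discussion, not to its proof.
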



According to Lemma \ref{theo:lyps}, the attractors of the system, which represent underlying data patterns, are reflected in the lengths of the axes within the ellipsoidal feature space. In Figure \ref{fig:emb_duality}(a), scaling the embedding layer's feature space using model gradients can be interpreted as an adaptive estimation of the underlying dynamical structure, where the system attractor is captured through the logarithms of the eigenvalues of the Oseledec matrix \citep{oseledec1968multiplicative}. In contrast, as demonstrated in Figure \ref{fig:emb_duality}(b), our proposed physics-guided embeddings bypass this adaptive scaling process. Rather than relying on gradient-based adjustments, they reconstruct the system's dynamical trajectory using physical priors and numerical methods, obtaining the attractor representation directly through patch operations. This provides a more efficient and interpretable means to capture the system's dynamics.





\begin{figure}
    \centering
    \includegraphics[width=1\columnwidth]{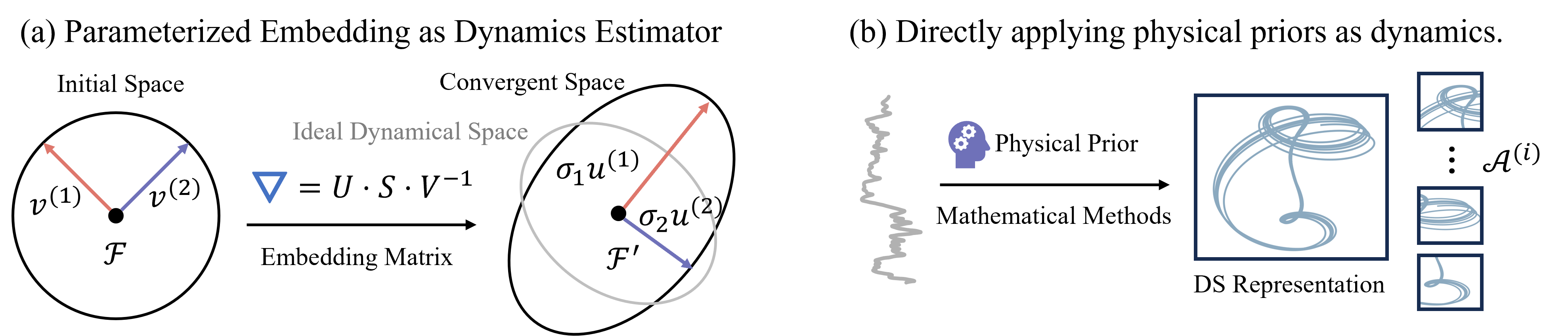}
    \caption{Illustration for (a) Parameterized embedding and (b) Physics-guided embedding technology.}
    \vspace{-1em}
    \label{fig:emb_duality}
\end{figure}

\paragraph{Dynamical System Characteristics}
According to the Embedding Theory, insufficient phase space dimensions cause dynamical structures to stack, obscuring their true shapes. Conversely, excessive dimensions expand the structure excessively, amplifying noise effects. The threshold typically equals twice the latent dynamics, and exceeding this threshold results in spurious structures. Based on this, we propose the following conjectures, which are empirically validated in Section \ref{sec:empirical exp}.
\begin{conjecture}[\textbf{Dim Scaling Law}]
\label{pro:scaling}
For parameterized embedding, as the hidden dimensions increase, the model loss will generally decrease initially and then increase, as shown in Figure \ref{fig:dim-scaling-law}.
\end{conjecture}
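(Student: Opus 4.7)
The plan is to support Conjecture \ref{pro:scaling} by combining the Embedding Duality framework already established (Propositions \ref{theo:integral}--\ref{theo:linear} and Lemma \ref{theo:lyps}) with a standard bias--variance style decomposition applied to the reconstructed phase space. First, I would use Proposition \ref{theo:linear} to identify the hidden dimension $M$ of a parameterized embedding with the ambient dimension in which a linear similarity transform attempts to reconstruct the underlying attractor of the true dynamical system $\mathcal{F}$. Let $d^{*}$ denote the box-counting (or correlation) dimension of that attractor. By Takens' and Whitney's theorems, invoked in the same spirit as in Section \ref{sec:physics-guided emb}, a generic embedding of dimension $M < 2d^{*}+1$ cannot be a homeomorphism onto its image: distinct orbits are forced to self-intersect, so any downstream predictor inherits an irreducible representation error $\mathcal{E}_{\text{rec}}(M)$ that is monotonically decreasing in $M$ and vanishes for $M \geq 2d^{*}+1$.

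Next, I would bound the opposing noise/estimation term. Lemma \ref{theo:lyps} identifies the axes of the ellipsoidal feature space with the Lyapunov spectrum $\{\lambda_{m}\}_{m=1}^{M}$ of the reconstructed flow. Spurious axes beyond the true dimension correspond to negligible or negative exponents, so observation noise of variance $\sigma^{2}$ gets amplified along these directions by a factor that grows like $\exp(|\lambda_{m}|\,t)$ in the short-time Jacobian approximation used in Proposition \ref{theo:linear}. Summing over $m$ gives an estimation term $\mathcal{E}_{\text{est}}(M)$ that is (at least) linear in the number of spurious coordinates, hence monotonically increasing once $M$ exceeds the Takens threshold. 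The total expected loss can then be written as
\begin{equation}
\mathcal{L}(M) \;=\; \mathcal{E}_{\text{rec}}(M) + \mathcal{E}_{\text{est}}(M) + \text{const},
\end{equation}
and the two terms together produce the predicted U-shape, with the minimizer located near $M^{\star} \approx 2d^{*}+1$.

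The main obstacle will be making the estimation term $\mathcal{E}_{\text{est}}(M)$ rigorous rather than heuristic: Lemma \ref{theo:lyps} only tells us about asymptotic growth rates of principal axes, whereas what I need is a finite-sample statement about how noise variance propagates through the shared linear projection of Proposition \ref{theo:integral} when trained by SGD. I would handle this by restricting to the regime where Lemma \ref{theo:liptz} gives $K$-Lipschitz encoders with $K \leq 1$, so that the gradient-driven SVD scaling of the feature space cannot contract noise below its input magnitude; under that restriction the variance contribution of spurious axes is lower-bounded by a constant times $(M-2d^{*}-1)_{+}$. A secondary difficulty is that $d^{*}$ is dataset-dependent and only implicitly defined, so the conjecture can only be stated qualitatively (existence of a minimizer) rather than with a closed-form optimal $M$; I would therefore complement the analytical argument with the empirical sweep in Section \ref{sec:empirical exp} and Figure \ref{fig:dim-scaling-law}, which exhibits the predicted decrease-then-increase behavior across architectures and datasets and thereby validates the scaling law.
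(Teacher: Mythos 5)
You should note first that the paper does not prove this statement at all: it is deliberately labelled a \emph{conjecture}, and the only support the paper offers is (i) the informal embedding-theory reasoning in the paragraph immediately preceding it (too few phase-space dimensions make the dynamical structure stack and obscure itself; too many dimensions over-expand it, amplify noise, and create spurious structure, with a threshold of roughly twice the latent dimension) and (ii) the empirical sweep in Section~\ref{sec:empirical exp} shown in Figure~\ref{fig:dim-scaling-law}. Your bias--variance decomposition $\mathcal{L}(M)=\mathcal{E}_{\mathrm{rec}}(M)+\mathcal{E}_{\mathrm{est}}(M)+\mathrm{const}$ is in spirit the same two-sided intuition, and your final fallback to the experimental curve is exactly the paper's actual evidence, so the overall route is not genuinely different --- it is the paper's heuristic with extra notation.

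Where your attempted formalization goes beyond the paper, it has concrete gaps that would prevent it from becoming a proof. First, Takens/Whitney give $2d^{*}+1$ as a \emph{sufficient} embedding dimension, not a necessary one, so the claim that any generic embedding with $M<2d^{*}+1$ ``cannot be a homeomorphism onto its image'' is false (the Lorenz attractor, with fractal dimension near $2$, embeds in $\mathbbm{R}^{3}$); hence $\mathcal{E}_{\mathrm{rec}}(M)$ need not be bounded away from zero below your threshold, and the decreasing branch is not established. Second, your noise term misreads Lemma~\ref{theo:lyps}: spurious directions carry near-zero or negative Lyapunov exponents, i.e.\ neutral or contracting behaviour, so an amplification factor $\exp(|\lambda_m|t)$ is not what the lemma provides; the standard argument against excess dimension is that the attractor has negligible extent along the extra axes, which become noise-dominated --- a signal-to-noise statement, not exponential growth --- and the $1$-Lipschitz bounds of Lemma~\ref{theo:liptz} give an \emph{upper} bound on expansion, not the lower bound on propagated variance your $\mathcal{E}_{\mathrm{est}}(M)\gtrsim (M-2d^{*}-1)_{+}$ step requires under SGD training. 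Third, your predicted minimizer $M^{\star}\approx 2d^{*}+1$ conflicts with what the paper actually reports: the empirical optimum scales like one to two times the latent dimension multiplied by the patch length (e.g.\ $64$--$128$ for ETTm2 with prior dimension $4$ and patch length $16$), because the hidden width of a patch embedding is not literally the reconstruction dimension identified in Proposition~\ref{theo:linear}. So your proposal is acceptable as a restatement of the paper's heuristic plus its empirical validation, but the quantitative steps you add are not sound as written and should either be repaired or presented explicitly as heuristics.
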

\begin{conjecture}[\textbf{Spurious Dynamics}]
    Bidirectional modeling, whether through a transformer or SSM backbone, helps eliminate spurious dynamical structures that are sensitive to the positional inductive bias, consequently enhancing performance as empirically demonstrated in Table \ref{tab:non-casual}.
    \label{prop:bi}
\end{conjecture}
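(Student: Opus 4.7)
The plan is to formalize ``sensitivity to positional inductive bias'' as failure of a discrete time-reversal equivariance. Let $\rho$ denote the permutation $t \mapsto T+1-t$ on a length-$T$ sequence, and call a representation map $\Phi$ position-equivariant when $\rho\circ\Phi=\Phi\circ\rho$. I would declare the spurious component of the reconstructed attractor to be everything outside the $\rho$-invariant subspace, on the grounds that Takens' embedding theorem only guarantees topological equivalence of the reconstructed trajectory: the genuine dynamical invariants (recurrence structure, Lyapunov spectrum, fractal dimension) all live in the $\rho$-invariant part, so the antisymmetric component $\mathcal{F}_a$ of the embedded trajectory carries no authentic dynamical information. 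Under this definition, any representational energy the backbone allocates to $\mathcal{F}_a$ is variance at best and misleading bias at worst.

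The second step is to check each backbone against this equivariance. Causal attention is manifestly non-equivariant: conjugating the lower-triangular mask by $\rho$ produces an upper-triangular mask, so the two sides of $\rho\circ\Phi=\Phi\circ\rho$ disagree by the entire off-diagonal attention pattern. Unidirectional SSM scans $h_t=\bar{A}h_{t-1}+\bar{B}x_t$ fail similarly because $\rho$ turns the recurrence around. In the bidirectional variants, the forward and reverse branches yield $z_t^{\rightarrow}$ and $z_t^{\leftarrow}$ whose aggregation commutes with $\rho$ up to the branch swap $(a,b)\mapsto(b,a)$, which is absorbed by the subsequent linear mixing. Using the 1-Lipschitz bounds from Lemma \ref{theo:liptz}, I would then control $\|z_t^{\rightarrow}-z_t^{\leftarrow}\|$ by a constant multiple of $\|\mathcal{F}_a\|$, showing that bidirectional aggregation strictly suppresses the spurious antisymmetric component while leaving the $\rho$-invariant component intact. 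Combined with Proposition \ref{theo:linear}, which identifies the embedding as a similarity transformation preserving eigenstructure, a standard bias--variance decomposition then converts this representational suppression into a loss reduction, consistent with Table \ref{tab:non-casual}.

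The hard part will be the transformer case. Softmax attention is nonlinear and its interaction with $\rho$ is not a clean conjugation, so the equivariance bound degrades in a way that depends on attention entropy; I would expect to handle this either by linearising attention (where recent kernel-attention equivalences make the conjugation exact) or by a perturbation argument bounding the $\rho$-commutator of softmax in terms of the temperature and the Lipschitz constant of the value projection, invoking Lemma \ref{theo:liptz}(3) for the $L_2$-attention variant where the bound is cleanest. The SSM side, by contrast, reduces to a straightforward algebraic identity on $(\bar{A},\bar{B})$ once the state is initialised symmetrically, so most of the technical effort concentrates on making the attention argument uniform in sequence length and independent of the particular dataset, so that the empirical improvement in Table \ref{tab:non-casual} follows from the abstract equivariance argument rather than from dataset-specific fitting.
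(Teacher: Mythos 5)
First, a mismatch of kind: the statement you are proving is stated in the paper as a \emph{conjecture}, and the paper offers no proof of it at all --- its only support is empirical, namely the unidirectional-versus-bidirectional ablation in Table \ref{tab:non-casual} (with the accompanying observations that the gap is larger for the attention backbone than for the SSM backbone and shrinks when physics-guided embeddings are used). So there is no ``paper proof'' to match; any attempt to derive the performance claim as a theorem, ``uniform in sequence length and independent of the particular dataset,'' is already promising more than the claim itself supports, since the enhancement asserted in Conjecture \ref{prop:bi} is an empirical statement about specific benchmarks, and the paper's own numbers show it is not uniform (the effect nearly vanishes for some Time-SSM variants).

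Second, the central formalization in your plan has a substantive flaw. You declare the component of the reconstructed trajectory that is antisymmetric under time reversal $\rho$ to be ``spurious,'' on the grounds that the genuine dynamical invariants live in the $\rho$-invariant part. That premise is false for exactly the systems this paper cares about: dissipative systems with attractors are not time-reversal invariant, phase-space volumes contract only forward in time, and the Lyapunov spectrum (which Lemma \ref{theo:lyps} ties to the ellipsoidal feature space) changes sign under $\rho$. So the arrow of time carries authentic dynamical information, and projecting it out is not a removal of ``spurious dynamics'' but a destruction of real ones; the paper's notion of spurious structure instead refers to artifacts of insufficient or excessive embedding dimension and positional bias, which your $\rho$-decomposition does not capture. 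The downstream steps inherit this problem: the 1-Lipschitz statements in Lemma \ref{theo:liptz} give non-expansiveness, not suppression of a designated component, so the bound $\|z_t^{\rightarrow}-z_t^{\leftarrow}\|\lesssim\|\mathcal{F}_a\|$ would not show that bidirectional aggregation ``strictly suppresses'' anything; and the final bias--variance step converting representational symmetry into a test-loss reduction on arbitrary datasets is asserted, not argued. Your own flag on the softmax-attention case is also apt: the paper notes (Lemma \ref{theo:liptz}(3)) that standard dot-product attention is not Lipschitz, so the perturbation argument you hope for has no uniform constant to work with. A defensible version of your contribution would be much more modest: a statement that bidirectional aggregation removes the dependence of the representation on scan/mask orientation (a genuine equivariance fact), offered as a heuristic explanation of Table \ref{tab:non-casual}, not a proof of the performance claim.
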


\section{Experiments}
\label{sec:exp}
In this section, we focus on addressing the following research questions:
\textbf{\textit{RQ1}}: Does the Embedding Duality theory empirically exist?
\textbf{\textit{RQ2}}: How do physics-guided embeddings perform in expert tasks, how do they achieve this, and do they have the potential to become a new embedding paradigm?
\textbf{\textit{RQ3}}: How do physics-guided embeddings perform in foundation tasks, and do they have the potential to be transferred to large-scale time series foundational models?
All Experiments are based on the Time Series Library \citep{wang2024tssurvey}, details regarding model backbones, experimental settings, full results, technical backgrounds, and further inspirations are reported in Appendix \ref{apx:exp}.

\subsection{Empirical Evidence for Embedding Duality}
\label{sec:empirical exp}
\para{Visualization.}

\begin{wrapfigure}{r}{0.382\textwidth}
  \begin{center}
  \vspace{-3em}
    \includegraphics[width=0.382\textwidth]{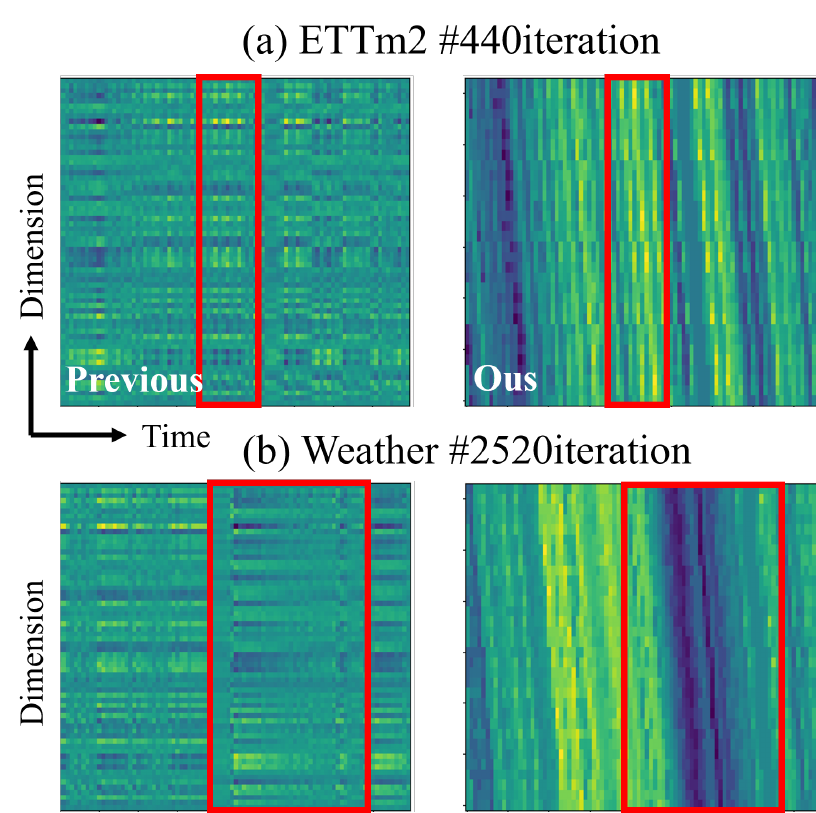}
  \end{center}
  \vspace{-1em}
  \caption{Embedding Visualizations.}
  \vspace{-2em}
  \label{fig:visual_emb}
\end{wrapfigure}
As illustrated in Figure \ref{fig:visual_emb}, we present the visual representations of previous parameterized embeddings (left) and our proposed physics-guided embeddings (right: TD-Emb) when utilizing PatchTST as the model backbone. It is evident that the patterns exhibited by these two types of embeddings bear a remarkable resemblance. For instance, in Figure \ref{fig:visual_emb}(a), the periodic patterns captured by the parameterized embeddings align with the consistent stripes present in dynamical structures, whereas in Figure \ref{fig:visual_emb}(b), the smooth regions depicted in the parameterized embeddings correspond to the band-like dark regions in the dynamical structure. Notably, \textbf{\textit{the data patterns captured by the physics-guided embedding have been significantly enhanced, highlighting their superior performance in various downstream tasks.}} 


\para{Dim Scaling Law.}
In Figure \ref{fig:dim-scaling-law}, we present the correlation between average model performance and hidden layer dimensions based on PatchTST backbone across three datasets (ETTm2, ETTh2, Weather). Consistent with the proposition \ref{pro:scaling}, we observe a decrease followed by an increase in the MSE loss. Moreover, the optimal performance occurs within a specific range, typically 1-2 times the underlying dynamical dimension associated with the dataset. For example, considering a physical prior dimension of 4 for the ETTm2 dataset, where the patch length is 16, yielding a total dimension of 64, the optimal interval lies in 64-128. This remarkable discovery shows that \textbf{\textit{parameterized embeddings have effectively encapsulated the intrinsic dynamical characteristics of the data.}}

\begin{figure}[h!]
    \centering
    \vspace{-0.75em}
    \includegraphics[width=1\columnwidth]{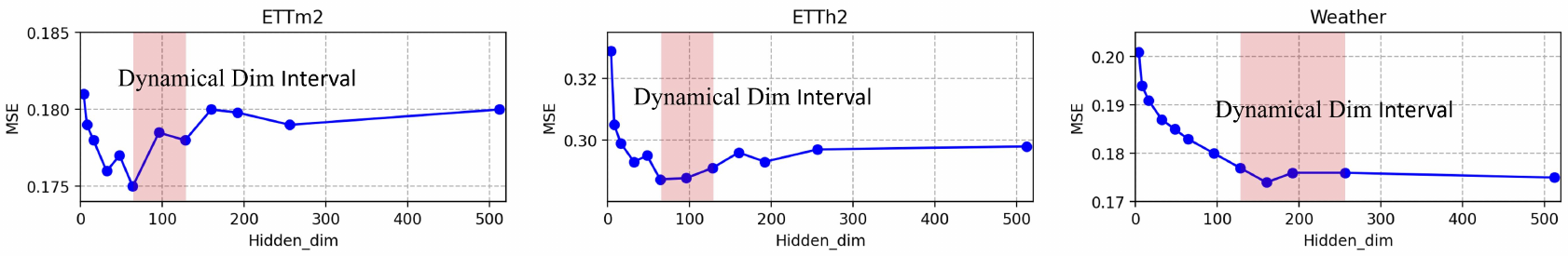}
    \vspace{-2em}
    \caption{Dim scaling laws: The shaded region represents the ideal dimension interval of the system dynamics.}
    \vspace{-1em}
    \label{fig:dim-scaling-law}
\end{figure}

\para{Causal-directional Modeling.}
In Table \ref{tab:non-casual}, we explore the effects of unidirectional and bidirectional modeling, termed causal-directional modeling, on the performance of the Time-SSM and PatchTST models. Specifically, we adapt the attention mechanism with a causal mask and combine the SSM outcomes bidirectionally using a linear layer, respectively. Our results reveal the following insights:
(a) Bidirectional modeling typically outperforms unidirectional modeling by a consistent margin, as supported by Proposition \ref{prop:bi}.
(b) The performance gap is more pronounced in the PatchTST model, possibly attributed to SSM's adeptness in capturing dynamical structures, hence mitigating the impact of incidental dynamics.
(c) Model variations incorporating physics-guided embeddings contribute to mitigating performance differentials between unidirectional and bidirectional modeling approaches.

\begin{table}[h!]
\vspace{-0.5em}
  \caption{Average performance comparison for causal-directional modeling validation. The improved results are highlighted in \textcolor{lightred}{\rule{1em}{0.7em}}. M1: Time-SSM; M2: PatchTST; -U: Unidirectional; -B: Bidirectional.}
  \vspace{-0.5em}
  \renewcommand{\arraystretch}{1} 
  \centering
  \hspace*{-1ex}
  \resizebox{1.02\columnwidth}{!}{
  \begin{threeparttable}
  \begin{small}
  \renewcommand{\multirowsetup}{\centering}
  \setlength{\tabcolsep}{0.7pt}
  \begin{tabular}{l| cc cc | cc cc | cc cc | cc cc | cc cc | cc cc}
    \toprule
    \multicolumn{1}{c}{\multirow{2}{*}{}} & 
    \multicolumn{2}{c}{\rotatebox{0}{\scalebox{0.8}{{M1-U}}}} &
    \multicolumn{2}{c}{\rotatebox{0}{\scalebox{0.8}{\textbf{M1-B}}}} &
    \multicolumn{2}{c}{\rotatebox{0}{\scalebox{0.8}{{M1-TD-U}}}}&
    \multicolumn{2}{c}{\rotatebox{0}{\scalebox{0.8}{\textbf{M1-TD-B}}}}&
    \multicolumn{2}{c}{\rotatebox{0}{\scalebox{0.8}{{M1-HD-U}}}} &
    \multicolumn{2}{c}{\rotatebox{0}{\scalebox{0.8}{\textbf{M1-HD-B}}}} &
    \multicolumn{2}{c}{\rotatebox{0}{\scalebox{0.8}{{M2-U}}}} &
    \multicolumn{2}{c}{\rotatebox{0}{\scalebox{0.8}{\textbf{M2-B}}}} &
    \multicolumn{2}{c}{\rotatebox{0}{\scalebox{0.8}{{M2-TD-U}}}} &
    \multicolumn{2}{c}{\rotatebox{0}{\scalebox{0.8}{\textbf{M2-TD-B}}}} &
    \multicolumn{2}{c}{\rotatebox{0}{\scalebox{0.8}{{M2-HD-U}}}}&
    \multicolumn{2}{c}{\rotatebox{0}{\scalebox{0.8}{\textbf{M2-HD-B}}}} \\
    \cmidrule(lr){2-3} \cmidrule(lr){4-5}\cmidrule(lr){6-7}\cmidrule(lr){8-9} \cmidrule(lr){10-11} \cmidrule(lr){12-13} \cmidrule(lr){14-15} \cmidrule(lr){16-17} \cmidrule(lr){18-19} \cmidrule(lr){20-21} \cmidrule(lr){22-23} \cmidrule(lr){24-25} 
    \multicolumn{1}{c}{}  & \multicolumn{1}{c}{\scalebox{0.78}{MSE}} & \multicolumn{1}{c}{\scalebox{0.78}{MAE}} &  \multicolumn{1}{c}{\scalebox{0.78}{MSE}} & \multicolumn{1}{c}{\scalebox{0.78}{MAE}} & \multicolumn{1}{c}{\scalebox{0.78}{MSE}} & \multicolumn{1}{c}{\scalebox{0.78}{MAE}} &  \multicolumn{1}{c}{\scalebox{0.78}{MSE}} & \multicolumn{1}{c}{\scalebox{0.78}{MAE}} & \multicolumn{1}{c}{\scalebox{0.78}{MSE}} & \multicolumn{1}{c}{\scalebox{0.78}{MAE}}  & \multicolumn{1}{c}{\scalebox{0.78}{MSE}} & \multicolumn{1}{c}{\scalebox{0.78}{MAE}}  & \multicolumn{1}{c}{\scalebox{0.78}{MSE}} & \multicolumn{1}{c}{\scalebox{0.78}{MAE}}  & \multicolumn{1}{c}{\scalebox{0.78}{MSE}} & \multicolumn{1}{c}{\scalebox{0.78}{MAE}}  & \multicolumn{1}{c}{\scalebox{0.78}{MSE}} & \multicolumn{1}{c}{\scalebox{0.78}{MAE}}  & \scalebox{0.78}{MSE} & \multicolumn{1}{c}{\scalebox{0.78}{MAE}} & \multicolumn{1}{c}{\scalebox{0.78}{MSE}} & \multicolumn{1}{c}{\scalebox{0.78}{MAE}}  & \multicolumn{1}{c}{\scalebox{0.78}{MSE}} & \multicolumn{1}{c}{\scalebox{0.78}{MAE}}\\
    \midrule

    \scalebox{0.95}{ETTh1} & \scalebox{0.78}{ {0.439}} &\scalebox{0.78}{ {0.438}} &\scalebox{0.78}{\boldres{0.436}} &\scalebox{0.78}{\boldres{0.437}} &  \scalebox{0.78}{0.436} &\scalebox{0.78}{0.436} &\scalebox{0.78}{ \boldres{0.434}} &\scalebox{0.78}{\boldres {0.433}} & \scalebox{0.78}{\boldres{0.432}} & \scalebox{0.78}{\boldres{0.435}} & \scalebox{0.78}{0.435} & \scalebox{0.78}{0.437} &  {\scalebox{0.78}{0.456}} &  {\scalebox{0.78}{0.453}} & \scalebox{0.78}{\boldres{0.450}} & \scalebox{0.78}{\boldres{0.449}} & {\scalebox{0.78}{{0.446}}} & {\scalebox{0.78}{0.448}} & \scalebox{0.78}{\boldres{0.438}} & \scalebox{0.78}{\boldres{0.440}} &\scalebox{0.78}{{0.447}} &\scalebox{0.78}{0.450} &{\scalebox{0.78}{\boldres{0.437}}} &{\scalebox{0.78}{\boldres{0.437}}} \\
    
    \scalebox{0.95}{ETTh2} & \scalebox{0.78}{ {0.379}} & \scalebox{0.78}{ {0.401}} & \scalebox{0.78}{{\boldres {0.371}}} & \scalebox{0.78}{{\boldres {0.394}}} & \scalebox{0.78}{0.376} & \scalebox{0.78}{0.399} & \scalebox{0.78}{ {\boldres {0.371}}} & \scalebox{0.78}{ {\boldres {0.393}}} & \scalebox{0.78}{0.374} & \scalebox{0.78}{0.398} & \scalebox{0.78}{{\boldres {0.369}}} & \scalebox{0.78}{{\boldres {0.391}}} & \scalebox{0.78}{0.389} & \scalebox{0.78}{0.416} & \scalebox{0.78}{{\boldres {0.382}}} & \scalebox{0.78}{{\boldres {0.411}}} & \scalebox{0.78}{0.383} & \scalebox{0.78}{0.407} & \scalebox{0.78}{{\boldres {0.376}}} & \scalebox{0.78}{{\boldres {0.401}}} & \scalebox{0.78}{0.381} & \scalebox{0.78}{0.404} & {\scalebox{0.78}{{\boldres {0.374}}}} & {\scalebox{0.78}{{\boldres {0.398}}}} \\
    
    \scalebox{0.95}{ETTm1} & \scalebox{0.78}{ {0.389}} & \scalebox{0.78}{\boldres{0.403}} & \scalebox{0.78}{\boldres{0.388}} & \scalebox{0.78}{0.404} & \scalebox{0.78}{\boldres{0.387}} & \scalebox{0.78}{\boldres{0.402}} & \scalebox{0.78}{ {0.388}} & \scalebox{0.78}{ {0.404}} & \scalebox{0.78}{\boldres{0.384}} & \scalebox{0.78}{\boldres{0.400}} & \scalebox{0.78}{0.386} & \scalebox{0.78}{0.402} & {\scalebox{0.78}{0.392}} & {\scalebox{0.78}{0.405}} & \scalebox{0.78}{\boldres{0.388}} & \scalebox{0.78}{\boldres{0.402}} & \scalebox{0.78}{0.387} & \scalebox{0.78}{0.399} & {\scalebox{0.78}{\boldres{0.385}}} & {\scalebox{0.78}{\boldres{0.396}}} & {\scalebox{0.78}{0.379}} & {\scalebox{0.78}{0.392}} & \scalebox{0.78}{\boldres{0.378}} & \scalebox{0.78}{\boldres{0.393}} \\

    \scalebox{0.95}{ETTm2} & \scalebox{0.78}{ {0.284}} & \scalebox{0.78}{ {0.330}} & \scalebox{0.78}{\boldres{0.281}} & \scalebox{0.78}{\boldres{0.297}} & \scalebox{0.78}{0.285} & \scalebox{0.78}{0.330} & \scalebox{0.78}{\boldres{0.282}} & \scalebox{0.78}{\boldres{0.329}} & \scalebox{0.78}{\boldres{0.282}} & \scalebox{0.78}{\boldres{0.328}} & \scalebox{0.78}{0.285} & \scalebox{0.78}{0.331} & {\scalebox{0.78}{\boldres{0.287}}} & {\scalebox{0.78}{\boldres{0.331}}} & \scalebox{0.78}{0.291} & \scalebox{0.78}{0.334} & \scalebox{0.78}{\boldres{0.279}} & \scalebox{0.78}{\boldres{0.325}} & {\scalebox{0.78}{0.281}} & {\scalebox{0.78}{0.328}} & {\scalebox{0.78}{0.285}} & {\scalebox{0.78}{0.331}} & \scalebox{0.78}{\boldres{0.283}} & \scalebox{0.78}{\boldres{0.330}} \\
    
    \bottomrule
  \end{tabular}
  \vspace{-1em}
    \end{small}
  \end{threeparttable}}
  \label{tab:non-casual}
\end{table}

\subsection{Performance for Expert Models.} 
\para{Forecasting.}
We maintain the model hyper-parameters (detailed in Appendix \ref{apx:setting}) to conduct a fair comparison for previous parameterized and our proposed physics-guided embeddings across diverse model architectures. As depicted in Table \ref{tab:fore}, the following observations can be made:
(a) Generally, the incorporation of physical priors significantly boosts forecasting performance, with the most substantial gain being 11\% on the Exchange dataset.
(b) The HD-Emb typically delivers top performance and, due to its efficiency, is expected to become the standard embedding technology for expert models.
(c) The PatchTST model shows the most significant performance improvement among the three architectures, followed by Time-SSM and ModernTCN. This could be attributed to Theorem \ref{theo:liptz}, suggesting that the standard dot-product attention lacks Lipschitz continuity and struggles to adaptively capture the underlying dynamics, while the physical priors effectively resolve this issue. 
(d) \textbf{\textit{Recently, community efforts have primarily focused on developing more advanced encoder architectures; however, the improvements achieved are minimal \citep{qiu2024tfb}. In contrast, our proposed plug-and-play module demonstrates a significant enhancement in performance.}}


\begin{table}[h!]
\vspace{-0.625em}
  \caption{Average Performance for long-term forecasting task with input length is 96 and CI modeling strategy. The first and second results are highlighted in \textcolor{lightred}{\rule{1em}{0.7em}} and \textcolor{lightorange}{\rule{1em}{0.7em}}. Full results are reported in Table \ref{tab:fore-all}.}
  \vspace{-0.625em}
  \renewcommand{\arraystretch}{0.8} 
  \centering
  \hspace*{-1ex}
  \resizebox{1.02\columnwidth}{!}{
  \begin{threeparttable}
  \begin{small}
  \renewcommand{\multirowsetup}{\centering}
  \setlength{\tabcolsep}{4pt}
  \begin{tabular}{l| cc cc  cc|  cc  cc cc| cc cc cc}
    \toprule
    \multicolumn{1}{c}{\multirow{2}{*}{}} & 
    \multicolumn{2}{c}{\rotatebox{0}{\scalebox{0.8}{\textbf{Time-SSM}}}} &
    \multicolumn{2}{c}{\rotatebox{0}{\scalebox{0.8}{{+TD}}}} &
    \multicolumn{2}{c}{\rotatebox{0}{\scalebox{0.8}{{+HD}}}}&
    \multicolumn{2}{c}{\rotatebox{0}{\scalebox{0.8}{\textbf{PatchTST}}}}&
    \multicolumn{2}{c}{\rotatebox{0}{\scalebox{0.8}{{+TD}}}} &
    \multicolumn{2}{c}{\rotatebox{0}{\scalebox{0.8}{{+HD}}}} &
    \multicolumn{2}{c}{\rotatebox{0}{\scalebox{0.8}{\textbf{ModernTCN}}}} &
    \multicolumn{2}{c}{\rotatebox{0}{\scalebox{0.8}{+TD}}}  &
    \multicolumn{2}{c}{\rotatebox{0}{\scalebox{0.8}{+HD}}} \\
    \cmidrule(lr){2-3} \cmidrule(lr){4-5}\cmidrule(lr){6-7}\cmidrule(lr){8-9} \cmidrule(lr){10-11} \cmidrule(lr){12-13} \cmidrule(lr){14-15} \cmidrule(lr){16-17} \cmidrule(lr){18-19} 
    \multicolumn{1}{c}{}  & \multicolumn{1}{c}{\scalebox{0.78}{MSE}} & \multicolumn{1}{c}{\scalebox{0.78}{MAE}} &  \multicolumn{1}{c}{\scalebox{0.78}{MSE}} & \multicolumn{1}{c}{\scalebox{0.78}{MAE}} & \multicolumn{1}{c}{\scalebox{0.78}{MSE}} & \multicolumn{1}{c}{\scalebox{0.78}{MAE}} &  \multicolumn{1}{c}{\scalebox{0.78}{MSE}} & \multicolumn{1}{c}{\scalebox{0.78}{MAE}} & \multicolumn{1}{c}{\scalebox{0.78}{MSE}} & \multicolumn{1}{c}{\scalebox{0.78}{MAE}}  & \multicolumn{1}{c}{\scalebox{0.78}{MSE}} & \multicolumn{1}{c}{\scalebox{0.78}{MAE}}  & \multicolumn{1}{c}{\scalebox{0.78}{MSE}} & \multicolumn{1}{c}{\scalebox{0.78}{MAE}}  & \multicolumn{1}{c}{\scalebox{0.78}{MSE}} & \multicolumn{1}{c}{\scalebox{0.78}{MAE}}  & \multicolumn{1}{c}{\scalebox{0.78}{MSE}} & \multicolumn{1}{c}{\scalebox{0.78}{MAE}} \\
    \midrule
    
\scalebox{0.95}{ETTh1} & \scalebox{0.78}{ {0.439}} &\scalebox{0.78}{ {0.438}} &\scalebox{0.78}{\secondres{0.436}} &\scalebox{0.78}{\secondres{0.436}} &  \scalebox{0.78}{\boldres{0.432}} &\scalebox{0.78}{\boldres{0.435}} &\scalebox{0.78}{ {0.450}} &\scalebox{0.78}{ {0.449}} & \scalebox{0.78}{\secondres{0.438}} & \scalebox{0.78}{\secondres{0.440}} & \scalebox{0.78}{\boldres{0.437}} & \scalebox{0.78}{\boldres{0.437}} &  {\scalebox{0.78}{0.445}} &  {\scalebox{0.78}{0.432}} & \scalebox{0.78}{\secondres{0.441}} & \scalebox{0.78}{\boldres{0.423}} & \scalebox{0.78}{\boldres{0.440}} & \scalebox{0.78}{\secondres{0.429}}\\

\scalebox{0.95}{ETTh2} & \scalebox{0.78}{ {0.379}} &\scalebox{0.78}{ {0.401}} &\scalebox{0.78}{\secondres{0.375}} &\scalebox{0.78}{\secondres{0.399}} &  \scalebox{0.78}{\boldres{0.374}} &\scalebox{0.78}{\boldres{0.398}} &\scalebox{0.78}{ {0.382}} &\scalebox{0.78}{ {0.411}} & \scalebox{0.78}{\secondres{0.376}} & \scalebox{0.78}{\secondres{0.401}} & \scalebox{0.78}{\boldres{0.374}} & \scalebox{0.78}{\boldres{0.398}} &  {\scalebox{0.78}{0.382}} &  {\scalebox{0.78}{0.404}} & \scalebox{0.78}{\secondres{0.378}} & \scalebox{0.78}{\boldres{0.400}} & \scalebox{0.78}{\boldres{0.376}} & \scalebox{0.78}{\boldres{0.400}} \\

\scalebox{0.95}{ETTm1} & \scalebox{0.78}{ {0.389}} &\scalebox{0.78}{ {0.403}} &\scalebox{0.78}{\secondres{0.387}} &\scalebox{0.78}{\secondres{0.402}} &  \scalebox{0.78}{\boldres{0.384}} &\scalebox{0.78}{\boldres{0.400}} &\scalebox{0.78}{ {0.388}} &\scalebox{0.78}{ {0.402}} & \scalebox{0.78}{\secondres{0.385}} & \scalebox{0.78}{\secondres{0.396}} & \scalebox{0.78}{\boldres{0.378}} & \scalebox{0.78}{\boldres{0.393}} &  {\scalebox{0.78}{\secondres{0.386}}} & \scalebox{0.78}{\secondres{0.401}} & \scalebox{0.78}{0.387} & \scalebox{0.78}{0.402} & \scalebox{0.78}{\boldres{0.382}} & \scalebox{0.78}{\boldres{0.398}} \\

\scalebox{0.95}{ETTm2} & \scalebox{0.78}{ \secondres{0.284}} &\scalebox{0.78}{ {0.330}} &\scalebox{0.78}{{0.285}} &\scalebox{0.78}{\secondres{0.330}} &  \scalebox{0.78}{\boldres{0.282}} &\scalebox{0.78}{\boldres{0.328}} &\scalebox{0.78}{ {0.291}} &\scalebox{0.78}{ {0.334}} & \scalebox{0.78}{\boldres{0.281}} & \scalebox{0.78}{\boldres{0.328}} & \scalebox{0.78}{\secondres{0.283}} & \scalebox{0.78}{\secondres{0.330}} &  {\scalebox{0.78}{\boldres{0.285}}} &  {\scalebox{0.78}{\secondres{0.327}}} & \scalebox{0.78}{{0.290}} & \scalebox{0.78}{{0.330}} & \scalebox{0.78}{\secondres{0.286}} & \scalebox{0.78}{\boldres{0.326}} \\

\scalebox{0.95}{ECL} & \scalebox{0.78}{ {0.203}} &\scalebox{0.78}{ {0.289}} &\scalebox{0.78}{\secondres{0.203}} &\scalebox{0.78}{\boldres{0.288}} &  \scalebox{0.78}{\boldres{0.201}} &\scalebox{0.78}{\secondres{0.289}} &\scalebox{0.78}{ \secondres{0.204}} &\scalebox{0.78}{\boldres{0.294}} & \scalebox{0.78}{{0.214}} & \scalebox{0.78}{{0.308}} & \scalebox{0.78}{\boldres{0.204}} & \scalebox{0.78}{\secondres{0.299}} &  {\scalebox{0.78}{\secondres{0.215}}} &  {\scalebox{0.78}{\secondres{0.293}}} & \scalebox{0.78}{{0.217}} & \scalebox{0.78}{{0.297}} & \scalebox{0.78}{\boldres{0.213}} & \scalebox{0.78}{\boldres{0.291}} \\

\scalebox{0.95}{Exchange} & \scalebox{0.78}{ {0.367}} &\scalebox{0.78}{ {0.407}} &\scalebox{0.78}{\secondres{0.357}} &\scalebox{0.78}{\secondres{0.402}} &  \scalebox{0.78}{\boldres{0.355}} &\scalebox{0.78}{\boldres{0.400}} &\scalebox{0.78}{ {0.393}} &\scalebox{0.78}{ {0.419}} & \scalebox{0.78}{\boldres{0.358}} & \scalebox{0.78}{\boldres{0.403}} & \scalebox{0.78}{\secondres{0.361}} & \scalebox{0.78}{\secondres{0.404}} &  {\scalebox{0.78}{{0.393}}} &  {\scalebox{0.78}{{0.425}}} & \scalebox{0.78}{\boldres{0.373}} & \scalebox{0.78}{\boldres{0.412}} & \scalebox{0.78}{\secondres{0.377}} & \scalebox{0.78}{\secondres{0.415}} \\

\scalebox{0.95}{Weather} & \scalebox{0.78}{ {0.254}} &\scalebox{0.78}{ {0.279}} &\scalebox{0.78}{\boldres{0.251}} &\scalebox{0.78}{\boldres{0.276}} &  \scalebox{0.78}{\secondres{0.254}} &\scalebox{0.78}{\secondres{0.278}} &\scalebox{0.78}{ {\secondres{0.258}}} &\scalebox{0.78}{ \secondres{0.280}} & \scalebox{0.78}{\boldres{0.253}} & \scalebox{0.78}{\boldres{0.278}} & \scalebox{0.78}{\secondres{0.258}} & \scalebox{0.78}{0.282} &  {\scalebox{0.78}{0.243}} &  {\scalebox{0.78}{\secondres{0.273}}} & \scalebox{0.78}{\secondres{0.242}} & \scalebox{0.78}{0.275} & \scalebox{0.78}{\boldres{0.238}} & \scalebox{0.78}{\boldres{0.268}} \\
    \bottomrule
  \end{tabular}
    \end{small}
  \end{threeparttable}}
  \label{tab:fore}
\vspace{-1em}
\end{table}

\para{Forecasting \textit{w.r.t.} Efficiency.}
\begin{wrapfigure}{r}{0.382\textwidth}
  \begin{center}
  \vspace{-0.8em}
    \includegraphics[width=0.382\textwidth]{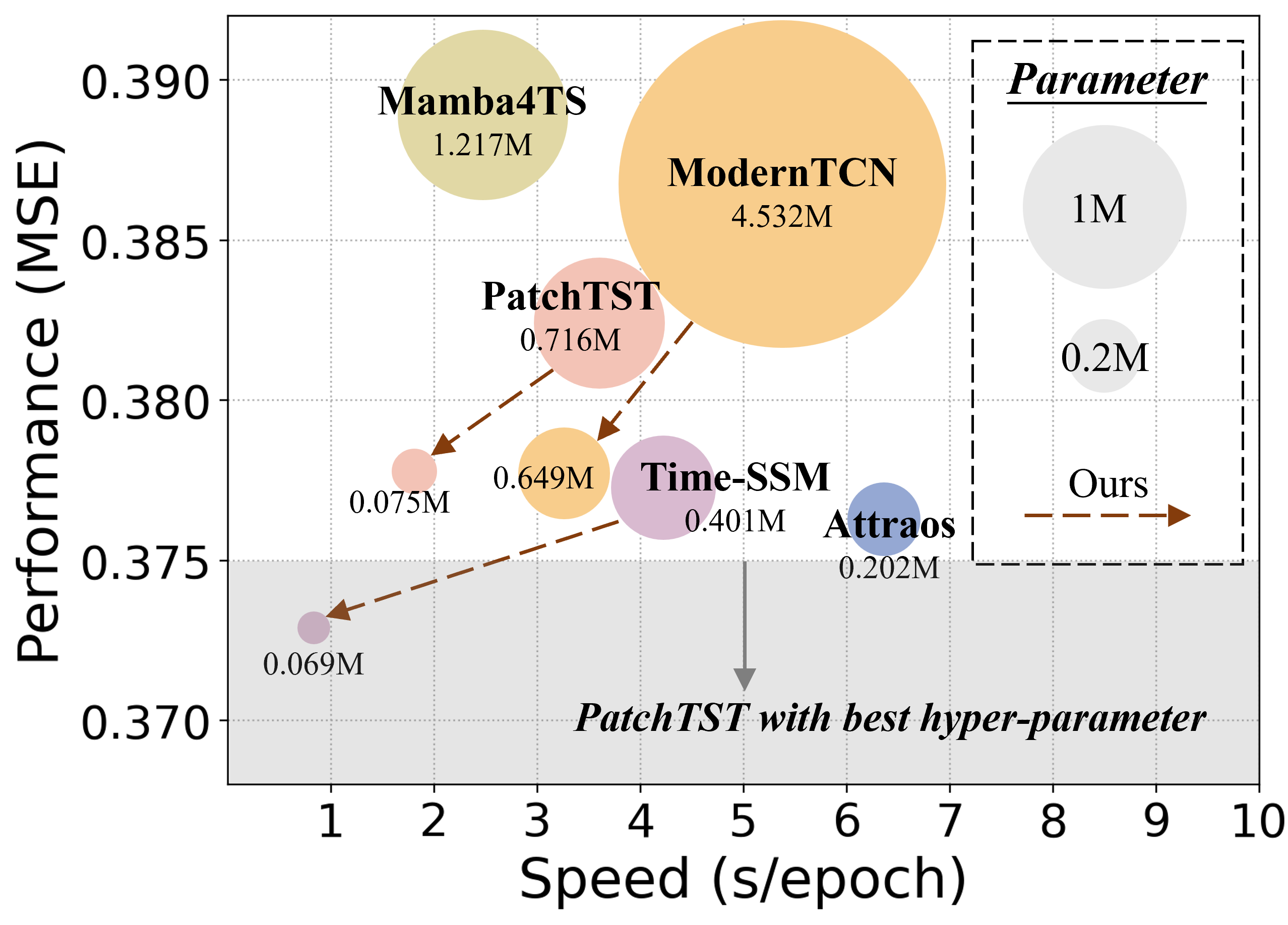}
  \end{center}
  \vspace{-1.5em}
  \caption{Efficiency analysis in ETTh1.}
  \vspace{-1em}
  \label{fig:effy}
\end{wrapfigure}
As depicted in Figure \ref{fig:effy}, we present an efficiency visualization of various model architectures in the ETTh1 dataset. Key observations include:
(a) Physics-guided embeddings bypass embedding matrices and reduce model dimensions, leading to a significant reduction in parameter count across various architectures (e.g., the PatchTST model exhibits a 10$\times$ reduction), alongside performance improvements.
(b) The necessity of deep neural networks in time series analysis tasks has long been debated, as some linear models have achieved strong results with greater efficiency \citep{zeng2023transformers, xu2023fits, lin2024sparsetsf}. Our proposed method directly aligns the parameter count of existing deep time series models to that of linear models with superior performance, which marks \textbf{\textit{physics-guided Embs, especially HD-Emb, to potentially become the standard embedding method for expert models}}.

\para{Forecasting \textit{w.r.t.} Input Length.}
In accordance with Table \ref{tab:fore-length}, we investigate the impact of input length on performance. It is observed that: (a) Across various lengths, the physics-guided embeddings consistently enhance performance, with HD-Emb exhibiting the best performance. Moreover, as the input length increases, the enhancement becomes more pronounced. This phenomenon is attributed to the fact that in embedding theory, longer input time series can better reconstruct the underlying dynamical structure, with a length of 1000 typically considered sufficient for ideal structural reconstruction. (b) The limitation of the Transformer architecture in modeling \textbf{\textit{long-range dependencies}} has long been challenged \citep{nie2022time}, as model performance tends to degrade with input length over 336. However, our physics-guided embeddings offer a solution to this issue.
\begin{table}[h!]
  \caption{Average forecasting performance \textit{w.r.t.} input lengths. The improved and decreased results are highlighted in \textcolor{lightred}{\rule{1em}{0.7em}} and \textcolor{lightgreen}{\rule{1em}{0.7em}}; improvements exceeding 10\% are highlighted in \textcolor{darkred}{\rule{1em}{0.7em}}. Full results are reported in Table \ref{tab:fore-all-length}.}
  \renewcommand{\arraystretch}{0.95} 
  \centering
  \hspace*{-1ex}
  \resizebox{1.02\columnwidth}{!}{
  \begin{threeparttable}
  \begin{small}
  \renewcommand{\multirowsetup}{\centering}
  \setlength{\tabcolsep}{1.5pt}
  \begin{tabular}{l| cc cc  cc| cc cc  cc| cc cc  cc}
    \toprule
    \multicolumn{1}{c}{\multirow{2}{*}{}} & 
    \multicolumn{2}{c}{\rotatebox{0}{\scalebox{0.8}{\textbf{Original-96}}}} &
    \multicolumn{2}{c}{\rotatebox{0}{\scalebox{0.8}{{+TD}}}} &
    \multicolumn{2}{c}{\rotatebox{0}{\scalebox{0.8}{{+HD}}}}&
    \multicolumn{2}{c}{\rotatebox{0}{\scalebox{0.8}{\textbf{Original-336}}}} &
    \multicolumn{2}{c}{\rotatebox{0}{\scalebox{0.8}{{+TD}}}} &
    \multicolumn{2}{c}{\rotatebox{0}{\scalebox{0.8}{{+HD}}}}&
    \multicolumn{2}{c}{\rotatebox{0}{\scalebox{0.8}{\textbf{Original-720}}}} &
    \multicolumn{2}{c}{\rotatebox{0}{\scalebox{0.8}{{+TD}}}} &
    \multicolumn{2}{c}{\rotatebox{0}{\scalebox{0.8}{{+HD}}}}\\
    \cmidrule(lr){2-3} \cmidrule(lr){4-5}\cmidrule(lr){6-7}\cmidrule(lr){8-9} \cmidrule(lr){10-11} \cmidrule(lr){12-13} \cmidrule(lr){14-15} \cmidrule(lr){16-17} \cmidrule(lr){18-19}
    \multicolumn{1}{c}{}  & \multicolumn{1}{c}{\scalebox{0.78}{MSE}} & \multicolumn{1}{c}{\scalebox{0.78}{MAE}} &  \multicolumn{1}{c}{\scalebox{0.78}{MSE}} & \multicolumn{1}{c}{\scalebox{0.78}{MAE}} & \multicolumn{1}{c}{\scalebox{0.78}{MSE}} & \multicolumn{1}{c}{\scalebox{0.78}{MAE}} &  \multicolumn{1}{c}{\scalebox{0.78}{MSE}} & \multicolumn{1}{c}{\scalebox{0.78}{MAE}} & \multicolumn{1}{c}{\scalebox{0.78}{MSE}} & \multicolumn{1}{c}{\scalebox{0.78}{MAE}}  & \multicolumn{1}{c}{\scalebox{0.78}{MSE}} & \multicolumn{1}{c}{\scalebox{0.78}{MAE}}  & \multicolumn{1}{c}{\scalebox{0.78}{MSE}} & \multicolumn{1}{c}{\scalebox{0.78}{MAE}}  & \multicolumn{1}{c}{\scalebox{0.78}{MSE}} & \multicolumn{1}{c}{\scalebox{0.78}{MAE}}  & \multicolumn{1}{c}{\scalebox{0.78}{MSE}} & \multicolumn{1}{c}{\scalebox{0.78}{MAE}}\\
    \midrule

    \scalebox{0.95}{ETTh1} & \scalebox{0.78}{ {0.450}} &\scalebox{0.78}{ {0.449}} & \scalebox{0.78}{0.438} & \scalebox{0.78}{0.440} & \scalebox{0.78}{{0.437}} & \scalebox{0.78}{{0.437}} & \scalebox{0.78}{0.420 
} & \scalebox{0.78}{0.441 
} & \scalebox{0.78}{0.417 
} & \scalebox{0.78}{0.432
} &  {\scalebox{0.78}{0.413 
}} &  {\scalebox{0.78}{0.422
}} & \scalebox{0.78}{0.481 
} & \scalebox{0.78}{0.483 
} & \scalebox{0.78}{0.428 
} & \scalebox{0.78}{0.449} & {\scalebox{0.78}{0.495 
}} & {\scalebox{0.78}{0.482}} \\
    \scalebox{0.95}{Improve} & \scalebox{0.78}{ {--}} &\scalebox{0.78}{ {--}} &\scalebox{0.78}{\boldres{2.67\%}} &\scalebox{0.78}{\boldres{2.00\%}} &  \scalebox{0.78}{\boldres{2.89\%}} &\scalebox{0.78}{\boldres{2.67\%}} & \scalebox{0.78}{ {--}} &\scalebox{0.78}{ {--}} &\scalebox{0.78}{\boldres{0.71\%}} &\scalebox{0.78}{\boldres{2.04\%}} & \scalebox{0.78}{\boldres{1.67\%}}& \scalebox{0.78}{\boldres{4.31\%}}& \scalebox{0.78}{ {--}} &\scalebox{0.78}{ {--}} & \scalebox{0.78}{\supres{11.02\%}}& \scalebox{0.78}{\boldres{7.04\%}} &  {\scalebox{0.78}{\colorbox{lightgreen}{-2.91\%}}} &  {\scalebox{0.78}{\boldres{0.21\%}}} \\
    
    \scalebox{0.95}{ETTh2} & \scalebox{0.78}{ {0.382}} &\scalebox{0.78}{ {0.411}} &\scalebox{0.78}{0.375} &\scalebox{0.78}{0.399} &  \scalebox{0.78}{{0.374}} &\scalebox{0.78}{{0.398}} &\scalebox{0.78}{ {0.358 
}} &\scalebox{0.78}{ {0.415}} & \scalebox{0.78}{0.354} & \scalebox{0.78}{0.397} & \scalebox{0.78}{0.363} & \scalebox{0.78}{0.400} &  {\scalebox{0.78}{0.439}} &  {\scalebox{0.78}{0.447}} & \scalebox{0.78}{0.357} & \scalebox{0.78}{0.405} & \scalebox{0.78}{0.361} & \scalebox{0.78}{0.408} \\

    \scalebox{0.95}{Improve} & \scalebox{0.78}{ {--}} &\scalebox{0.78}{ {--}} &\scalebox{0.78}{{\boldres{1.83\%}}} &\scalebox{0.78}{{\boldres{2.92\%}}} &  \scalebox{0.78}{{\boldres{2.09\%}}} &\scalebox{0.78}{{\boldres{3.16\%}}} & \scalebox{0.78}{ {--}} &\scalebox{0.78}{ {--}} &\scalebox{0.78}{{\boldres{1.12\%}}} &\scalebox{0.78}{{\boldres{4.34\%}}} & \scalebox{0.78}{{\colorbox{lightgreen}{-1.40\%}}} & \scalebox{0.78}{{\boldres{3.61\%}}} & \scalebox{0.78}{ {--}} &\scalebox{0.78}{ {--}} & \scalebox{0.78}{{\supres{18.68\%}}} & \scalebox{0.78}{{\boldres{9.40\%}}} &  {\scalebox{0.78}{{\supres{17.77\%}}}} &  {\scalebox{0.78}{{\boldres{8.72\%}}}} \\
    
    \scalebox{0.95}{ETTm2} & \scalebox{0.78}{ {0.284}} &\scalebox{0.78}{ {0.330}} &\scalebox{0.78}{0.285} &\scalebox{0.78}{0.330} &  \scalebox{0.78}{0.282} &\scalebox{0.78}{0.328} &\scalebox{0.78}{ {0.263 
}} &\scalebox{0.78}{ {0.323}} & \scalebox{0.78}{0.262} & \scalebox{0.78}{0.321} & \scalebox{0.78}{0.259} & \scalebox{0.78}{0.319} &  {\scalebox{0.78}{0.280}} &  {\scalebox{0.78}{0.339}} & \scalebox{0.78}{0.275} & \scalebox{0.78}{0.337} & \scalebox{0.78}{0.273} & \scalebox{0.78}{0.331}\\

    \scalebox{0.95}{Improve} & \scalebox{0.78}{ {--}} &\scalebox{0.78}{ {--}} &\scalebox{0.78}{\boldres{0.35\%}} &\scalebox{0.78}{--} &  \scalebox{0.78}{\boldres{0.70\%}} &\scalebox{0.78}{\boldres{0.61\%}} & \scalebox{0.78}{ {--}} &\scalebox{0.78}{ {--}} &\scalebox{0.78}{\boldres{0.38\%}} &\scalebox{0.78}{\boldres{0.62\%}}& \scalebox{0.78}{\boldres{1.52\%}} & \scalebox{0.78}{\boldres{1.24\%}} & \scalebox{0.78}{ {--}} &\scalebox{0.78}{ {--}} & \scalebox{0.78}{\boldres{1.79\%}} & \scalebox{0.78}{\boldres{0.59\%}} &  {\scalebox{0.78}{\boldres{2.50\%}}} &  {\scalebox{0.78}{\boldres{2.36\%}}} \\
    
    \scalebox{0.95}{Weather} & \scalebox{0.78}{ {0.254}} &\scalebox{0.78}{ {0.279}} &\scalebox{0.78}{0.251} &\scalebox{0.78}{0.276} &  \scalebox{0.78}{0.254} &\scalebox{0.78}{0.278} &\scalebox{0.78}{ {0.233}} &\scalebox{0.78}{ {0.270}} & \scalebox{0.78}{0.231} & \scalebox{0.78}{0.267} & \scalebox{0.78}{0.230} & \scalebox{0.78}{0.265} &  {\scalebox{0.78}{0.231}} &  {\scalebox{0.78}{0.273}} & \scalebox{0.78}{0.227} & \scalebox{0.78}{0.267} & \scalebox{0.78}{0.222} & \scalebox{0.78}{0.260} \\

    \scalebox{0.95}{Improve} & \scalebox{0.78}{ {--}} &\scalebox{0.78}{ {--}} &\scalebox{0.78}{\boldres{1.18\%}} &\scalebox{0.78}{\boldres{1.08\%}} &  \scalebox{0.78}{--} &\scalebox{0.78}{\boldres{0.36\%}} & \scalebox{0.78}{ {--}} &\scalebox{0.78}{ {--}} &\scalebox{0.78}{\boldres{0.86\%}} &\scalebox{0.78}{\boldres{1.11\%}} & \scalebox{0.78}{\boldres{1.29\%}} & \scalebox{0.78}{\boldres{1.85\%}}& \scalebox{0.78}{ {--}} &\scalebox{0.78}{ {--}} & \scalebox{0.78}{\boldres{1.73\%}} & \scalebox{0.78}{\boldres{2.20\%}} &  {\scalebox{0.78}{\boldres{3.90\%}}} &  {\scalebox{0.78}{\boldres{4.76\%}}} \\
    
    \bottomrule
  \end{tabular}
    \end{small}
  \end{threeparttable}}
  \label{tab:fore-length}
\end{table}

\para{Forecasting \textit{w.r.t.} Testing Curve.}
\begin{wrapfigure}{r}{0.45\textwidth}
  \begin{center}
  \vspace{-2em}
    \includegraphics[width=0.45\textwidth]{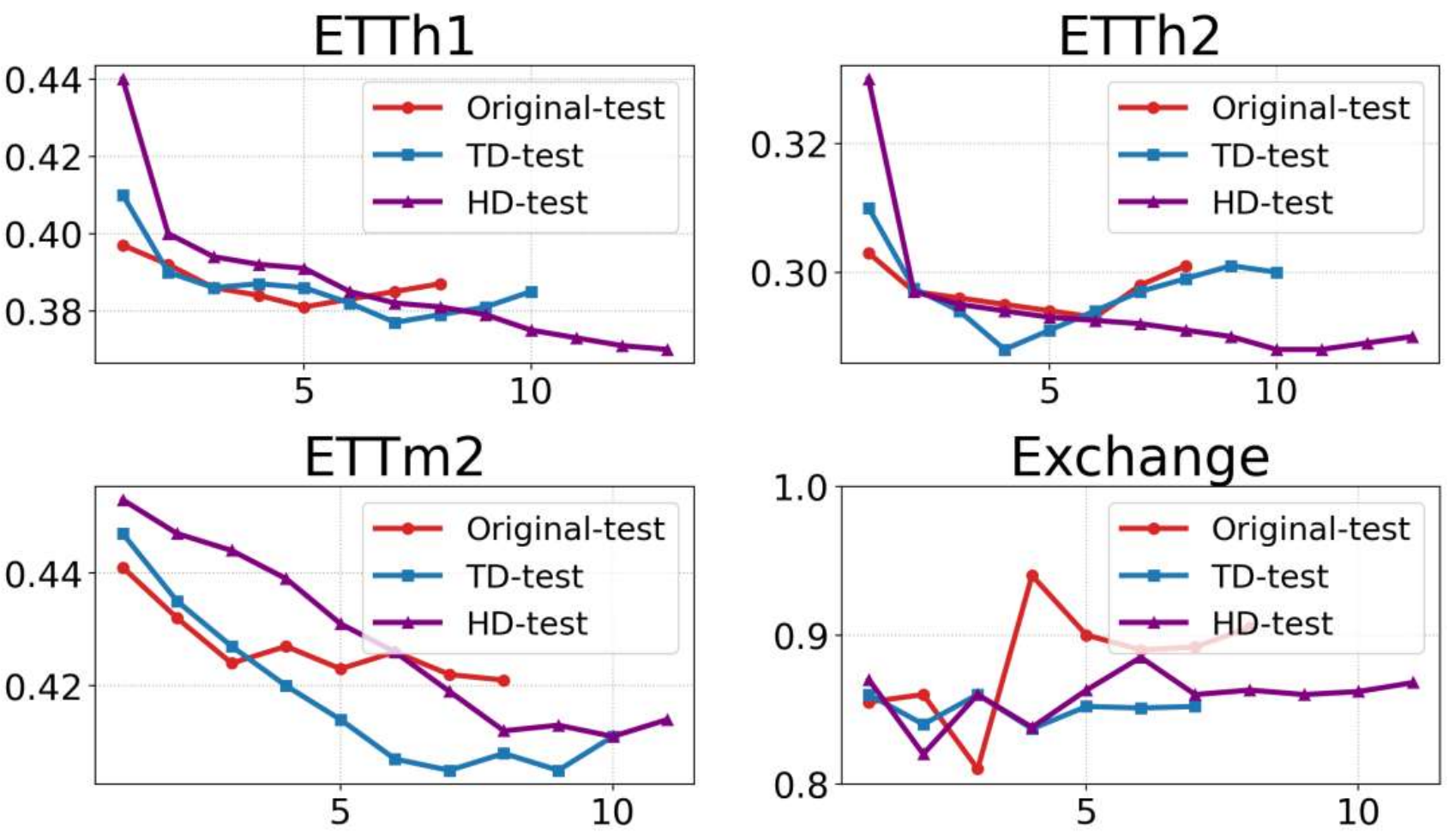}
  \end{center}
  \vspace{-1em}
  \caption{Visualization of testing loss \& epochs.}
  \vspace{-1em}
  \label{fig:loss}
\end{wrapfigure}
As depicted in Figure \ref{fig:loss}, we depict the fluctuations in test loss for the original PatchTST model and its variant integrating physics-guided embeddings (TD-test and HD-test). Noteworthy observations include:
(a) Generally, data representations originating from physics-guided embeddings exhibit more consistent gradients and attain superior fitting accuracy due to the physical prior. Conversely, parameterized embeddings often grapple with \textbf{\textit{overfitting issues}}, thereby illuminating the heightened efficacy of physics-guided embeddings.
(b) \textbf{\textit{High-order derivatives embedding manifests the most stable gradients and the slowest convergence rate throughout the dataset, enabling a gradual advancement toward the optimal solution.}}

\para{Forecasting \textit{w.r.t.} Various Embedding Techniques.}

As shown in Figure \ref{fig:10embs}, we evaluate the performance of 10 embedding methods across 7 datasets. The striped bars represent the CD modeling strategy, which combines dynamical structures from multiple variables into a unified system. Key observations include:
(a) Physics-guided embeddings (blue) generally outperform parameterized embeddings (red), with the HD-Emb performing the best overall, followed by the TD-Emb and ID-Emb methods.
(b) Significant improvements are observed in datasets with fewer variables, like ETT, and those with strong physical characteristics, like sunspots.
(c) CD strategy yields substantial improvements in datasets such as ETT\#2, Weather, and ECL, but has adverse effects in ETT\#1. This is attributed to data characteristics; for instance, Weather variables show similar Lyapunov and mutual information indices, indicating a shared underlying dynamical system, unlike the diverse indices in ETT\#1, which favors channel-independent modeling.
(d) Grid embedding performs poorly, likely due to the need to concatenate multivariate data in dynamical space, hindering the capture of system dynamics in the temporal domain. 
(e) Spectral embedding is also suboptimal, as time-series data is less dense than audio and spectral transformations like STFT may disrupt temporal sequencing.

\label{part:multivars}
\begin{figure}[h!]
    \centering
    \vspace{-1em}
    \includegraphics[width=1\columnwidth]{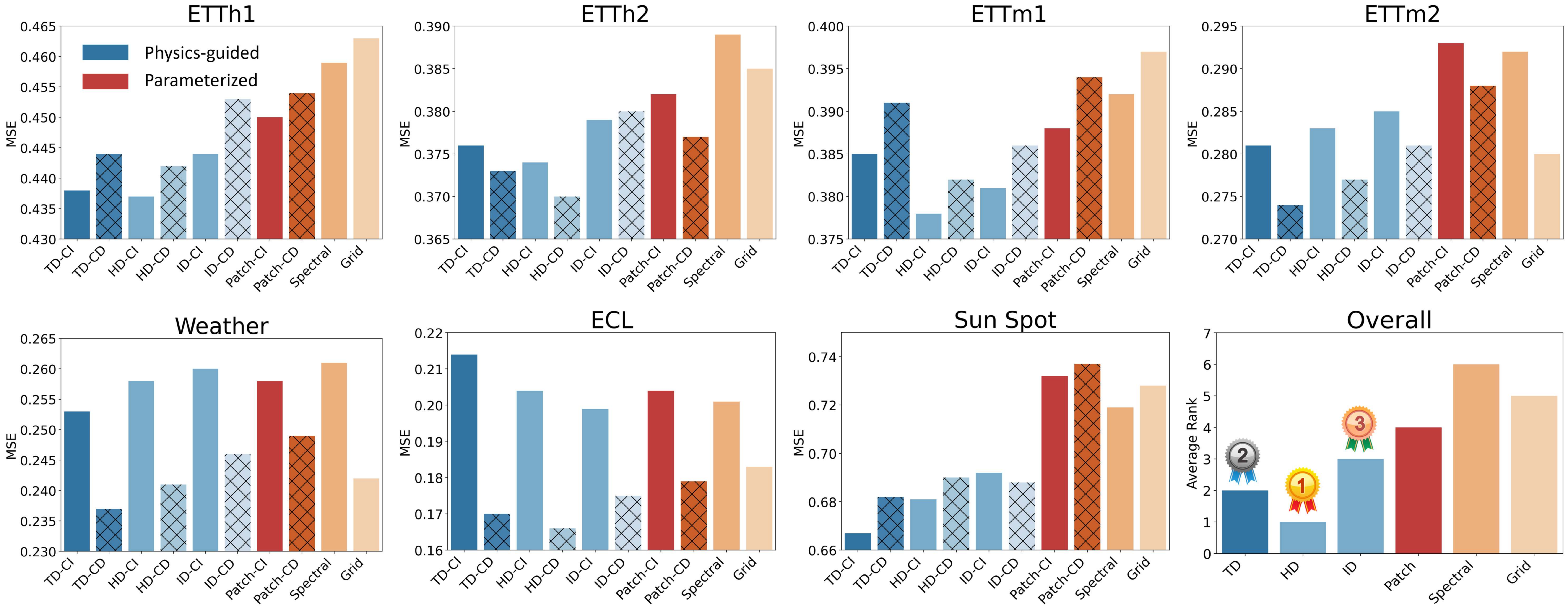}
    \vspace{-1.75em}
    \caption{Average forecasting performance comparison for various embedding methods.}
    \label{fig:10embs}
    \vspace{-1.25em}
\end{figure}

\para{Forecasting \textit{w.r.t.} Robustness Analysis.}
\begin{wrapfigure}{r}{0.45\textwidth}
  \begin{center}
  \vspace{-1.9em}
    \includegraphics[width=0.45\textwidth]{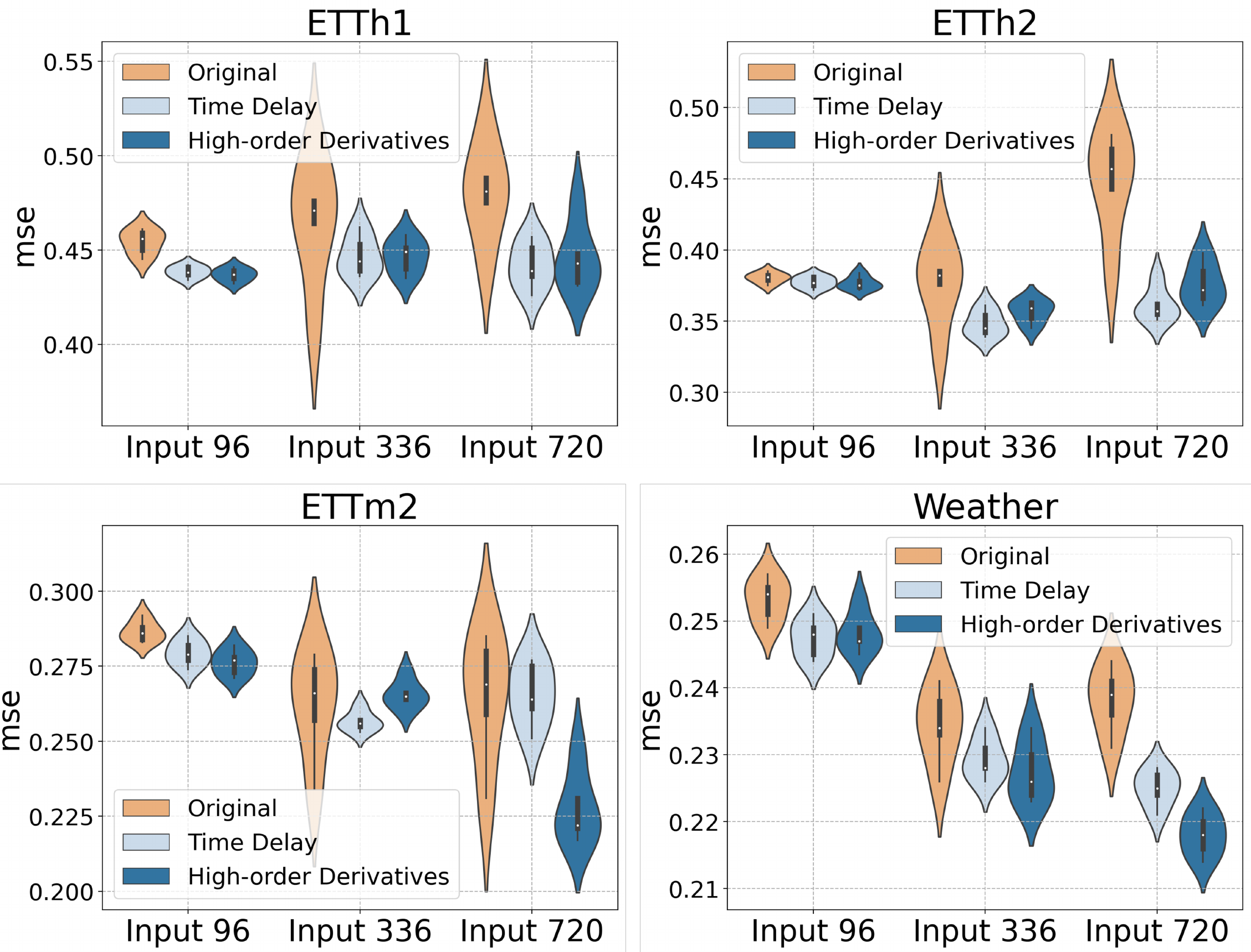}
  \end{center}
  \vspace{-1em}
  \caption{Robustness of PatchTST backbone.}
  \vspace{-1.3em}
  \label{fig:robust}
\end{wrapfigure}
As illustrated in Figure \ref{fig:robust}, we conduct robustness analyses using five experimental hyper-parameters across four datasets and three input lengths. The key observations include: (a) Compared to parameterized embeddings, physics-guided methods have significantly improved robustness, with this advantage further amplifying as the input length increases. (b) The parameterized embedding struggles to leverage longer time series context. Conversely, physics-guided embeddings demonstrate a more consistently increasing performance with longer input length. (c) Parameterized embeddings exhibit significant variability. Although recent models assert state-of-the-art (SOTA) results, they rely heavily on precise hyper-parameters, whereas our proposed \textbf{\textit{physics-guided embeddings consistently maintain a good performance without meticulous hyper-parameter searching.}}

\para{Classification.}
As shown in Table \ref{tab:class}, we conduct an investigation in the realm of classification tasks. Our findings reveal that: (a) Akin to discoveries in the field of neuroscience \citep{chen2021high}, {dynamical structures play a significant facilitative role in classification tasks}, leading to an enhancement in performance compared to parameterized embedding across various architectures, especially for the Time Delay embedding. b) The High-order Derivatives embedding is suboptimal, and we suspect this lackluster performance may stem from the inherent smoothing nature of derivative operations, potentially leading to the loss of information beneficial for classification tasks.
\begin{table}[h!]
\vspace{-0.725em}
  \caption{Performance comparison for the classification task based on the hyper-parameters provided in the original paper. The first and second results are highlighted in \textcolor{lightred}{\rule{1em}{0.7em}} and \textcolor{lightorange}{\rule{1em}{0.7em}}; OOM means out of memory.}
  \vspace{-1em}
  \vskip 0.05in
  \centering
  \renewcommand{\arraystretch}{0.55} 
  \begin{threeparttable}
  \begin{small}
  \renewcommand{\multirowsetup}{\centering}
  \setlength{\tabcolsep}{6pt}
  \hspace*{-1ex}
  \resizebox{1.02\linewidth}{!}{
  \begin{tabular}{l c c c | c c c | c c c }
    \toprule
    \scalebox{0.78}{Datasets / Models}   & \scalebox{0.78}{Time-SSM} & \scalebox{0.78}{TD-Emb} & \scalebox{0.78}{HD-Emb} & \scalebox{0.78}{PatchTST} & \scalebox{0.78}{TD-Emb} & \scalebox{0.78}{HD-Emb}  &  \scalebox{0.78}{M-TCN} & \scalebox{0.78}{{TD-Emb}} & \scalebox{0.78}{HD-Emb}\\
    \midrule
\scalebox{0.78}{EthanolConcentration} &   \scalebox{0.78}{\secondres{0.311}} & \scalebox{0.78}{\boldres{0.321 (3.22\%)}} & \scalebox{0.78}{\secondres{0.311(0.00\%)}} & \scalebox{0.78}{{0.307}} & \scalebox{0.78}{\boldres{0.324 (5.54\%)}}  & \scalebox{0.78}{\secondres{0.311 (1.30\%)}} & \scalebox{0.78}{\secondres{0.319}} & \scalebox{0.78}{\boldres{0.333 (4.39\%)}} & \scalebox{0.78}{0.312}  \\

\scalebox{0.78}{FaceDetection} &  \scalebox{0.78}{\secondres{0.673}} & \scalebox{0.78}{\boldres{0.681 (1.19\%)}} & \scalebox{0.78}{0.655}  & \scalebox{0.78}{\boldres{0.681}} & \scalebox{0.78}{\secondres{0.659}}  & \scalebox{0.78}{0.638}  & \scalebox{0.78}{\secondres{0.687}} & \scalebox{0.78}{\boldres{0.694 (1.02\%)}} & \scalebox{0.78}{0.665} \\

\scalebox{0.78}{Handwriting} &  \scalebox{0.78}{\secondres{0.279}} & \scalebox{0.78}{\boldres{0.289 (3.58\%)}}  & \scalebox{0.78}{0.261} & \scalebox{0.78}{\secondres{0.286}} & \scalebox{0.78}{\boldres{0.295 (3.15\%)}}   & \scalebox{0.78}{0.245} & \scalebox{0.78}{\secondres{0.284}} & \scalebox{0.78}{\boldres{0.292 (2.82\%)}} & \scalebox{0.78}{0.263}  \\

\scalebox{0.78}{Heartbeat} & \scalebox{0.78}{\secondres{0.714}} & \scalebox{0.78}{\boldres{0.737 (3.22\%)}} & \scalebox{0.78}{0.702}  & \scalebox{0.78}{\secondres{0.736}} & \scalebox{0.78}{\boldres{0.749 (1.77\%)}}  & \scalebox{0.78}{0.707}  & \scalebox{0.78}{\secondres{0.771}} & \scalebox{0.78}{\boldres{0.778 (0.91\%)}} & \scalebox{0.78}{0.727}  \\

\scalebox{0.78}{JapaneseVowels}  & \scalebox{0.78}{\secondres{0.974}} & \scalebox{0.78}{\boldres{0.981 (0.72\%)}}  & \scalebox{0.78}{0.922} & \scalebox{0.78}{\secondres{0.957}} & \scalebox{0.78}{\boldres{0.977 (2.09\%)}} & \scalebox{0.78}{0.955}  & \scalebox{0.78}{\secondres{0.981}} & \scalebox{0.78}{\boldres{0.986 (0.51\%)}} & \scalebox{0.78}{0.967} \\

\scalebox{0.78}{PEMS-SF} & \scalebox{0.78}{OOM} & \scalebox{0.78}{OOM} & \scalebox{0.78}{OOM} & \scalebox{0.78}{\secondres{0.861}} & \scalebox{0.78}{\boldres{0.879 (2.09\%)}} & \scalebox{0.78}{0.818} & \scalebox{0.78}{\secondres{0.832}} & \scalebox{0.78}{\boldres{0.857 (3.00\%)}} & \scalebox{0.78}{0.822} \\

\scalebox{0.78}{SelfRegulationSCP1} & \scalebox{0.78}{{0.870}} & \scalebox{0.78}{\boldres{0.893 (2.64\%)}} & \scalebox{0.78}{\secondres{0.872 (0.23\%)}} & \scalebox{0.78}{{0.896}} & \scalebox{0.78}{\secondres{0.903 (0.78\%)}} & \scalebox{0.78}{\boldres{0.913 (1.90\%)}}  & \scalebox{0.78}{\secondres{0.928}} & \scalebox{0.78}{\boldres{0.934 (0.65\%)}} & \scalebox{0.78}{0.905} \\

\scalebox{0.78}{SelfRegulationSCP2} & \scalebox{0.78}{\secondres{0.589}} & \scalebox{0.78}{{0.572}} & \scalebox{0.78}{\boldres{0.607 (3.06\%)}}  & \scalebox{0.78}{\secondres{0.577}} & \scalebox{0.78}{{0.565}} & \scalebox{0.78}{\boldres{0.595 (3.12\%)}} & \scalebox{0.78}{{0.617}} & \scalebox{0.78}{\boldres{0.622 (0.81\%)}} & \scalebox{0.78}{\secondres{0.620 (0.49\%)}} \\

\scalebox{0.78}{SpokenArabicDigits} & \scalebox{0.78}{{0.980}} & \scalebox{0.78}{\boldres{0.994 (1.43\%)}} & \scalebox{0.78}{\secondres{0.983 (0.31\%)}} & \scalebox{0.78}{{0.959}} & \scalebox{0.78}{\boldres{0.983 (2.50\%)}} & \scalebox{0.78}{\secondres{0.978 (1.98\%)}}  & \scalebox{0.78}{\boldres{0.981}} & \scalebox{0.78}{\secondres{0.979}} & \scalebox{0.78}{0.966} \\

\scalebox{0.78}{UWaveGestureLibrary}  & \scalebox{0.78}{\secondres{0.834}} & \scalebox{0.78}{\boldres{0.853 (2.28\%)}} & \scalebox{0.78}{0.805} & \scalebox{0.78}{\secondres{0.853}} & \scalebox{0.78}{{0.838}} & \scalebox{0.78}{\boldres{0.859}} & \scalebox{0.78}{\secondres{0.859 (0.70\%)}} & \scalebox{0.78}{\boldres{0.866 (0.81\%)}} & \scalebox{0.78}{0.844} \\
	\bottomrule
  \end{tabular}
  }
    \end{small}
  \end{threeparttable}
  \vspace{-1em}
  \label{tab:class}
\end{table}


\para{Imputation \& Anomaly Detection.}
As shown in Figure \ref{fig:impu_detect}, we present the performance analysis of Imputation and Anomaly Detection tasks. Overall, physics-guided embeddings consistently improve performance on the Imputation task, with particularly notable enhancements for the SSM-based backbone (14.7\% in ETTh2 dataset). However, for the Anomaly Detection task, the impact of physics-guided embeddings on performance is minimal, except for the SWAT dataset.

\begin{figure}[h!]
    \centering
    \vspace{-0.8em}
    \includegraphics[width=0.9\linewidth]{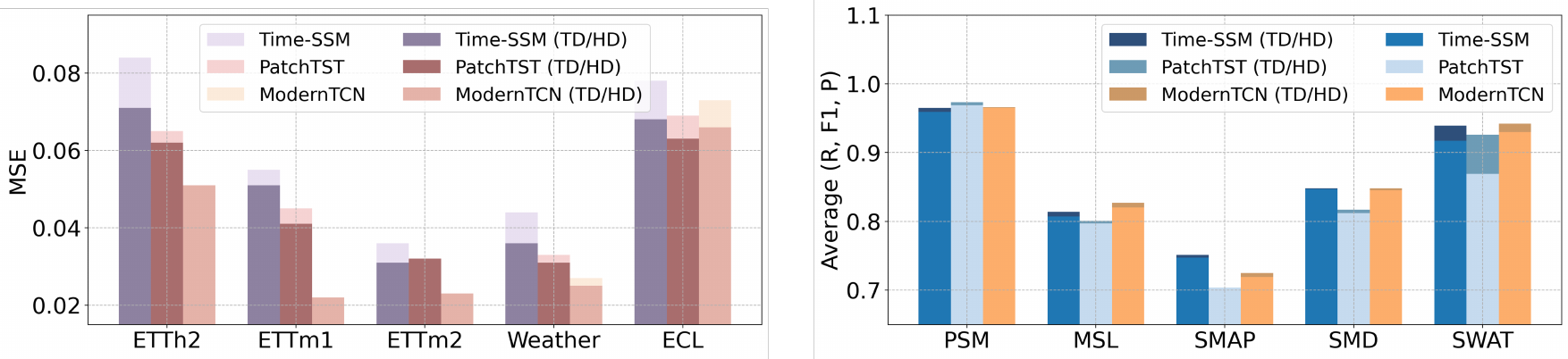}
    \vspace{-1em}
    \caption{Average performance comparison for Imputation (left) and Anomaly Detection (right) tasks.}
    \label{fig:impu_detect}
    \vspace{-1.5em}
\end{figure}

\para{Tasks Summary.}For information-intensive tasks such as forecasting and imputation, physics-guided embeddings can better comprehend the underlying dynamical characteristics and exhibit robustness, leading to significant performance improvements. For non-information-intensive tasks like classification, the dynamical structures constructed by TD-Emb methods can provide physics-related features that deep learning might overlook, thereby enhancing performance, while the HD-Emb method may lose some crucial information. In anomaly detection tasks, which may rely more on periodicity and data distribution, the impact of physics-guided embeddings is less pronounced.

\vspace{-0.5em}
\subsection{Performance for Foundation Models}
\vspace{-0.5em}
\para{Few-shot Learning.}
As illustrated in Table \ref{tab:few}, it can be observed that physics-guided embeddings yield stable and significant performance improvements, with a maximum performance boost of 21\% on the Time-SSM architecture. Consistent with the forecasting task, the HD method demonstrates the best performance, closely followed by the TD method. We attribute this to the more pronounced physical characteristics of the dynamical system compared to temporal data features, as depicted in Figure \ref{fig:visual_emb}. Therefore, in the few-shot learning, \textbf{\textit{physics-guided embeddings have the capacity to encapsulate richer and more essential information, consequently amplifying the performance.}}
\begin{table}[h]
\vspace{-0.6em}
  \caption{Average Few-shot results on 10\% training data with input 336. The improved and decreased results are highlighted in \textcolor{lightred}{\rule{1em}{0.7em}} and \textcolor{lightgreen}{\rule{1em}{0.7em}}; improvements over 10\% are highlighted in \textcolor{darkred}{\rule{1em}{0.7em}}. Full results are reported in Table \ref{tab:few-all}.}
  \vspace{-0.6em}
  \renewcommand{\arraystretch}{0.9} 
  \centering
  \hspace*{-1ex}
  \resizebox{1.02\columnwidth}{!}{
  \begin{threeparttable}
  \begin{small}
  \renewcommand{\multirowsetup}{\centering}
  \setlength{\tabcolsep}{3pt}
  \begin{tabular}{l| cc cc cc | cc cc cc | cc cc  cc}
    \toprule
    \multicolumn{1}{c}{\multirow{2}{*}{}} & 
    \multicolumn{2}{c}{\rotatebox{0}{\scalebox{0.8}{{Time-SSM}}}} &
    \multicolumn{2}{c}{\rotatebox{0}{\scalebox{0.8}{+TD}}} &
    \multicolumn{2}{c}{\rotatebox{0}{\scalebox{0.8}{+HD}}}&
    \multicolumn{2}{c}{\rotatebox{0}{\scalebox{0.8}{PatchTST}}}&
    \multicolumn{2}{c}{\rotatebox{0}{\scalebox{0.8}{+TD}}} &
    \multicolumn{2}{c}{\rotatebox{0}{\scalebox{0.8}{+HD}}} &
    \multicolumn{2}{c}{\rotatebox{0}{\scalebox{0.8}{ModernTCN}}} &
    \multicolumn{2}{c}{\rotatebox{0}{\scalebox{0.8}{+TD}}} &
    \multicolumn{2}{c}{\rotatebox{0}{\scalebox{0.8}{+HD}}} \\
    \cmidrule(lr){2-3} \cmidrule(lr){4-5}\cmidrule(lr){6-7}\cmidrule(lr){8-9} \cmidrule(lr){10-11} \cmidrule(lr){12-13} \cmidrule(lr){14-15} \cmidrule(lr){16-17} \cmidrule(lr){18-19}
    \multicolumn{1}{c}{}  & \multicolumn{1}{c}{\scalebox{0.78}{MSE}} & \multicolumn{1}{c}{\scalebox{0.78}{MAE}} &  \multicolumn{1}{c}{\scalebox{0.78}{MSE}} & \multicolumn{1}{c}{\scalebox{0.78}{MAE}} & \multicolumn{1}{c}{\scalebox{0.78}{MSE}} & \multicolumn{1}{c}{\scalebox{0.78}{MAE}} &  \multicolumn{1}{c}{\scalebox{0.78}{MSE}} & \multicolumn{1}{c}{\scalebox{0.78}{MAE}} & \multicolumn{1}{c}{\scalebox{0.78}{MSE}} & \multicolumn{1}{c}{\scalebox{0.78}{MAE}}  & \multicolumn{1}{c}{\scalebox{0.78}{MSE}} & \multicolumn{1}{c}{\scalebox{0.78}{MAE}}  & \multicolumn{1}{c}{\scalebox{0.78}{MSE}} & \multicolumn{1}{c}{\scalebox{0.78}{MAE}}  & \multicolumn{1}{c}{\scalebox{0.78}{MSE}} & \multicolumn{1}{c}{\scalebox{0.78}{MAE}}  & \multicolumn{1}{c}{\scalebox{0.78}{MSE}} & \multicolumn{1}{c}{\scalebox{0.78}{MAE}}\\
    \midrule

    \scalebox{0.95}{ETTh1} & \scalebox{0.78}{0.763 
} & \scalebox{0.78}{0.617} & \scalebox{0.78}{0.695 
} & \scalebox{0.78}{0.550} & \scalebox{0.78}{0.603 
} & \scalebox{0.78}{0.523 
} & \scalebox{0.78}{0.664 
} & \scalebox{0.78}{0.567} & \scalebox{0.78}{0.585} & \scalebox{0.78}{0.524} & \scalebox{0.78}{0.567} & \scalebox{0.78}{0.515} & \scalebox{0.78}{0.582} & \scalebox{0.78}{0.525}& \scalebox{0.78}{0.542}& \scalebox{0.78}{0.504}& \scalebox{0.78}{0.533}& \scalebox{0.78}{0.497}\\

    \scalebox{0.95}{Improve} & \scalebox{0.78}{ {--}} &\scalebox{0.78}{ {--}} &\scalebox{0.78}{\colorbox{lightred}{8.91\%}} &\scalebox{0.78}{\colorbox{lightred}{10.86\%}} &  \scalebox{0.78}{\colorbox{darkred}{20.97\%}} &\scalebox{0.78}{\colorbox{darkred}{15.24\%}} & \scalebox{0.78}{ {--}} &\scalebox{0.78}{ {--}} &\scalebox{0.78}{\colorbox{darkred}{11.90\%}} &\scalebox{0.78}{\colorbox{lightred}{7.58\%}} & \scalebox{0.78}{\colorbox{darkred}{14.61\%}} & \scalebox{0.78}{\colorbox{lightred}{9.17\%}} & \scalebox{0.78}{ {--}} &\scalebox{0.78}{ {--}} & \scalebox{0.78}{\colorbox{lightred}{6.87\%}} & \scalebox{0.78}{\colorbox{lightred}{4.00\%}} &  {\scalebox{0.78}{\colorbox{lightred}{8.42\%}}} &  {\scalebox{0.78}{\colorbox{lightred}{5.33\%}}} \\
    
    \scalebox{0.95}{ETTh2} & \scalebox{0.78}{ {0.515 
}} &\scalebox{0.78}{ {0.485 
}} &\scalebox{0.78}{0.498 
} &\scalebox{0.78}{0.479 
} &  \scalebox{0.78}{0.496 
} &\scalebox{0.78}{0.481} &\scalebox{0.78}{ {0.449 
}} &\scalebox{0.78}{ {0.448}} & \scalebox{0.78}{0.412} & \scalebox{0.78}{0.431} & \scalebox{0.78}{0.398} & \scalebox{0.78}{0.426} &  {\scalebox{0.78}{0.390}} &  {\scalebox{0.78}{0.414}} & \scalebox{0.78}{0.397} & \scalebox{0.78}{0.413} & \scalebox{0.78}{0.384} & \scalebox{0.78}{0.412} \\

\scalebox{0.95}{Improve} & \scalebox{0.78}{ {}} &\scalebox{0.78}{ {--}} &\scalebox{0.78}{\colorbox{lightred}{3.30\%}} &\scalebox{0.78}{\colorbox{lightred}{0.12\%}} &  \scalebox{0.78}{\colorbox{lightred}{3.69\%}} &\scalebox{0.78}{\colorbox{lightred}{0.82\%}} & \scalebox{0.78}{ {--}} &\scalebox{0.78}{ {--}} &\scalebox{0.78}{\colorbox{lightred}{8.24\%}} &\scalebox{0.78}{\colorbox{lightred}{3.79\%}} & \scalebox{0.78}{\colorbox{darkred}{11.36\%}} & \scalebox{0.78}{\colorbox{lightred}{4.91\%}} & \scalebox{0.78}{ {--}} &\scalebox{0.78}{ {--}} & \scalebox{0.78}{\colorbox{lightgreen}{-1.79\%}} & \scalebox{0.78}{\colorbox{lightred}{0.24\%}} &  {\scalebox{0.78}{\colorbox{lightred}{1.54\%}}} &  {\scalebox{0.78}{\colorbox{lightred}{0.48\%}}} \\

    \scalebox{0.95}{ETTm2} & \scalebox{0.78}{ {0.342 
}} &\scalebox{0.78}{ {0.370}} &\scalebox{0.78}{0.322 
} &\scalebox{0.78}{0.360} &  \scalebox{0.78}{0.299} &\scalebox{0.78}{0.346} & {\scalebox{0.78}{0.300}} &  {\scalebox{0.78}{0.340}} & \scalebox{0.78}{0.284} & \scalebox{0.78}{0.332} & \scalebox{0.78}{0.284} & \scalebox{0.78}{0.334} &  {\scalebox{0.78}{0.314}} &  {\scalebox{0.78}{0.348}} & \scalebox{0.78}{0.283} & \scalebox{0.78}{0.321} & \scalebox{0.78}{0.275} & \scalebox{0.78}{0.326} \\

\scalebox{0.95}{Improve} & \scalebox{0.78}{ {--}} &\scalebox{0.78}{ {--}} &\scalebox{0.78}{\colorbox{lightred}{5.85\%}} &\scalebox{0.78}{\colorbox{lightred}{2.70\%}} &  \scalebox{0.78}{\colorbox{darkred}{12.57\%}} &\scalebox{0.78}{\colorbox{lightred}{6.49\%}} & \scalebox{0.78}{ {--}} &\scalebox{0.78}{ {--}} &\scalebox{0.78}{\colorbox{lightred}{5.33\%}} &\scalebox{0.78}{\colorbox{lightred}{2.35\%}} & \scalebox{0.78}{\colorbox{darkred}{13.94\%}} & \scalebox{0.78}{\colorbox{lightred}{1.76\%}} & \scalebox{0.78}{ {--}} &\scalebox{0.78}{ {--}} & \scalebox{0.78}{\colorbox{lightred}{9.87\%}} & \scalebox{0.78}{\colorbox{lightred}{7.76\%}} &  {\scalebox{0.78}{\supres{12.42\%}}} &  {\scalebox{0.78}{\colorbox{lightred}{6.32\%}}} \\

\scalebox{0.95}{Weather} & \scalebox{0.78}{ {0.305 
}} &\scalebox{0.78}{ {0.311}} &\scalebox{0.78}{0.243} &\scalebox{0.78}{0.280} &  \scalebox{0.78}{0.238} &\scalebox{0.78}{0.276} &\scalebox{0.78}{ {0.240}} &\scalebox{0.78}{ {0.273}} & \scalebox{0.78}{0.243} & \scalebox{0.78}{0.280} & \scalebox{0.78}{0.238} & \scalebox{0.78}{0.276} &  {\scalebox{0.78}{0.293}} &  {\scalebox{0.78}{0.300}} & \scalebox{0.78}{0.272} & \scalebox{0.78}{0.288} & \scalebox{0.78}{0.266} & \scalebox{0.78}{0.279} \\

\scalebox{0.95}{Improve} & \scalebox{0.78}{ {--}} &\scalebox{0.78}{ {--}} &\scalebox{0.78}{\colorbox{darkred}{20.33\%}} &\scalebox{0.78}{\colorbox{lightred}{9.97\%}} &  \scalebox{0.78}{\colorbox{darkred}{21.97\%}} &\scalebox{0.78}{\colorbox{darkred}{11.25\%}} & \scalebox{0.78}{ {--}} &\scalebox{0.78}{ {--}} &\scalebox{0.78}{\colorbox{lightgreen}{-0.13\%}} &\scalebox{0.78}{\colorbox{lightgreen}{-2.56\%}} & \scalebox{0.78}{\colorbox{lightred}{0.83\%}} & \scalebox{0.78}{\colorbox{lightgreen}{-1.10\%}} & \scalebox{0.78}{ {--}} &\scalebox{0.78}{ {--}} & \scalebox{0.78}{\colorbox{lightred}{7.17\%}} & \scalebox{0.78}{\colorbox{lightred}{4.00\%}} &  {\scalebox{0.78}{\colorbox{lightred}{9.22\%}}} &  {\scalebox{0.78}{\colorbox{lightred}{7.00\%}}} \\

    \bottomrule
  \end{tabular}
    \end{small}
  \end{threeparttable}}
  \label{tab:few}
  \vspace{-1.5em}
\end{table}

\vspace{-0.5em}
\para{Zero-shot Learning.}
In Table \ref{tab:zero}, we delve into the performance of zero-shot learning tasks. Generally, the phenomenon of HD dominance, with TD consistently ranking second, remains evident across both intra-domain (e.g., ECL$\rightarrow$ETTh1) and cross-domain (Traffic$\rightarrow$ETTh1) tasks. As highlighted in Attraos \cite{hu2024attractor}, the underlying dynamical structures of time series display stable patterns, capturing the system's long-term evolutionary behaviors. Unlike numerical statistical information, which depends on specific datasets, \textbf{\textit{the dynamical topological structures provide more fundamental insights with stronger generalization, leading to significant performance improvements.}}

\begin{table}[h!]
\vspace{-0.5em}
  \caption{Performance comparison for zero-shot forecasting with input 96 and forecasting length 96. The first and second results are highlighted in \textcolor{lightred}{\rule{1em}{0.7em}} and \textcolor{lightorange}{\rule{1em}{0.7em}}. Experiments are based on SimMTM \citep{dong2024simmtm}.}
  \vspace{-0.5em}
  \renewcommand{\arraystretch}{0.75} 
  \centering
  \hspace*{-1ex}
  \resizebox{1.02\columnwidth}{!}{
  \begin{threeparttable}
  \begin{small}
  \renewcommand{\multirowsetup}{\centering}
  \setlength{\tabcolsep}{2.5pt}
  \begin{tabular}{l| cc cc  cc | cc  cc cc  | cc cc  cc}
    \toprule
    \multicolumn{1}{c}{\multirow{2}{*}{}} & 
    \multicolumn{2}{c}{\rotatebox{0}{\scalebox{0.8}{{Time-SSM}}}} &
    \multicolumn{2}{c}{\rotatebox{0}{\scalebox{0.8}{+TD}}} &
    \multicolumn{2}{c}{\rotatebox{0}{\scalebox{0.8}{+HD}}}&
    \multicolumn{2}{c}{\rotatebox{0}{\scalebox{0.8}{PatchTST}}}&
    \multicolumn{2}{c}{\rotatebox{0}{\scalebox{0.8}{+TD}}} &
    \multicolumn{2}{c}{\rotatebox{0}{\scalebox{0.8}{+HD}}} &
    \multicolumn{2}{c}{\rotatebox{0}{\scalebox{0.8}{ModernTCN}}} &
    \multicolumn{2}{c}{\rotatebox{0}{\scalebox{0.8}{+TD}}} &
    \multicolumn{2}{c}{\rotatebox{0}{\scalebox{0.8}{+HD}}} \\
    \cmidrule(lr){2-3} \cmidrule(lr){4-5}\cmidrule(lr){6-7}\cmidrule(lr){8-9} \cmidrule(lr){10-11} \cmidrule(lr){12-13} \cmidrule(lr){14-15} \cmidrule(lr){16-17} \cmidrule(lr){18-19}
    \multicolumn{1}{c}{}  & \multicolumn{1}{c}{\scalebox{0.78}{MSE}} & \multicolumn{1}{c}{\scalebox{0.78}{MAE}} &  \multicolumn{1}{c}{\scalebox{0.78}{MSE}} & \multicolumn{1}{c}{\scalebox{0.78}{MAE}} & \multicolumn{1}{c}{\scalebox{0.78}{MSE}} & \multicolumn{1}{c}{\scalebox{0.78}{MAE}} &  \multicolumn{1}{c}{\scalebox{0.78}{MSE}} & \multicolumn{1}{c}{\scalebox{0.78}{MAE}} & \multicolumn{1}{c}{\scalebox{0.78}{MSE}} & \multicolumn{1}{c}{\scalebox{0.78}{MAE}}  & \multicolumn{1}{c}{\scalebox{0.78}{MSE}} & \multicolumn{1}{c}{\scalebox{0.78}{MAE}}  & \multicolumn{1}{c}{\scalebox{0.78}{MSE}} & \multicolumn{1}{c}{\scalebox{0.78}{MAE}}  & \multicolumn{1}{c}{\scalebox{0.78}{MSE}} & \multicolumn{1}{c}{\scalebox{0.78}{MAE}}  & \multicolumn{1}{c}{\scalebox{0.78}{MSE}} & \multicolumn{1}{c}{\scalebox{0.78}{MAE}}\\
    \midrule

\scalebox{0.95}{ETTh2$\rightarrow$ETTh1} & \scalebox{0.78}{0.548} & \scalebox{0.78}{0.488} & \scalebox{0.78}{\boldres{0.517}} & \scalebox{0.78}{\boldres{0.467}} & \scalebox{0.78}{\secondres{0.524}} & \scalebox{0.78}{\secondres{0.477}} & \scalebox{0.78}{\secondres{0.527}} & \scalebox{0.78}{\secondres{0.482}} & \scalebox{0.78}{0.532} & \scalebox{0.78}{0.489} & \scalebox{0.78}{\boldres{0.509}} & \scalebox{0.78}{\boldres{0.465}} & \scalebox{0.78}{\secondres{0.488}} & \scalebox{0.78}{\secondres{0.461}} & \scalebox{0.78}{0.511} & \scalebox{0.78}{0.488} & \scalebox{0.78}{\boldres{0.476}} & \scalebox{0.78}{\boldres{0.460}} \\
    
\scalebox{0.95}{ETTm1$\rightarrow$ETTh1} & \scalebox{0.78}{0.694} & \scalebox{0.78}{0.557} & \scalebox{0.78}{\secondres{0.681}} & \scalebox{0.78}{\secondres{0.552}} & \scalebox{0.78}{\boldres{0.596}} & \scalebox{0.78}{\boldres{0.497}} & \scalebox{0.78}{{0.712}} & \scalebox{0.78}{{0.572}} & \scalebox{0.78}{\secondres{0.697}} & \scalebox{0.78}{\secondres{0.562}} & \scalebox{0.78}{\boldres{0.686}} & \scalebox{0.78}{\boldres{0.559}} & \scalebox{0.78}{0.641} & \scalebox{0.78}{0.535} & \scalebox{0.78}{\secondres{0.634}} & \scalebox{0.78}{\secondres{0.531}} & \scalebox{0.78}{\boldres{0.627}} & \scalebox{0.78}{\boldres{0.521}} \\

\scalebox{0.95}{ETTm2$\rightarrow$ETTm1} & \scalebox{0.78}{0.610} & \scalebox{0.78}{\secondres{0.484}} & \scalebox{0.78}{\secondres{0.607}} & \scalebox{0.78}{0.494} & \scalebox{0.78}{\boldres{0.564}} & \scalebox{0.78}{\boldres{0.486}} & \scalebox{0.78}{0.626} & \scalebox{0.78}{0.474} & \scalebox{0.78}{\secondres{0.575}} & \scalebox{0.78}{\secondres{0.470}} & \scalebox{0.78}{\boldres{0.520}} & \scalebox{0.78}{\boldres{0.456}} & \scalebox{0.78}{0.650} & \scalebox{0.78}{0.516} & \scalebox{0.78}{\secondres{0.624}} & \scalebox{0.78}{\secondres{0.497}} & \scalebox{0.78}{\boldres{0.616}} & \scalebox{0.78}{\boldres{0.485}} \\

\scalebox{0.95}{Weather$\rightarrow$ETTh1} & \scalebox{0.78}{\secondres{0.798}} & \scalebox{0.78}{0.599} & \scalebox{0.78}{\boldres{0.780}} & \scalebox{0.78}{\boldres{0.586}} & \scalebox{0.78}{0.836} & \scalebox{0.78}{\secondres{0.594}} & \scalebox{0.78}{0.784} & \scalebox{0.78}{0.599} & \scalebox{0.78}{\boldres{0.727}} & \scalebox{0.78}{\secondres{0.566}} & \scalebox{0.78}{\secondres{0.728}} & \scalebox{0.78}{\boldres{0.563}} & \scalebox{0.78}{\secondres{0.732}} & \scalebox{0.78}{\secondres{0.570}} & \scalebox{0.78}{0.749} & \scalebox{0.78}{0.588} & \scalebox{0.78}{\boldres{0.725}} & \scalebox{0.78}{\boldres{0.556}} \\
    
\scalebox{0.95}{ECL$\rightarrow$ETTh1} & 
\scalebox{0.78}{0.412} &
\scalebox{0.78}{0.410} &
\scalebox{0.78}{\secondres{0.400}} &
\scalebox{0.78}{\secondres{0.405}} &
\scalebox{0.78}{\boldres{0.396}} &
\scalebox{0.78}{\boldres{0.400}} &
\scalebox{0.78}{0.439} &
\scalebox{0.78}{0.437} &
\scalebox{0.78}{\secondres{0.401}} &
\scalebox{0.78}{\secondres{0.405}} &
\scalebox{0.78}{\boldres{0.398}} &
\scalebox{0.78}{\boldres{0.401}} &
\scalebox{0.78}{0.481} & \scalebox{0.78}{0.466} & \scalebox{0.78}{\boldres{0.439}} & \scalebox{0.78}{\secondres{0.437}} & \scalebox{0.78}{\secondres{0.448}} & \scalebox{0.78}{\boldres{0.434}} \\
    
\scalebox{0.95}{ECL$\rightarrow$ETTm1} & \scalebox{0.78}{0.936} & \scalebox{0.78}{0.611} & \scalebox{0.78}{\secondres{0.858}} & \scalebox{0.78}{\secondres{0.585}} & \scalebox{0.78}{\boldres{0.826}} & \scalebox{0.78}{\boldres{0.577}} & \scalebox{0.78}{0.971} & \scalebox{0.78}{0.633} & \scalebox{0.78}{\secondres{0.944}} & \scalebox{0.78}{\secondres{0.603}} & \scalebox{0.78}{\boldres{0.827}} & \scalebox{0.78}{\boldres{0.578}} & \scalebox{0.78}{0.944} & \scalebox{0.78}{0.619} & \scalebox{0.78}{\secondres{0.913}} & \scalebox{0.78}{\secondres{0.598}} & \scalebox{0.78}{\boldres{0.905}} & \scalebox{0.78}{\boldres{0.587}} \\

\scalebox{0.95}{Traffic$\rightarrow$ETTh1} & \scalebox{0.78}{0.447} & \scalebox{0.78}{0.435} & \scalebox{0.78}{\secondres{0.429}} & \scalebox{0.78}{\secondres{0.428}} & \scalebox{0.78}{\boldres{0.426}} & \scalebox{0.78}{\boldres{0.423}} & \scalebox{0.78}{\secondres{0.453}} & \scalebox{0.78}{0.441} & \scalebox{0.78}{0.454} & \scalebox{0.78}{\secondres{0.440}} & \scalebox{0.78}{\boldres{0.419}} & \scalebox{0.78}{\boldres{0.415}} & \scalebox{0.78}{\secondres{0.470}} & \scalebox{0.78}{\secondres{0.464}} & \scalebox{0.78}{0.491} & \scalebox{0.78}{0.479} & \scalebox{0.78}{\boldres{0.458}} & \scalebox{0.78}{\boldres{0.441}} \\
    \bottomrule
  \end{tabular}
    \end{small}
  \end{threeparttable}}
  \label{tab:zero}
  \vspace{-1em}
\end{table}

\para{Zero-shot Learning \textit{w.r.t.} Input Length.}
Table \ref{tab:zero-length} provides an analysis of the impact of varying input lengths, highlighting key trends:
(a) Overall, the HD method consistently maintains superior performance, with the TD method closely following. The occasional decline in TD performance may result from its sensitivity to the hyper-parameter in noisy real-world datasets, which can distort the dynamical structures.
(b) Physics-guided embeddings exhibit more substantial improvements in the MAE metric, suggesting greater sensitivity to large outliers during forecasting.
(c) Except for the ETTh2 dataset, both parameterized and physics-guided embeddings effectively leverage longer contextual information for cross-dataset prediction. Notably, the physics-guided embeddings show a more substantial performance enhancement, achieving an impressive improvement of over 50\% at an input length of 720, which indicates \textbf{\textit{their potential to become a new embedding paradigm.}}

\begin{table}[h!]
  \caption{Zero-shot learning results with various input lengths. The improved results are highlighted in \textcolor{lightred}{\rule{1em}{0.7em}}, and the decreased results are highlighted in \textcolor{lightgreen}{\rule{1em}{0.7em}}; improvements over 10\% are highlighted in \textcolor{darkred}{\rule{1em}{0.7em}}.}
  \renewcommand{\arraystretch}{0.85} 
  \centering
  \hspace*{-1ex}
  \resizebox{1.02\columnwidth}{!}{
  \begin{threeparttable}
  \begin{small}
  \renewcommand{\multirowsetup}{\centering}
  \setlength{\tabcolsep}{1.5pt}
  \begin{tabular}{l| cc cc  cc | cc  cc cc  | cc cc  cc}
    \toprule
    \multicolumn{1}{c}{\multirow{2}{*}{}} & 
    \multicolumn{2}{c}{\rotatebox{0}{\scalebox{0.8}{{Original-96}}}} &
    \multicolumn{2}{c}{\rotatebox{0}{\scalebox{0.8}{+TD}}} &
    \multicolumn{2}{c}{\rotatebox{0}{\scalebox{0.8}{+HD}}}&
    \multicolumn{2}{c}{\rotatebox{0}{\scalebox{0.8}{Original-336}}}&
    \multicolumn{2}{c}{\rotatebox{0}{\scalebox{0.8}{+TD}}} &
    \multicolumn{2}{c}{\rotatebox{0}{\scalebox{0.8}{+HD}}} &
    \multicolumn{2}{c}{\rotatebox{0}{\scalebox{0.8}{Original-720}}} &
    \multicolumn{2}{c}{\rotatebox{0}{\scalebox{0.8}{+TD}}} &
    \multicolumn{2}{c}{\rotatebox{0}{\scalebox{0.8}{+HD}}} \\
    \cmidrule(lr){2-3} \cmidrule(lr){4-5}\cmidrule(lr){6-7}\cmidrule(lr){8-9} \cmidrule(lr){10-11} \cmidrule(lr){12-13} \cmidrule(lr){14-15} \cmidrule(lr){16-17} \cmidrule(lr){18-19}
    \multicolumn{1}{c}{}  & \multicolumn{1}{c}{\scalebox{0.78}{MSE}} & \multicolumn{1}{c}{\scalebox{0.78}{MAE}} &  \multicolumn{1}{c}{\scalebox{0.78}{MSE}} & \multicolumn{1}{c}{\scalebox{0.78}{MAE}} & \multicolumn{1}{c}{\scalebox{0.78}{MSE}} & \multicolumn{1}{c}{\scalebox{0.78}{MAE}} &  \multicolumn{1}{c}{\scalebox{0.78}{MSE}} & \multicolumn{1}{c}{\scalebox{0.78}{MAE}} & \multicolumn{1}{c}{\scalebox{0.78}{MSE}} & \multicolumn{1}{c}{\scalebox{0.78}{MAE}}  & \multicolumn{1}{c}{\scalebox{0.78}{MSE}} & \multicolumn{1}{c}{\scalebox{0.78}{MAE}}  & \multicolumn{1}{c}{\scalebox{0.78}{MSE}} & \multicolumn{1}{c}{\scalebox{0.78}{MAE}}  & \multicolumn{1}{c}{\scalebox{0.78}{MSE}} & \multicolumn{1}{c}{\scalebox{0.78}{MAE}}  & \multicolumn{1}{c}{\scalebox{0.78}{MSE}} & \multicolumn{1}{c}{\scalebox{0.78}{MAE}}\\
    \midrule

\scalebox{0.95}{ETTh2$\rightarrow$ETTh1} & \scalebox{0.78}{0.527} &\scalebox{0.78}{0.482} &\scalebox{0.78}{0.532} &\scalebox{0.78}{0.489} & \scalebox{0.78}{0.509} &\scalebox{0.78}{0.465} &\scalebox{0.78}{0.711} &\scalebox{0.78}{0.574} & \scalebox{0.78}{0.507} & \scalebox{0.78}{0.480} & \scalebox{0.78}{0.482} & \scalebox{0.78}{0.453} & \scalebox{0.78}{0.959} & \scalebox{0.78}{0.660} & \scalebox{0.78}{0.471} & \scalebox{0.78}{0.468} & \scalebox{0.78}{0.449} & \scalebox{0.78}{0.446} \\

\scalebox{0.95}{Improve} & \scalebox{0.78}{ {--}} &\scalebox{0.78}{ {--}} &\scalebox{0.78}{\colorbox{lightgreen}{-0.96\%}} &\scalebox{0.78}{\colorbox{lightgreen}{-1.46\%}} &  \scalebox{0.78}{\colorbox{lightred}{3.37\%}} &\scalebox{0.78}{\colorbox{lightred}{3.36\%}} & \scalebox{0.78}{ {--}} &\scalebox{0.78}{ {--}} &\scalebox{0.78}{\colorbox{darkred}{28.66\%}} &\scalebox{0.78}{\colorbox{darkred}{16.48\%}} & \scalebox{0.78}{\colorbox{darkred}{32.23\%}} & \scalebox{0.78}{\colorbox{darkred}{21.18\%}} & \scalebox{0.78}{ {--}} &\scalebox{0.78}{ {--}} & \scalebox{0.78}{\colorbox{darkred}{50.91\%}} & \scalebox{0.78}{\colorbox{darkred}{29.06\%}} &  {\scalebox{0.78}{\colorbox{darkred}{53.17\%}}} &  {\scalebox{0.78}{\colorbox{darkred}{32.40\%}}} \\
    
\scalebox{0.95}{ETTm1$\rightarrow$ETTh1} & \scalebox{0.78}{0.712} &\scalebox{0.78}{0.572} &\scalebox{0.78}{0.697} &\scalebox{0.78}{0.562} & \scalebox{0.78}{0.686} &\scalebox{0.78}{0.559} &\scalebox{0.78}{0.573} &\scalebox{0.78}{0.500} & \scalebox{0.78}{0.552} & \scalebox{0.78}{0.500} & \scalebox{0.78}{0.500} & \scalebox{0.78}{0.475} & \scalebox{0.78}{0.604} & \scalebox{0.78}{0.542} & \scalebox{0.78}{0.541} & \scalebox{0.78}{0.503} & \scalebox{0.78}{0.467} & \scalebox{0.78}{0.467} \\

\scalebox{0.95}{Improve} & \scalebox{0.78}{ {--}} &\scalebox{0.78}{ {--}} &\scalebox{0.78}{\colorbox{lightred}{2.05\%}} &\scalebox{0.78}{\colorbox{lightred}{1.76\%}} &  \scalebox{0.78}{\colorbox{lightred}{3.54\%}} &\scalebox{0.78}{\colorbox{lightred}{2.30\%}} & \scalebox{0.78}{ {--}} &\scalebox{0.78}{ {--}} &\scalebox{0.78}{\colorbox{lightred}{3.62\%}} &\scalebox{0.78}{\colorbox{lightgreen}{-0.01\%}} & \scalebox{0.78}{\colorbox{darkred}{12.75\%}} & \scalebox{0.78}{\colorbox{lightred}{4.86\%}} & \scalebox{0.78}{ {--}} &\scalebox{0.78}{ {--}} & \scalebox{0.78}{\colorbox{darkred}{10.40\%}} & \scalebox{0.78}{\colorbox{lightred}{7.19\%}} &  {\scalebox{0.78}{\colorbox{darkred}{22.63\%}}} &  {\scalebox{0.78}{\colorbox{darkred}{13.91\%}}} \\

\scalebox{0.95}{ETTm2$\rightarrow$ETTm1} & \scalebox{0.78}{0.626} &\scalebox{0.78}{0.474} &\scalebox{0.78}{0.575} &\scalebox{0.78}{0.470} & \scalebox{0.78}{0.520} &\scalebox{0.78}{0.456} &\scalebox{0.78}{0.460} &\scalebox{0.78}{0.435} & \scalebox{0.78}{0.386} & \scalebox{0.78}{0.406} & \scalebox{0.78}{0.424} & \scalebox{0.78}{0.409} & \scalebox{0.78}{0.412} & \scalebox{0.78}{0.427} & \scalebox{0.78}{0.409} & \scalebox{0.78}{0.419} & \scalebox{0.78}{0.398} & \scalebox{0.78}{0.400} \\

\scalebox{0.95}{Improve} & \scalebox{0.78}{ {--}} &\scalebox{0.78}{ {--}} &
\scalebox{0.78}{\colorbox{lightred}{8.07\%}} &
\scalebox{0.78}{\colorbox{lightred}{1.04\%}} &
\scalebox{0.78}{\colorbox{darkred}{16.83\%}} &
\scalebox{0.78}{\colorbox{lightred}{3.97\%}} &
\scalebox{0.78}{ {--}} &\scalebox{0.78}{ {--}} &
\scalebox{0.78}{\colorbox{darkred}{16.08\%}} &
\scalebox{0.78}{\colorbox{lightred}{6.67\%}} &
\scalebox{0.78}{\colorbox{lightred}{7.95\%}} &
\scalebox{0.78}{\colorbox{lightred}{5.98\%}} &
\scalebox{0.78}{ {--}} &\scalebox{0.78}{ {--}} &
\scalebox{0.78}{\colorbox{lightred}{0.70\%}} &
\scalebox{0.78}{\colorbox{lightred}{1.84\%}} &
{\scalebox{0.78}{\colorbox{lightred}{3.46\%}}} &
{\scalebox{0.78}{\colorbox{lightred}{6.22\%}}} \\

\scalebox{0.95}{Weather$\rightarrow$ETTh1} & \scalebox{0.78}{0.784} &\scalebox{0.78}{0.599} &\scalebox{0.78}{0.727} &\scalebox{0.78}{0.566} & \scalebox{0.78}{0.728} &\scalebox{0.78}{0.563} &\scalebox{0.78}{0.703} &\scalebox{0.78}{0.553} & \scalebox{0.78}{0.679} & \scalebox{0.78}{0.523} & \scalebox{0.78}{0.668} & \scalebox{0.78}{0.515} & \scalebox{0.78}{0.688} & \scalebox{0.78}{0.547} & \scalebox{0.78}{0.704} & \scalebox{0.78}{0.564} & \scalebox{0.78}{0.682} & \scalebox{0.78}{0.539} \\

\scalebox{0.95}{Improve} & \scalebox{0.78}{ {--}} &\scalebox{0.78}{ {--}} &\scalebox{0.78}{\colorbox{lightred}{7.29\%}} &\scalebox{0.78}{\colorbox{lightred}{5.44\%}} &  \scalebox{0.78}{\colorbox{lightred}{7.16\%}} &\scalebox{0.78}{\colorbox{lightred}{5.93\%}} & \scalebox{0.78}{ {--}} &\scalebox{0.78}{ {--}} &\scalebox{0.78}{\colorbox{lightred}{3.47\%}} &\scalebox{0.78}{\colorbox{lightred}{5.40\%}} & \scalebox{0.78}{\colorbox{lightred}{5.04\%}} & \scalebox{0.78}{\colorbox{lightred}{6.86\%}} & \scalebox{0.78}{ {--}} &\scalebox{0.78}{ {--}} & \scalebox{0.78}{\colorbox{lightgreen}{-2.22\%}} & \scalebox{0.78}{\colorbox{lightgreen}{-3.21\%}} &  {\scalebox{0.78}{\colorbox{lightred}{0.87\%}}} &  {\scalebox{0.78}{\colorbox{lightgreen}{-1.46\%}}} \\

\scalebox{0.95}{ECL$\rightarrow$ETTh1} & \scalebox{0.78}{0.439} &\scalebox{0.78}{0.437} &\scalebox{0.78}{0.401} &\scalebox{0.78}{0.405} & \scalebox{0.78}{0.398} &\scalebox{0.78}{0.401} &\scalebox{0.78}{0.409} &\scalebox{0.78}{0.418} & \scalebox{0.78}{0.387} & \scalebox{0.78}{0.404} & \scalebox{0.78}{0.386} & \scalebox{0.78}{0.403} & \scalebox{0.78}{0.371} & \scalebox{0.78}{0.405} & \scalebox{0.78}{0.371} & \scalebox{0.78}{0.403} & \scalebox{0.78}{0.364} & \scalebox{0.78}{0.397} \\

\scalebox{0.95}{Improve} & \scalebox{0.78}{ {--}} &\scalebox{0.78}{ {--}} &\scalebox{0.78}{\colorbox{lightred}{8.52\%}} &\scalebox{0.78}{\colorbox{lightred}{7.40\%}} &  \scalebox{0.78}{\colorbox{lightred}{9.34\%}} &\scalebox{0.78}{\colorbox{lightred}{8.30\%}} & \scalebox{0.78}{ {--}} &\scalebox{0.78}{ {--}} &\scalebox{0.78}{\colorbox{lightred}{5.42\%}} &\scalebox{0.78}{\colorbox{lightred}{3.46\%}} & \scalebox{0.78}{\colorbox{lightred}{5.57\%}} & \scalebox{0.78}{\colorbox{lightred}{3.77\%}} & \scalebox{0.78}{ {--}} &\scalebox{0.78}{ {--}} & \scalebox{0.78}{\colorbox{lightgreen}{-0.02\%}} & \scalebox{0.78}{\colorbox{lightred}{0.49\%}} &  {\scalebox{0.78}{\colorbox{lightred}{1.98\%}}} &  {\scalebox{0.78}{\colorbox{lightred}{2.14\%}}} \\

\scalebox{0.95}{ECL$\rightarrow$ETTm1} & \scalebox{0.78}{0.971} &\scalebox{0.78}{0.633} &\scalebox{0.78}{0.944} &\scalebox{0.78}{0.603} & \scalebox{0.78}{0.827} &\scalebox{0.78}{0.578} &\scalebox{0.78}{0.927} &\scalebox{0.78}{0.619} & \scalebox{0.78}{0.932} & \scalebox{0.78}{0.627} & \scalebox{0.78}{0.880} & \scalebox{0.78}{0.567} & \scalebox{0.78}{0.737} & \scalebox{0.78}{0.544} & \scalebox{0.78}{0.695} & \scalebox{0.78}{0.514} & \scalebox{0.78}{0.655} & \scalebox{0.78}{0.497} \\

\scalebox{0.95}{Improve} & \scalebox{0.78}{ {--}} &\scalebox{0.78}{ {--}} &\scalebox{0.78}{\colorbox{lightred}{2.75\%}} &\scalebox{0.78}{\colorbox{lightred}{4.78\%}} &  \scalebox{0.78}{\colorbox{darkred}{14.79\%}} &\scalebox{0.78}{\colorbox{lightred}{8.73\%}} & \scalebox{0.78}{ {--}} &\scalebox{0.78}{ {--}} &\scalebox{0.78}{\colorbox{lightgreen}{-0.54\%}} &\scalebox{0.78}{\colorbox{lightgreen}{-1.24\%}} & \scalebox{0.78}{\colorbox{lightred}{5.07\%}} & \scalebox{0.78}{\colorbox{lightred}{8.40\%}} & \scalebox{0.78}{ {--}} &\scalebox{0.78}{ {--}} & \scalebox{0.78}{\colorbox{lightred}{5.71\%}} & \scalebox{0.78}{\colorbox{lightred}{5.59\%}} &  {\scalebox{0.78}{\colorbox{darkred}{11.13\%}}} &  {\scalebox{0.78}{\colorbox{lightred}{8.71\%}}} \\
    
\scalebox{0.95}{Traffic$\rightarrow$ETTh1} & \scalebox{0.78}{0.453} &\scalebox{0.78}{0.441} &\scalebox{0.78}{0.454} &\scalebox{0.78}{0.440} & \scalebox{0.78}{0.419} &\scalebox{0.78}{0.415} &\scalebox{0.78}{0.396} &\scalebox{0.78}{0.411} & \scalebox{0.78}{0.404} & \scalebox{0.78}{0.414} & \scalebox{0.78}{0.395} & \scalebox{0.78}{0.408} & \scalebox{0.78}{0.370} & \scalebox{0.78}{0.401} & \scalebox{0.78}{0.373} & \scalebox{0.78}{0.405} & \scalebox{0.78}{0.371} & \scalebox{0.78}{0.403} \\

\scalebox{0.95}{Improve} & \scalebox{0.78}{ {--}} &\scalebox{0.78}{ {--}} &\scalebox{0.78}{\colorbox{lightgreen}{-0.25\%}} &\scalebox{0.78}{\colorbox{lightred}{0.17\%}} &  \scalebox{0.78}{\colorbox{lightred}{7.60\%}} &\scalebox{0.78}{\colorbox{lightred}{5.89\%}} & \scalebox{0.78}{ {--}} &\scalebox{0.78}{ {--}} &\scalebox{0.78}{\colorbox{lightgreen}{-2.03\%}} &\scalebox{0.78}{\colorbox{lightgreen}{-0.90\%}} & \scalebox{0.78}{\colorbox{lightred}{0.12\%}} & \scalebox{0.78}{\colorbox{lightred}{0.64\%}} & \scalebox{0.78}{ {--}} &\scalebox{0.78}{ {--}} & \scalebox{0.78}{\colorbox{lightred}{1.76\%}} & \scalebox{0.78}{\colorbox{lightred}{0.25\%}} &  {\scalebox{0.78}{\colorbox{lightred}{3.47\%}}} &  {\scalebox{0.78}{\colorbox{lightred}{3.01\%}}} \\
\bottomrule
  \end{tabular}
    \end{small}
  \end{threeparttable}}
  \label{tab:zero-length}
\end{table}


\para{Discussion About Scalability.}
The remarkable improvements achieved by physics-guided embeddings in few-shot and zero-shot scenarios suggest their potential application in large-scale time series foundation models (LTSFM)~\citep{jin2023large}. A crucial aspect of advancing towards physics-guided LTSFM is the necessity of the scaling laws. However, while physics-guided embeddings are available in model depth (layers) expansion, they are constrained by physical priors in model width (hidden dimensions), leading to significant constraints on memory capacity as the dataset size increases. One potential solution is to integrate physics-guided embeddings with a Mixture of Expert techniques \citep{cai2024survey}. Diverse dynamical dimensions are established to enhance model representation and memory storage during the training stage, with an adaptive selection during the inference stage.

\section{Conclusion \& Future Work}
Inspired by embedding theory, this paper demonstrates that the embedding layer in a deep time series model is an estimation of the underlying dynamics of the data. Based on this, we explore replacing parameterized embedding with numerical reconstruction techniques. Experiments show that physics-guided embeddings significantly improve performance across various tasks and backbones. In the future, we aim to advance physics-guided embeddings as a standard embedding technique for expert models and develop physics-guided time series foundation models \citep{liang2024foundation}.

\bibliography{iclr2025_conference}
\bibliographystyle{iclr2025_conference}

\newpage

\appendix

\section{Technical Background}

\subsection{How to determine the physical hyper-parameter}
As mentioned earlier, the theoretical assurances of Takens' theorem falter under finite precision and noise, prompting the exploration of "optimal" embedding parameters. The notion of an "optimal" set suggests that embeddings vary in quality. Yet, assessing this quality necessitates a metric for comparison. Apart from the empirical selection methods mentioned in the paper, there are also other mainstream approaches available. Generally, these methods can be summarized
in two broad categories or arguments: prediction-based and topological arguments.

\begin{itemize}[leftmargin=*,itemsep=0pt,topsep=0pt]
  \item \textbf{Prediction-based} methods \citep{casdagli1991state,potapov1997distortions} notions of embedding quality can be seen to be inspired by the application of embeddings in the context of time-series prediction. Fundamentally, good embeddings should enable better predictions. These methods generally try to maximize the amount of new information incorporated in each delay dimension with the aim that it will provide more information about the true system state and aid in time series prediction.
  \item \textbf{Topological} methods \citep{buzug1992comparison, nichkawde2013optimal} often concentrate on analyzing the attractor structure and the distribution of the manifold within its ambient space. Essentially, a well-structured embedding in terms of topology and geometry should aim to be adequately spread out and unfolded within its ambient setting. This concept of quality aligns with Casdagli's noise amplification arguments. Geometrically-based methods may encompass metrics like the fill factor and displacement from the diagonal. In essence, the considerations for determining the optimal lag and embedding dimension for time delay embedding can be encapsulated by the notions of irrelevance and redundancy.
\end{itemize}

\subsection{Dynamical Encoder}
In the field of machine learning, particularly in the realm of dynamical systems modeling, articles on chaotic dynamical systems primarily focus on employing recurrent neural networks (RNNs) \citep{mikhaeil2022difficulty, hess2023generalized} and state-space models \citep{hu2024attractor, alonso2024state} for modeling, relying on the autoregressive nature of models to capture underlying dynamics. Additionally, some studies are dedicated to reservoir computing \citep{yan2024emerging}, simulating the problems sensitive to initial values of partial differential equations by maintaining a random vector reservoir. Furthermore, Neural ODEs \citep{li2020fourier,gupta2021multiwavelet} and some Physics-Informed Neural Networks (PINNs) \citep{raissi2019physics} attempt to uncover underlying patterns in data through a combination of data-driven and physics-constrained approaches. Increasingly, research indicates that deep learning models, i,e., Transformers \citep{hang2024unisolver} can also achieve impressive results.
\section{Proofs}
\label{sec:proof}

\begin{proposition}
The embedding, which uses a shared linear matrix, is an integral transformation $h(t) = \int_{-\infty}^t x(s)\phi(t,s)\mathrm{d}\mu(s)$ with limited time-invariant measure $\mu$ and polynomial basis $\phi$.
\end{proposition}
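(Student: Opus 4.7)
The plan is to recast the discrete shared-weight embedding as a Stieltjes integral against a compactly supported (``limited'') time-invariant measure, and then exhibit the weight vector as the evaluation of a polynomial kernel via Lagrange interpolation. Let $L$ denote the patch length and let $W \in \mathbb{R}^{M \times L}$ be the shared embedding matrix. For the patch ending at time $t$ the embedding has components $h_j(t) = \sum_{i=0}^{L-1} W_{ji}\, x(t-i)$ for $j=1,\dots,M$. The first step is to recognize this sum as the integral $h_j(t) = \int_{-\infty}^{t} x(s)\,\phi_j(t,s)\,\mathrm{d}\mu(s)$, where $\mu := \sum_{i=0}^{L-1} \delta_{-i}$ is a fixed counting measure with bounded support (hence ``limited'') and $\phi_j$ is required to satisfy $\phi_j(t,t-i) = W_{ji}$ when we push $\mu$ forward by translation-by-$t$. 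Time-invariance of $\mu$ is immediate: the patch structure is identical at every $t$, so the effective sampling set at time $t$ is $\{t-i : 0 \le i \le L-1\}$, i.e.\ a rigid translation of the support of $\mu$.

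The second step is to construct the polynomial kernel. For each row $j$ we are given $L$ real numbers $\{W_{j,0},\dots,W_{j,L-1}\}$, and by Lagrange interpolation there exists a unique polynomial $p_j$ of degree at most $L-1$ with $p_j(i) = W_{ji}$. Setting $\phi_j(t,s) := p_j(t-s)$ yields a kernel that lies in the polynomial basis $\mathrm{span}\{1,(t-s),(t-s)^{2},\dots,(t-s)^{L-1}\}$ and depends only on the lag $t-s$. Substituting into the Stieltjes integral gives
\begin{equation*}
\int_{-\infty}^{t} x(s)\,\phi_j(t,s)\,\mathrm{d}\mu(s) \;=\; \sum_{i=0}^{L-1} x(t-i)\,p_j(i) \;=\; \sum_{i=0}^{L-1} W_{ji}\,x(t-i) \;=\; h_j(t),
\end{equation*}
closing the equivalence. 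Stacking the $M$ rows shows that the entire shared-matrix embedding is the vector-valued integral transformation claimed, with the polynomial basis $\{(t-s)^k\}_{k=0}^{L-1}$ parameterized by the rows of $W$.

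The third step would package the above into a clean statement: the operator $x \mapsto h$ is translation-covariant because both $\mu$ and the lag-dependence of $\phi_j$ are shift-invariant, and it is a bounded linear map on any reasonable function space because $\mathrm{supp}(\mu)$ is finite and $\phi_j$ is a polynomial on this finite set. This translation-covariance is what Proposition~\ref{theo:linear} later exploits when it identifies the embedding with a similarity transformation of the underlying dynamics, so the time-invariance claim is not cosmetic but is the hinge used downstream.

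\paragraph{Main Obstacle.}
The principal subtlety is interpretational rather than computational: the statement ``polynomial basis'' is ambiguous between the strong reading (the kernel is literally a polynomial in $t-s$) and the weak reading (polynomials are a universal approximator for the kernel class, as invoked in the paper's citation of \citet{bollt2021explaining}). The Lagrange-interpolation argument handles the strong reading exactly and at no cost, since $L$ point values uniquely determine a polynomial of degree $\le L-1$; if one instead wants the weak reading to accommodate richer embeddings such as convolutions with continuous kernels, one would swap Lagrange interpolation for a Stone--Weierstrass approximation on the compact support of $\mu$, with approximation error controllable by polynomial degree. Either way, the rest of the proof is the routine rewriting of a finite sum as a Stieltjes integral.
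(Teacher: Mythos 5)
Your proposal is correct, but it proves the statement by a genuinely different route than the paper does. The paper's ``proof'' is not a derivation at all: it appeals to the HiPPO line of work \citep{gu2020hippo,gu2022train,hu2024time}, arguing that patch and convolutional embeddings are parameterized continuous convolutions under a uniform, finite measure window and hence amount to a polynomial basis-function projection, and it then notes the standard extensions (trigonometric bases giving Fourier, piecewise polynomial bases giving wavelets, exponentially decaying measures giving S4). You instead give a self-contained, elementary construction: write the shared-matrix patch embedding as $h_j(t)=\sum_{i=0}^{L-1}W_{ji}\,x(t-i)$, realize the sum as a Stieltjes integral against the compactly supported counting measure $\mu=\sum_{i=0}^{L-1}\delta_{-i}$ (translated rigidly with $t$, which is exactly the ``time-invariant'' sliding-window reading used in HiPPO-LegT), and obtain the polynomial kernel exactly by Lagrange interpolation, $\phi_j(t,s)=p_j(t-s)$ with $\deg p_j\le L-1$ and $p_j(i)=W_{ji}$. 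This buys an exact, assumption-free verification of the statement as literally written, and it makes transparent that the claim is essentially a rewriting of any finite shared linear map — a point the paper leaves implicit. What the paper's citation-based framing buys in exchange is the interpretive content that the rest of Section 4 relies on: viewing the embedding matrix as coefficients of a projection onto an orthogonal polynomial basis under a measure window, which is what licenses the later discussion of measure/basis choices and the link to state-space models. Your closing remark correctly identifies the only real ambiguity (exact polynomial kernel versus polynomial approximation of a general kernel), and your handling of both readings is sound; the one cosmetic caveat is that, as written, $\int_{-\infty}^{t}x(s)\phi(t,s)\,\mathrm{d}\mu(s)$ needs $\mu$ understood as the translate $\mu_t$ of your fixed measure, which you do flag but could state as an explicit pushforward to avoid any appearance of a type mismatch.
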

\begin{proof}
This perspective has been extensively discussed in numerous relevant literature \citep{gu2020hippo,gu2022train,hu2024twins,hu2024time}, where both patch operations and convolutional neural networks are seen as a parameterized continuous convolution process under a uniform and finite measure window, akin to a polynomial basis function projection. The Hippo theory \citep{gu2020hippo} provides a detailed theoretical framework for this. Various extensions can be derived based on different basis functions and measure windows; for instance, trigonometric basis functions lead to Fourier transforms, piecewise polynomial bases result in wavelet transforms, and exponential decay bases yield the recent deep state space model S4 \cite{gu2021efficiently}.
\end{proof}

\begin{proposition}
    For the full-rank embedding matrix, the embedding process is a similarity transformation that maintains the original dynamical properties (system eigenvalues).
\end{proposition}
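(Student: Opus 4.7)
The plan is to realize the embedding layer as a change of coordinates on the state of the underlying dynamical system, and then invoke the standard fact that similar matrices share their full spectrum. First, I would fix the setting suggested by Proposition~\ref{theo:integral}: on each patch the nonlinear evolution $x_{t+1}=f(x_t)$ of the reconstructed state (Definition~\ref{def:DS}) is approximated by its Jacobian $J(x_t)=\partial f/\partial x$, so the local one-step map is the linear operator $x_{t+1}\approx J(x_t)\,x_t$. This justifies reducing the claim about ``dynamical properties'' to a statement about the spectrum of $J$, since the Lyapunov exponents on an attractor are (by Oseledec's theorem, cf.\ Lemma~\ref{theo:lyps}) the time-averaged logs of the singular values of products of $J$, and are invariant under conjugation of the Jacobian by any fixed invertible matrix.

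Next, I would apply the shared embedding matrix $W$ to obtain hidden states $h_t=W x_t$. Because $W$ is assumed full rank, it admits a left inverse $W^{+}$ with $W^{+}W=\mathbf{I}$ (when the embedding dimension equals the patch length, $W^{+}=W^{-1}$ and the argument is the usual change of basis; otherwise one takes the Moore--Penrose pseudoinverse restricted to the range of $W$). Substituting $x_t=W^{+}h_t$ into the local linear model gives
\begin{equation*}
h_{t+1}\;=\;W\,J(x_t)\,W^{+}\,h_t\;=:\;\tilde{J}(h_t)\,h_t,
\end{equation*}
so the hidden-space one-step operator is $\tilde{J}=WJW^{+}$, which is precisely a similarity transform of $J$. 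Standard linear algebra then yields $\operatorname{spec}(\tilde{J})=\operatorname{spec}(J)$, together with equality of characteristic polynomials, determinants, traces and, by iterating, equality of the singular-value growth rates that define the Lyapunov spectrum. Combining this with the observation from the first paragraph completes the proof that the invariants characterizing the dynamics are preserved.

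The main obstacle I anticipate is the non-square case: in practice $W\in\mathbb{R}^{m\times p}$ with $m>p$, so $W$ is not literally invertible and ``similarity transformation'' must be interpreted with care. The remedy is to note that full column rank ensures $W$ is an isomorphism onto its range $\mathcal{R}(W)\subset\mathbb{R}^m$; the hidden dynamics are confined to this $p$-dimensional subspace, and $\tilde{J}$ restricted to $\mathcal{R}(W)$ is genuinely conjugate to $J$ via $W$, while the $m-p$ extra eigenvalues of $\tilde{J}$ are zeros contributed by the complementary subspace and carry no dynamical information. A minor secondary subtlety is that the Jacobian $J(x_t)$ varies along the trajectory, so ``same eigenvalues'' holds pointwise on each patch; the global invariants (Lyapunov exponents) then transfer because conjugation by the fixed $W$ commutes with the time-averaging in Oseledec's limit, which is exactly the bridge to Lemma~\ref{theo:lyps} already used in the paper.
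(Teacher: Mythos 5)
Your proposal is correct and follows essentially the same route as the paper: the hidden-state dynamics become a conjugation $W J W^{-1}$ (the paper writes $\bm{h}_{i+1}=\bm{W}\bar{J}\bm{W}^{-1}\bm{h}_i$ with $\bar{J}$ the patch-averaged Jacobian of the underlying flow), and spectral invariance under similarity gives the claim. The only differences are presentational — you keep the Jacobian pointwise and defer averaging to the Oseledec limit, and you treat the rectangular full-column-rank case via the pseudoinverse, which is actually more careful than the paper's implicit assumption that $W$ is square and invertible.
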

\begin{proof}
Let the underlying nonlinear time-variant dynamics is $\dot{x}=g(x(t),t)$, the dynamics for the (patched) hidden state $h$ is:
\begin{equation}
    \bm{h}_{i+1}=\bm{W}\bar{J}\bm{W}^{-1}\bm{h}_i, \quad \quad \bar{J}=\frac{1}{P} \sum_{i=1}^P J\left(x\left(t_i\right)\right), \quad \quad J(x)=\frac{\partial g(x)}{\partial x},
\end{equation}
which can be regarded as a \textbf{\textit{linearization for non-linear time series}}. The patch operation averages the Jacobian matrix $J$ that characterizes the system dynamics. As the patch length increases, the embedding will discard more nonlinear features of the data. For a full-rank matrix $\bm{W}$, the transformation $\bm{W}\bar{J}\bm{W}^{-1}$, known as a Similarity Transformation, where $\bm{W}\bar{J}\bm{W}^{-1}$ is unitarily equivalent to $\bar{J}$, preserves the underlying dynamical properties, serving as a mere space coordinate transformation.
\end{proof}

\begin{lemma}
A continuous function $f$ is K-Lipschitz when $\|f(x_1)-f\left(x_2\right)\|\leq K\|x_1-x_2\|$, then:

(1) The state space model with the negative diagonal matrix $\bm{A}$ and normalization layers is 1-Lipschitz.

(2) The fully connected and convolution neural network with normalization layers is 1-Lipschitz.

(3) The standard dot-product attention is not Lipschitz. The $L_2$ attention is bounded Lipschitz.
\end{lemma}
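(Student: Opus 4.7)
The plan is to prove each of the three sub-claims by decomposing the architecture into standard Lipschitz building blocks and tracking how Lipschitz constants compose, since a composition of $K_i$-Lipschitz maps is $\prod_i K_i$-Lipschitz and a sum of $K$-Lipschitz maps with a residual connection is $(1{+}K)$-Lipschitz.

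For (1), I would write the discretized SSM recurrence as $h_{t+1}=\bar{A}h_t+\bar{B}x_t$ with $\bar{A}=\exp(\Delta A)$. Since $A=\mathrm{diag}(a_1,\dots,a_d)$ has $a_i<0$ and $\Delta>0$, each diagonal entry of $\bar{A}$ lies in $(0,1)$, so $\|\bar{A}\|_2<1$ and the state update is a strict contraction in $h$. The input path $x_t\mapsto\bar{B}x_t$ is linear with operator norm controlled by the subsequent LayerNorm/RMSNorm, which rescales its input to unit norm (so the normalization itself is 1-Lipschitz on its effective domain once centered/projected). Composing the contraction with the 1-Lipschitz normalization yields a 1-Lipschitz block. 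For (2), fully-connected and convolutional layers are linear maps whose Lipschitz constant equals their operator norm; weight/spectral normalization or the scale-normalizing action of LayerNorm bounds this by $1$, and standard pointwise activations (ReLU, GELU, tanh, sigmoid after appropriate scaling) are 1-Lipschitz. Composition closure then gives the claim.

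For (3), the approach follows the Jacobian analysis of Kim et al.\ (2021). For the non-Lipschitz direction, I would differentiate $f(X)=\mathrm{softmax}(XW_Q W_K^{\!\top} X^{\!\top}/\sqrt{d})XW_V$ with respect to $X$: because the softmax logits are \emph{quadratic} in $X$, the Jacobian contains a term of order $\|X\|$ multiplied by the softmax derivative $\mathrm{diag}(p)-pp^{\!\top}$, and I would exhibit a one-parameter family of inputs (e.g.\ scaling a single token along the eigendirection of $W_QW_K^{\!\top}$) along which $\|Df(X)\|$ grows without bound, contradicting global Lipschitzness. For the $L_2$ attention case, the logits are $-\|X_i W_Q - X_j W_K\|^2/\sqrt{d}$; under the tied constraint $W_Q=W_K$ the logit matrix becomes symmetric and the softmax output $P$ is doubly-stochastic, after which the Jacobian can be bounded termwise using the identity $\partial_k P_{ij}=P_{ij}(\delta_{jk}-P_{ik})$ together with Cauchy-Schwarz on the value-weighted sums.

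The main obstacle will be the $L_2$-attention bound: the Jacobian has multiple interacting terms, and while each factor individually scales with $\|X\|$, the exponential decay of the softmax in pairwise distance must be shown to dominate this polynomial growth to yield a finite constant. I would address this by grouping the Jacobian into a ``mean-field'' part (controlled by $\|P\|_{\infty}\le 1$ and $\|W_V\|$) and a ``variance'' part of the form $\sum_{i,j}P_{ij}(X_i-\bar{X}_j)(\cdot)$, and bounding the latter using the fact that $P_{ij}\,\|X_i-X_j\|\cdot\exp(-\|X_iW_Q-X_jW_K\|^2/\sqrt d)$ is uniformly bounded in $X$. Parts (1) and (2) should then reduce to citing well-known norm inequalities and the closure properties above.
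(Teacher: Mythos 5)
The paper does not prove this lemma from first principles: part (1) is delegated to Lemma 2.8 of Time-SSM \citep{hu2024time} (negative diagonal $\bm{A}$, left-half-plane stability), and parts (2)--(3) are delegated to \citep{kim2021lipschitz}. Your sketch of part (3) reconstructs exactly the argument of that cited work --- quadratic-in-$X$ logits making the Jacobian of dot-product attention unbounded along a scaling family, and for $L_2$ attention the tied-weight condition $W_Q=W_K$ plus the fact that terms of the form $P_{ij}\,\|X_iW_Q-X_jW_K\|$ are uniformly bounded because the softmax weight decays exponentially in the squared distance --- so on that part you are on the paper's (i.e.\ its reference's) route, just spelled out rather than cited. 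Your contraction argument for $\bar{A}=\exp(\Delta A)$ with negative diagonal $A$ likewise matches the substance of the cited SSM lemma.

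The genuine gap is in how you handle the normalization layers in parts (1) and (2). You assert that LayerNorm/RMSNorm ``rescales its input to unit norm'' and is therefore 1-Lipschitz, and that it ``controls'' the operator norm of the preceding linear map $\bar{B}$ (resp.\ the FC/convolution weights). Neither claim holds as stated: normalization bounds the \emph{output} of the block, not its Lipschitz constant --- the map $x\mapsto (x-\mu(x))/\sigma(x)$ has derivative blowing up as $\sigma(x)\to 0$ (with the usual $\epsilon$ floor its constant is of order $1/\sqrt{\epsilon}$, not $1$), and normalizing activations does nothing to cap the spectral norm of an arbitrary weight matrix $W$, so a generic FC or convolutional layer is $\|W\|_2$-Lipschitz, not 1-Lipschitz. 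Your fallback to spectral/weight normalization would prove a different statement than the one in the lemma (which only assumes ``normalization layers''). To close part (1) you need the specific structure used in the cited Time-SSM lemma --- the strict contraction $\|\bar{A}\|_2<1$ absorbing the input-path gain so that the composed block's constant is pushed to (at most) 1 --- rather than attributing 1-Lipschitzness to the normalization itself; and part (2) needs either an explicit spectral-norm constraint on the weights or must be read, as in the paper, as a statement about suitably normalized (spectrally constrained) layers, which your proposal should state as a hypothesis rather than derive from LayerNorm.
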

\begin{proof}
The first part can be found with detailed proof in \textbf{Lemma 2.8} of Time-SSM \citep{hu2024time}, and matrices $\bm{A}$ with negative diagonal eigenvalues can also be explained using left half-plane control theory. Descriptions of the second and third parts can be found in \citep{kim2021lipschitz}. According to this theorem, SSMs, CNNs, and MLPs (although not suitable for our physics-guided embeddings) can be used to stabilize the modeling of dynamical structures to preserve dynamical characteristics, while the Transformer architecture may potentially disrupt underlying dynamics during modeling. However, some recent articles have shown promising results using the Transformer architecture in modeling PDE dynamical systems \citep{hang2024unisolver,zhang2024zero}, warranting further exploration in the future.
\end{proof}

\begin{proposition}
    The Lyapunov exponents $\lambda_m=\lim\limits_{t \rightarrow \infty} \frac{1}{t} \ln \sigma_m(t)$ of the system attractors are the mean logarithmic growth rates of the principal axes lengths of the ellipsoidal feature space. 
\end{proposition}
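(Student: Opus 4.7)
The plan is to reduce the statement to a direct application of Oseledec's multiplicative ergodic theorem~\citep{oseledec1968multiplicative}, using the dynamical interpretation established in Propositions~\ref{theo:integral} and~\ref{theo:linear}. First I would set up the correspondence explicitly: the embedding matrix $V$ is initialized so that its feature space is (approximately) an $M$-sphere; each gradient step acts as a linear map $G_t$ whose SVD is $G_t = U_t S_t V_t^{\top}$ with $S_t=\operatorname{diag}(\sigma_1(t),\dots,\sigma_M(t))$. Composing over $t$ training steps yields a cocycle $\Phi_t = G_t G_{t-1}\cdots G_1$, and the image of the unit sphere under $\Phi_t$ is by definition an ellipsoid whose principal semi-axis lengths are precisely the singular values of $\Phi_t$.

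Next I would invoke the similarity-transformation viewpoint of Proposition~\ref{theo:linear}: because the parameterized embedding acts as $\bm{W}\bar{J}\bm{W}^{-1}$ on the hidden state, the cocycle $\Phi_t$ is unitarily equivalent (over each patch window) to the averaged Jacobian cocycle of the underlying dynamics $\dot{x}=g(x,t)$. Similarity transformations preserve eigenvalues, singular values up to the orthogonal factors, and in particular the asymptotic logarithmic growth rates. Hence the $\sigma_m(t)$ governing the ellipsoidal feature space coincide asymptotically with the singular values of the Jacobian flow of the attractor.

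Then I would apply Oseledec's theorem to the Jacobian cocycle. Under mild integrability of $\log^+\!\|J\|$ and ergodicity of the measure on the attractor (standard assumptions in the embedding-theory setting used throughout the paper), the limits
\begin{equation}
\lambda_m \;=\; \lim_{t\to\infty}\frac{1}{t}\ln\sigma_m(t)
\end{equation}
exist for every $m$, are independent of the initial orthonormal frame almost surely, and form the Lyapunov spectrum of the attractor. Substituting the identification from the previous paragraph gives exactly the claimed identity: the $\lambda_m$ are the mean logarithmic growth rates of the principal axes of the ellipsoidal feature space generated by the gradient SVD.

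\paragraph{Main obstacle.}
The main difficulty I expect is not the invocation of Oseledec's theorem itself but the justification that the discrete gradient cocycle $\{G_t\}$ is a faithful proxy for the continuous Jacobian cocycle of the dynamical system. This requires (i) controlling the error introduced by the patch-length averaging in Proposition~\ref{theo:linear}, (ii) ensuring that the stationary training dynamics induces a well-defined invariant measure on the parameter trajectory so that ergodicity applies, and (iii) verifying the integrability hypothesis $\mathbb{E}\log^+\!\|G_t\|<\infty$ for the optimizer in use. With the Lipschitz assumptions already granted by Lemma~\ref{theo:liptz} (specifically the $1$-Lipschitz bounds on the normalized SSM/CNN encoders), these hypotheses can be discharged and the identification becomes rigorous; the remainder is bookkeeping.
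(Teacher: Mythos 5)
Your core argument is the right one, and it is essentially what the paper relies on: the paper's own ``proof'' is a one-line citation to standard nonlinear-dynamics theory (Section~1.2 of the chaos-detection literature it cites), i.e.\ the classical fact that an infinitesimal sphere evolved under the linearized flow becomes an ellipsoid whose principal semi-axes are the singular values $\sigma_m(t)$ of the tangent cocycle, and that Oseledec's multiplicative ergodic theorem guarantees the limits $\lambda_m=\lim_{t\to\infty}\frac1t\ln\sigma_m(t)$ exist and constitute the Lyapunov spectrum. Your first and third paragraphs reproduce exactly this standard derivation, so for the statement as the paper actually uses it your proposal is correct and, modulo the level of detail, the same approach.

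Where you go beyond the paper is the middle step, and that step is not actually supported. You claim that Proposition~\ref{theo:linear} makes the \emph{training-gradient} cocycle $\Phi_t=G_tG_{t-1}\cdots G_1$ unitarily equivalent to the Jacobian cocycle of the data dynamics. Proposition~\ref{theo:linear} says something weaker and different: the \emph{forward} embedding map acts on hidden states as $\bm{W}\bar{J}\bm{W}^{-1}$, a similarity transformation of a patch-averaged Jacobian. It says nothing about the sequence of gradient updates applied to $\bm{W}$ during optimization, which depend on the loss and the optimizer and are not similarity-transformed Jacobians of $\dot{x}=g(x,t)$. Consequently the identification of ``the ellipsoidal feature space generated by the gradient SVD'' with the tangent-space ellipsoid of the attractor is an interpretive analogy (which is how the main text presents it), not a theorem, and your own conditions (i)--(iii) — an invariant measure for the training dynamics, integrability $\mathbb{E}\log^+\|G_t\|<\infty$, and control of the patch-averaging error — are not discharged by Lemma~\ref{theo:liptz}, whose $1$-Lipschitz bounds concern the encoder's forward map, not the optimizer trajectory. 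So you should either drop that layer (it is not needed: the proposition, as the paper cites it, is purely a statement about the system's linearized flow) or state it explicitly as a heuristic correspondence rather than as ``bookkeeping.''
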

\begin{proof}
This is a standard theory in nonlinear dynamical systems, with detailed explanations available in \textbf{Section 1.2} of the relevant literature \citep{skokos2016chaos}.
\end{proof}

\section{Experiments}
\label{apx:exp}

\subsection{Encoder Backbone}
\label{apx:encoders}
In this paper, we have selected state-of-the-art models based on the CNN, Transformer, and SSM architectures as the backbone encoders. The specific details are as follows.

\begin{itemize}[leftmargin=*,itemsep=0pt,topsep=0pt]
  \item \textbf{Modern-TCN} \citep{luo2024moderntcn} is a pure convolutional architecture that incorporates both upsampling, downsampling techniques, and patching methods to stack models that separately capture temporal and channel correlations.
  \item \textbf{PatchTST} \citep{nie2022time} is the first transformer architecture to introduce chunking operations, employing a channel-independent strategy to apply the same backbone model to each time variable. It continues to maintain state-of-the-art performance in many tasks to this day.
  \item \textbf{TimeSSM} \citep{hu2024time} is a recent model architecture that applies the SSM kernel, typically utilizing patching operation and channel-independent modeling strategies, particularly excelling in prediction tasks with outstanding performance.
\end{itemize}

\vspace{-0.5em}
\subsection{Datasets}
\vspace{-0.5em}
\label{apx:data}
We perform experiments on 8 mainstream datasets like other papers \citep{wu2021autoformer, zhousdformer} to assess our model's performance, with detailed information provided in Table \ref{tab:dataset}. The Dimension signifies the variable count in each dataset. Dataset Size indicates the total time points in the train, validation, and test splits. Forecasting Length specifies the future time points for prediction, with four prediction settings per dataset. Frequency represents the time point sampling interval. To elaborate:

\begin{itemize}[leftmargin=*,itemsep=0pt,topsep=0pt]
  \item \textbf{ETT} dataset \citep{zhou2021informer} encompasses 7 electricity transformer factors spanning from July 2016 to July 2018. We utilize four subsets: ETTh1 and ETTh2 are hourly recorded, while ETTm1 and ETTm2 are recorded every 15 minutes.
  \item \textbf{Exchange} \citep{wu2021autoformer} compiles daily exchange rate panel data from 8 countries between 1990 and 2016.
  \item \textbf{Weather} \citep{wu2021autoformer} integrates 21 meteorological factors recorded every 10 minutes from the Weather Station of the Max Planck Bio-geochemistry Institute in 2020.
  \item \textbf{Electricity} \citep{wu2021autoformer} records the hourly electricity consumption data of 321 clients.
  \item \textbf{Traffic} \citep{wu2021autoformer} collects hourly road occupancy rates measured by 862 sensors of San Francisco Bay area freeways from January 2015 to December 2016. The train, validation, and test datasets are strictly divided according to chronological order to make sure there are no data leakage.
\end{itemize}

\begin{table}[!ht]
\centering
\vspace{-1em}
\caption{Detailed dataset descriptions.}
\vspace{-1em}
\small
\begin{tabular}{@{} c c c c c @{}}
\toprule
\textbf{Dataset} & {\textbf{Dimension}} & \textbf{Forecasting Length} & \textbf{Dataset Size} & \textbf{Information (Frequency)} \\
\midrule
ETTm1 & 7 & \{96, 192, 336, 720\} & (34369, 11425, 11425) & Electricity (15 min) \\
ETTh1 & 7 & \{96, 192, 336, 720\} & (8445, 2785, 2785) & Electricity (Hourly) \\
ETTm2 & 7 & \{96, 192, 336, 720\} & (34369, 11425, 11425) & Electricity (15 min) \\
ETTh2 & 7 & \{96, 192, 336, 720\} & (8545, 2881, 2881) & Electricity (Hourly) \\
Exchange      & 8 & \{96, 192, 336, 720\} & (5120, 665, 1422) & Exchange rate (Daily) \\
Weather       & 21 & \{96, 192, 336, 720\} & (36696, 5175, 10440) & Weather (10 min) \\
Electricity   & 321 & \{96, 192, 336, 720\} & (18221, 2537, 5165) & Electricity (Hourly) \\
Traffic       & 862 & \{96, 192, 336, 720\} & (12089, 1661, 3413) & Transportation (Hourly) \\
\bottomrule
\end{tabular}
\label{tab:dataset}
\vspace{-1em}
\end{table}

\subsection{Experiment Setting}

\label{apx:setting}
All experiments are conducted on the NVIDIA A6000-48G GPUs. The Adam optimizer is chosen. To ensure a fair and comprehensive comparison of the superiority of our proposed method, we conduct a complete set of experiments on the Time Series Library architecture. Throughout the experimental process, we ensure consistency in the application of physics-guided embeddings and parameterized embeddings, maintaining the same hyper-parameters in both the model architecture and experimental procedures. Specifically, the number of model layers, patch length, and stride are set based on the original paper's configurations, with a learning rate of 0.0001 and a hidden dimension of 256.

\subsection{More Visualization}
In Figure \ref{fig:all-visuals}, we present the spectral diagram of embeddings obtained through more parameterized embeddings and physics-guided embeddings.
\label{apx:visual}
\begin{figure}[h]
    \centering
    \vspace{-1em}
    \includegraphics[width=1\linewidth]{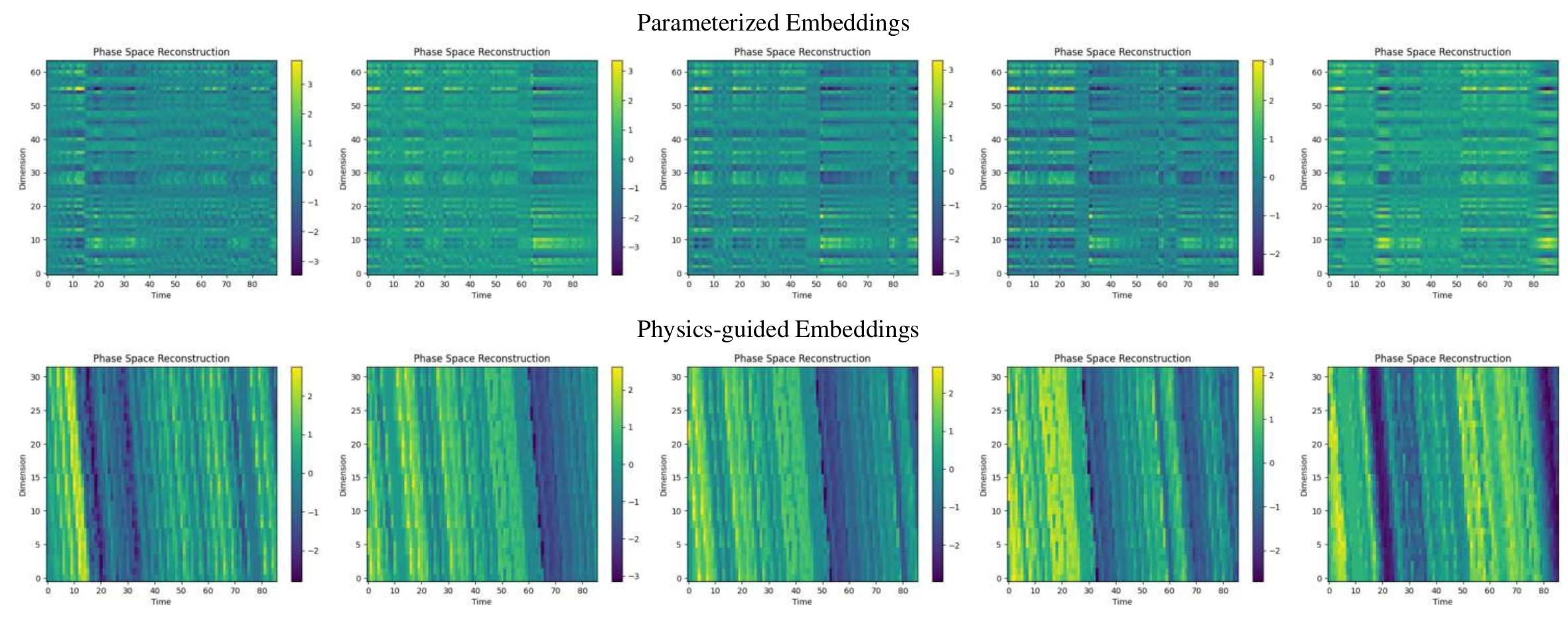}
    \vspace{-2em}
    \caption{More visualizations for parameterized embedding and physics-guided embedding}
    \label{fig:all-visuals}
\end{figure}

\vspace{-1em}
\subsection{Full Results}
\label{apx:full}
We present the full experiment results in the following tables.

\begin{table}[tbp]
  \caption{Full results for the anomaly detection task. The P, R, and F1 represent the precision, recall, and F1-score (\%), respectively. A higher value of P, R, and F1 indicates a better performance.}\label{tab:full_anomaly_results}
  \vskip 0.05in
  \vspace{-0.5em}
  \centering
  \begin{threeparttable}
  \begin{small}
  \renewcommand{\multirowsetup}{\centering}
  \setlength{\tabcolsep}{1.5pt}
  \begin{tabular}{l|ccc|ccc|ccc|ccc|ccc|c}
    \toprule
    \multicolumn{1}{c}{{Datasets}} & 
    \multicolumn{3}{c}{\scalebox{0.8}{\rotatebox{0}{PSM}}} &
    \multicolumn{3}{c}{\scalebox{0.8}{\rotatebox{0}{MSL}}} &
    \multicolumn{3}{c}{\scalebox{0.8}{\rotatebox{0}{SMAP}}} &
    \multicolumn{3}{c}{\scalebox{0.8}{\rotatebox{0}{SMD}}} & 
    \multicolumn{3}{c}{\scalebox{0.8}{\rotatebox{0}{SWAT}}} & \scalebox{0.8}{Avg} \\
    \cmidrule(lr){2-4} \cmidrule(lr){5-7}\cmidrule(lr){8-10} \cmidrule(lr){11-13}\cmidrule(lr){14-16} \cmidrule(lr){17-17}
    \multicolumn{1}{c}{\scalebox{0.8}{Metrics}} & \scalebox{0.8}{R} & \scalebox{0.8}{F1} & \scalebox{0.8}{P} & \scalebox{0.8}{R} & \scalebox{0.8}{F1} & \scalebox{0.8}{P} & \scalebox{0.8}{R} & \scalebox{0.8}{F1} & \scalebox{0.8}{P} & \scalebox{0.8}{R} & \scalebox{0.8}{F1} & \scalebox{0.8}{P} & \scalebox{0.8}{R} & \scalebox{0.8}{F1} & \scalebox{0.8}{P} & \scalebox{0.8}{}\\
    \midrule
    	 	
       \scalebox{0.85}{Time-SSM} 
            & \scalebox{0.85}{0.936} & \scalebox{0.85}{0.959} & \scalebox{0.85}{0.983}
            & \scalebox{0.85}{0.725} & \scalebox{0.85}{0.801} & \scalebox{0.85}{0.895}
            & \scalebox{0.85}{0.602} & \scalebox{0.85}{0.726} & \scalebox{0.85}{0.913} 
            & \scalebox{0.85}{0.909} & \scalebox{0.85}{0.844} & \scalebox{0.85}{0.788} 
            & \scalebox{0.85}{0.913} & \scalebox{0.85}{0.917} & \scalebox{0.85}{0.921}
            & \scalebox{0.85}{0.849} \\
            \scalebox{0.85}{TD-Emb} 
            & \scalebox{0.85}{0.943} & \scalebox{0.85}{0.962} & \scalebox{0.85}{0.989}
            & \scalebox{0.85}{0.727} & \scalebox{0.85}{0.805} & \scalebox{0.85}{0.902}
            & \scalebox{0.85}{0.603} & \scalebox{0.85}{0.725} & \scalebox{0.85}{0.915} 
            & \scalebox{0.85}{0.908} & \scalebox{0.85}{0.842} & \scalebox{0.85}{0.787} 
            & \scalebox{0.85}{0.919} & \scalebox{0.85}{0.925} & \scalebox{0.85}{0.926}
            & \scalebox{0.85}{0.852} \\
            \scalebox{0.85}{HD-Emb} 
            & \scalebox{0.85}{0.939} & \scalebox{0.85}{0.957} & \scalebox{0.85}{0.981}
            & \scalebox{0.85}{0.732} & \scalebox{0.85}{0.808} & \scalebox{0.85}{0.900}
            & \scalebox{0.85}{0.606} & \scalebox{0.85}{0.731} & \scalebox{0.85}{0.920} 
            & \scalebox{0.85}{0.909} & \scalebox{0.85}{0.845} & \scalebox{0.85}{0.791} 
            & \scalebox{0.85}{0.912} & \scalebox{0.85}{0.927} & \scalebox{0.85}{0.926}
            & \scalebox{0.85}{0.854} \\
            \midrule
            \scalebox{0.85}{PatchTST} 
            & \scalebox{0.85}{0.950} & \scalebox{0.85}{0.969} & \scalebox{0.85}{0.989}
            & \scalebox{0.85}{0.713} & \scalebox{0.85}{0.790} & \scalebox{0.85}{0.886}
            & \scalebox{0.85}{0.536} & \scalebox{0.85}{0.673} & \scalebox{0.85}{0.902} 
            & \scalebox{0.85}{0.861} & \scalebox{0.85}{0.810} & \scalebox{0.85}{0.764} 
            & \scalebox{0.85}{0.827} & \scalebox{0.85}{0.868} & \scalebox{0.85}{0.913}
            & \scalebox{0.85}{0.822} \\
            \scalebox{0.85}{TD-Emb} 
            & \scalebox{0.85}{0.936} & \scalebox{0.85}{0.959} & \scalebox{0.85}{0.983}
            & \scalebox{0.85}{0.714} & \scalebox{0.85}{0.784} & \scalebox{0.85}{0.881}
            & \scalebox{0.85}{0.532} & \scalebox{0.85}{0.670} & \scalebox{0.85}{0.902} 
            & \scalebox{0.85}{0.845} & \scalebox{0.85}{0.802} & \scalebox{0.85}{0.764} 
            & \scalebox{0.85}{0.930} & \scalebox{0.85}{0.926} & \scalebox{0.85}{0.923}
            & \scalebox{0.85}{0.828} \\
            \scalebox{0.85}{HD-Emb} 
            & \scalebox{0.85}{0.941} & \scalebox{0.85}{0.962} & \scalebox{0.85}{0.984}
            & \scalebox{0.85}{0.715} & \scalebox{0.85}{0.783} & \scalebox{0.85}{0.881}
            & \scalebox{0.85}{0.536} & \scalebox{0.85}{0.673} & \scalebox{0.85}{0.902} 
            & \scalebox{0.85}{0.854} & \scalebox{0.85}{0.807} & \scalebox{0.85}{0.764} 
            & \scalebox{0.85}{0.921} & \scalebox{0.85}{0.922} & \scalebox{0.85}{0.922}
            & \scalebox{0.85}{0.829} \\
            \midrule
            \scalebox{0.85}{ModernTCN} 
            & \scalebox{0.85}{0.945} & \scalebox{0.85}{0.965} & \scalebox{0.85}{0.986}
            & \scalebox{0.85}{0.749} & \scalebox{0.85}{0.816} & \scalebox{0.85}{0.896}
            & \scalebox{0.85}{0.558} & \scalebox{0.85}{0.691} & \scalebox{0.85}{0.908} 
            & \scalebox{0.85}{0.816} & \scalebox{0.85}{0.844} & \scalebox{0.85}{0.874} 
            & \scalebox{0.85}{0.903} & \scalebox{0.85}{0.930} & \scalebox{0.85}{0.958}
            & \scalebox{0.85}{0.849} \\
            \scalebox{0.85}{TD-Emb} 
            & \scalebox{0.85}{0.944} & \scalebox{0.85}{0.966} & \scalebox{0.85}{0.987}
            & \scalebox{0.85}{0.745} & \scalebox{0.85}{0.813} & \scalebox{0.85}{0.892}
            & \scalebox{0.85}{0.563} & \scalebox{0.85}{0.696} & \scalebox{0.85}{0.913} 
            & \scalebox{0.85}{0.819} & \scalebox{0.85}{0.847} & \scalebox{0.85}{0.876} 
            & \scalebox{0.85}{0.900} & \scalebox{0.85}{0.927} & \scalebox{0.85}{0.956}
            & \scalebox{0.85}{0.850} \\
            \scalebox{0.85}{HD-Emb} 
            & \scalebox{0.85}{0.947} & \scalebox{0.85}{0.965} & \scalebox{0.85}{0.988}
            & \scalebox{0.85}{0.754} & \scalebox{0.85}{0.822} & \scalebox{0.85}{0.891}
            & \scalebox{0.85}{0.560} & \scalebox{0.85}{0.693} & \scalebox{0.85}{0.909} 
            & \scalebox{0.85}{0.812} & \scalebox{0.85}{0.841} & \scalebox{0.85}{0.872} 
            & \scalebox{0.85}{0.914} & \scalebox{0.85}{0.944} & \scalebox{0.85}{0.969}
            & \scalebox{0.85}{0.853} \\
        \bottomrule
    \end{tabular}
    \end{small}
  \end{threeparttable}
  \vspace{-10pt}
\end{table}

\begin{table*}[h!]
\centering
\vspace{-1em}
\caption{Multivariate long-term series forecasting results with input length are\{96, 336, 720\} on PatchTST. }
\vspace{-0.5em}
\renewcommand{\arraystretch}{1}
\setlength{\tabcolsep}{8pt}
\begin{adjustbox}{width=1\columnwidth, center}
\resizebox{1\columnwidth}{!}{
\begin{tabular}{c|c|cc|cc|cc|cc|cc|cc}
\toprule[1.5pt]
\multicolumn{2}{c|}{\multirow{1}{*}{\large{Model}}}&\multicolumn{2}{c}{{PatchTST-336-ori}}&\multicolumn{2}{c}{PatchTST-336-TD}&\multicolumn{2}{c}{PatchTST-336-HD}&\multicolumn{2}{c}{{PatchTST-720-ori}}&\multicolumn{2}{c}{PatchTST-720-TD}&\multicolumn{2}{c}{PatchTST-720-HD}\\
\cmidrule(l){1-2}\cmidrule(l){3-4}\cmidrule(l){5-6}\cmidrule(l){7-8}\cmidrule(l){9-10}\cmidrule(l){11-12}\cmidrule(l){13-14}
\multicolumn{2}{c|}{Efficiency} & MSE & MAE & MSE & MAE & MSE & MAE & MSE & MAE & MSE & MAE & MSE & MAE \\
\midrule
\midrule
\multirow{5}{*}{\begin{sideways}ETTh1\end{sideways}} 
& 96  & 0.376 & 0.407 & 0.381 & 0.405 & 0.382 & 0.400 & 0.392 & 0.428 & 0.379 & 0.414 & 0.370 & 0.404 \\
& 192 & 0.424 & 0.441 & 0.424 & 0.430 & 0.413 & 0.420 & 0.431 & 0.449 & 0.417 & 0.437 & 0.426 & 0.441 \\
& 336 & 0.423 & 0.441 & 0.431 & 0.432 & 0.414 & 0.428 & 0.469 & 0.474 & 0.424 & 0.441 & 0.488 & 0.492 \\
& 720 & 0.455 & 0.474 & 0.432 & 0.460 & 0.444 & 0.442 & 0.508 & 0.504 & 0.465 & 0.481 & 0.678 & 0.578 \\
& AVG & 0.420 & 0.441 & 0.417 & 0.432 & 0.413 & 0.422 & 0.481 & 0.483 & 0.421 & 0.443 & 0.490 & 0.479 \\
\midrule
\multirow{5}{*}{\begin{sideways}ETTh2\end{sideways}} 
& 96  & 0.295 & 0.354 & 0.286 & 0.351 & 0.298 & 0.354 & 0.313 & 0.372 & 0.293 & 0.358 & 0.293 & 0.354 \\
& 192 & 0.351 & 0.424 & 0.347 & 0.390 & 0.363 & 0.391 & 0.393 & 0.418 & 0.348 & 0.396 & 0.347 & 0.390 \\
& 336 & 0.375 & 0.414 & 0.373 & 0.407 & 0.374 & 0.405 & 0.502 & 0.471 & 0.378 & 0.413 & 0.376 & 0.424 \\
& 720 & 0.410 & 0.469 & 0.410 & 0.441 & 0.416 & 0.450 & 0.548 & 0.527 & 0.412 & 0.454 & 0.428 & 0.463 \\
& AVG & 0.358 & 0.415 & 0.354 & 0.397 & 0.363 & 0.400 & 0.439 & 0.447 & 0.357 & 0.405 & 0.361 & 0.408 \\
\midrule
\multirow{5}{*}{\begin{sideways}ETTm2\end{sideways}} 
& 96  & 0.167 & 0.256 & 0.173 & 0.264 & 0.170 & 0.258 & 0.178 & 0.274 & 0.185 & 0.277 & 0.172 & 0.262 \\
& 192 & 0.225 & 0.299 & 0.232 & 0.302 & 0.221 & 0.292 & 0.240 & 0.312 & 0.256 & 0.321 & 0.241 & 0.310 \\
& 336 & 0.281 & 0.337 & 0.276 & 0.332 & 0.282 & 0.338 & 0.293 & 0.347 & 0.287 & 0.347 & 0.292 & 0.345 \\
& 720 & 0.380 & 0.402 & 0.368 & 0.388 & 0.365 & 0.388 & 0.407 & 0.421 & 0.370 & 0.401 & 0.386 & 0.408 \\
& AVG & 0.263 & 0.323 & 0.262 & 0.321 & 0.259 & 0.319 & 0.280 & 0.339 & 0.275 & 0.337 & 0.273 & 0.331 \\
\midrule
\multirow{5}{*}{\begin{sideways}Weather\end{sideways}} 
& 96  & 0.154 & 0.204 & 0.152 & 0.202 & 0.154 & 0.204 & 0.152 & 0.208 & 0.149 & 0.201 & 0.148 & 0.199 \\
& 192 & 0.201 & 0.248 & 0.197 & 0.244 & 0.198 & 0.244 & 0.204 & 0.256 & 0.194 & 0.245 & 0.190 & 0.239 \\
& 336 & 0.248 & 0.284 & 0.249 & 0.284 & 0.248 & 0.281 & 0.249 & 0.291 & 0.250 & 0.290 & 0.239 & 0.278 \\
& 720 & 0.330 & 0.342 & 0.326 & 0.336 & 0.320 & 0.332 & 0.316 & 0.338 & 0.317 & 0.333 & 0.310 & 0.325 \\
& AVG & 0.233 & 0.270 & 0.231 & 0.267 & 0.230 & 0.265 & 0.231 & 0.273 & 0.227 & 0.267 & 0.222 & 0.260 \\
\bottomrule[1.5pt]
\end{tabular}}
\end{adjustbox}
\label{tab:fore-all-length}
\end{table*}

\begin{table*}[h!]
\centering
\vspace{-2em}
\caption{Few-shot results: input length is 336, prediction horizons \{96, 192, 336, 720\}.}
\vspace{-0.5em}
\renewcommand{\arraystretch}{0.9}
\setlength{\tabcolsep}{20pt}
\begin{adjustbox}{width=1\columnwidth, center}
\resizebox{1\columnwidth}{!}{
\begin{tabular}{p{0.05cm}|c|c|cc|cc|cc}
\toprule[1.5pt]
\multicolumn{3}{c|}{\multirow{1}{*}{\large{Model}}}&\multicolumn{2}{c}{{Time-SSM}}&\multicolumn{2}{c}{PatchTST} &\multicolumn{2}{c}{ModernTCN}\\
\cmidrule(l){1-3}\cmidrule(l){4-5}\cmidrule(l){6-7}\cmidrule(l){8-9}
\multicolumn{3}{c|}{Metric} & MSE & MAE & MSE &MAE & MSE &MAE \\
\midrule
\midrule
\multirow{15}{*}{\begin{sideways}ETTh1\end{sideways}} 
        & \multirow{4}{*}{Original}
        & 96  & 0.546 & 0.508 & 0.452 & 0.460 & 0.457 & 0.460 \\
        & & 192 & 0.683 & 0.573 & 0.527 & 0.507 & 0.527 & 0.498 \\
        & & 336 & 0.846 & 0.654 & 0.770 & 0.629 & 0.637 & 0.544 \\
        & & 720 & 0.977 & 0.735 & 0.908 & 0.671 & 0.707 & 0.596 \\
        \cmidrule(l){2-9}
        & \multirow{4}{*}{TD-Emb} 
        & 96  & 0.535 & 0.500 & 0.477 & 0.470 & 0.421 & 0.442 \\
        & & 192 & 0.677 & 0.567 & 0.516 & 0.486 & 0.481 & 0.469 \\
        & & 336 & 0.840 & 0.643 & 0.539 & 0.505 & 0.600 & 0.531 \\
        & & 720 & 1.527 & 0.892 & 0.809 & 0.634 & 0.667 & 0.575 \\
        \cmidrule(l){2-9}
        & \multirow{4}{*}{HD-Emb} 
        & 96  & 0.530 & 0.497 & 0.470 & 0.456 & 0.417 & 0.440 \\
        & & 192 & 0.614 & 0.543 & 0.515 & 0.482 & 0.475 & 0.452 \\
        & & 336 & 0.760 & 0.619 & 0.545 & 0.513 & 0.598 & 0.533 \\
        & & 720 & 1.310 & 0.832 & 0.739 & 0.609 & 0.643 & 0.564 \\
        \midrule
        \multirow{15}{*}{\begin{sideways}ETTh2\end{sideways}} 
        & \multirow{4}{*}{Original}
        & 96  & 0.377 & 0.408 & 0.351 & 0.386 & 0.315 & 0.362 \\
        & & 192 & 0.464 & 0.454 & 0.427 & 0.429 & 0.400 & 0.408 \\
        & & 336 & 0.559 & 0.514 & 0.463 & 0.460 & 0.391 & 0.418 \\
        & & 720 & 0.660 & 0.564 & 0.554 & 0.519 & 0.456 & 0.467 \\
        \cmidrule(l){2-9}
        & \multirow{4}{*}{TD-Emb} 
        & 96  & 0.370 & 0.406 & 0.332 & 0.377 & 0.319 & 0.365 \\
        & & 192 & 0.461 & 0.454 & 0.392 & 0.411 & 0.422 & 0.401 \\
        & & 336 & 0.509 & 0.493 & 0.416 & 0.438 & 0.388 & 0.437 \\
        & & 720 & 0.652 & 0.562 & 0.510 & 0.498 & 0.459 & 0.447 \\
        \cmidrule(l){2-9}
        & \multirow{4}{*}{HD-Emb} 
        & 96  & 0.350 & 0.393 & 0.318 & 0.367 & 0.306 & 0.365 \\
        & & 192 & 0.423 & 0.439 & 0.399 & 0.423 & 0.401 & 0.416 \\
        & & 336 & 0.577 & 0.531 & 0.400 & 0.434 & 0.391 & 0.414 \\
        & & 720 & 0.635 & 0.561 & 0.474 & 0.481 & 0.439 & 0.453 \\
        \midrule
        \multirow{15}{*}{\begin{sideways}ETTm2\end{sideways}} 
        & \multirow{4}{*}{Original}
        & 96  & 0.224 & 0.300 & 0.196 & 0.275 & 0.233 & 0.297 \\
        & & 192 & 0.285 & 0.342 & 0.257 & 0.314 & 0.291 & 0.333 \\
        & & 336 & 0.361 & 0.386 & 0.308 & 0.349 & 0.325 & 0.357 \\
        & & 720 & 0.496 & 0.453 & 0.440 & 0.423 & 0.406 & 0.405 \\
        \cmidrule(l){2-9}
        & \multirow{4}{*}{TD-Emb} 
        & 96  & 0.211 & 0.293 & 0.202 & 0.282 & 0.207 & 0.277 \\
        & & 192 & 0.284 & 0.339 & 0.252 & 0.313 & 0.256 & 0.316 \\
        & & 336 & 0.353 & 0.381 & 0.301 & 0.342 & 0.303 & 0.331 \\
        & & 720 & 0.439 & 0.426 & 0.381 & 0.392 & 0.366 & 0.381 \\
        \cmidrule(l){2-9}
        & \multirow{4}{*}{HD-Emb} 
        & 96  & 0.208 & 0.292 & 0.197 & 0.279 & 0.201 & 0.280 \\
        & & 192 & 0.259 & 0.322 & 0.248 & 0.312 & 0.249 & 0.312 \\
        & & 336 & 0.315 & 0.360 & 0.298 & 0.343 & 0.283 & 0.327 \\
        & & 720 & 0.412 & 0.412 & 0.395 & 0.400 & 0.368 & 0.385 \\
        \midrule
        \multirow{15}{*}{\begin{sideways}Weather\end{sideways}} 
        & \multirow{4}{*}{Original}
        & 96  & 0.197 & 0.229 & 0.161 & 0.208 & 0.187 & 0.228 \\
        & & 192 & 0.257 & 0.285 & 0.206 & 0.251 & 0.281 & 0.289 \\
        & & 336 & 0.323 & 0.328 & 0.259 & 0.290 & 0.335 & 0.325 \\
        & & 720 & 0.444 & 0.402 & 0.334 & 0.345 & 0.369 & 0.360 \\
        \cmidrule(l){2-9}
        & \multirow{4}{*}{TD-Emb} 
        & 96  & 0.163 & 0.220 & 0.163 & 0.220 & 0.179 & 0.215 \\
        & & 192 & 0.209 & 0.259 & 0.209 & 0.259 & 0.256 & 0.277 \\
        & & 336 & 0.263 & 0.297 & 0.263 & 0.297 & 0.317 & 0.317 \\
        & & 720 & 0.335 & 0.346 & 0.335 & 0.346 & 0.337 & 0.344 \\
\cmidrule(l){2-9}
& \multirow{4}{*}{HD-Emb} 
  & 96  & 0.160 & 0.216 & 0.160 & 0.216 & 0.172 & 0.211 \\
& & 192 & 0.203 & 0.252 & 0.203 & 0.252 & 0.252 & 0.263 \\
& & 336 & 0.258 & 0.293 & 0.258 & 0.293 & 0.310 & 0.308 \\
& & 720 & 0.332 & 0.345 & 0.332 & 0.345 & 0.330 & 0.334 \\
\bottomrule[1.5pt]
\end{tabular}}
\end{adjustbox}
\label{tab:few-all}
\end{table*}








\begin{table*}[h!]
\centering
\vspace{-2em}
\caption{Multivariate long-term series forecasting results: input 96, prediction horizons \{96, 192, 336, 720\}.}
\vspace{-1em}
\renewcommand{\arraystretch}{0.8}
\setlength{\tabcolsep}{26pt}
\begin{adjustbox}{width=1\columnwidth, center}
\resizebox{1\columnwidth}{!}{
\begin{tabular}{p{0.05cm}|c|c|cc|cc|cc}
\toprule[1.5pt]
\multicolumn{3}{c|}{\multirow{1}{*}{\large{Model}}}&\multicolumn{2}{c}{{Time-SSM}}&\multicolumn{2}{c}{PatchTST} &\multicolumn{2}{c}{ModernTCN}\\
\cmidrule(l){1-3}\cmidrule(l){4-5}\cmidrule(l){6-7}\cmidrule(l){8-9}
\multicolumn{3}{c|}{Metric} & MSE & MAE & MSE &MAE & MSE &MAE \\
\midrule
\midrule
\multirow{15}{*}{\begin{sideways}ETTh1\end{sideways}} 
& \multirow{4}{*}{Original}
& 96  & 0.378 & 0.397 & 0.378 & 0.398 & 0.389 & 0.397 \\
& & 192 & 0.431 & 0.432 & 0.425 & 0.432 & 0.437 & 0.426 \\
& & 336 & 0.472 & 0.450 & 0.470 & 0.458 & 0.477 & 0.442 \\
& & 720 & 0.475 & 0.473 & 0.525 & 0.507 & 0.478 & 0.464 \\ 
& & AVG & 0.439 & 0.438 & 0.450 & 0.449 & 0.445 & 0.432 \\
\cmidrule(l){2-9}
& \multirow{4}{*}{TD-Emb} 
& 96  & 0.377 & 0.397 & 0.375 & 0.397 & 0.388 & 0.395 \\
& & 192 & 0.429 & 0.427 & 0.423 & 0.430 & 0.434 & 0.424 \\
& & 336 & 0.479 & 0.454 & 0.469 & 0.452 & 0.471 & 0.438 \\
& & 720 & 0.459 & 0.465 & 0.484 & 0.479 & 0.470 & 0.436 \\ 
& & AVG & 0.436 & 0.436 & 0.438 & 0.440 & 0.441 & 0.423 \\
\cmidrule(l){2-9}
& \multirow{4}{*}{HD-Emb} 
& 96  & 0.377 & 0.399 & 0.372 & 0.395 & 0.383 & 0.394 \\
& & 192 & 0.424 & 0.424 & 0.420 & 0.427 & 0.431 & 0.423 \\
& & 336 & 0.470 & 0.457 & 0.472 & 0.448 & 0.475 & 0.461 \\
& & 720 & 0.458 & 0.460 & 0.485 & 0.479 & 0.472 & 0.437 \\ 
& & AVG & 0.432 & 0.435 & 0.437 & 0.437 & 0.440 & 0.429 \\
\midrule
\multirow{15}{*}{\begin{sideways}ETTh2\end{sideways}} 
& \multirow{4}{*}{Original}
& 96  & 0.291 & 0.342 & 0.291 & 0.346 & 0.292 & 0.340 \\
& & 192 & 0.374 & 0.383 & 0.378 & 0.404 & 0.378 & 0.394 \\
& & 336 & 0.421 & 0.431 & 0.425 & 0.440 & 0.427 & 0.433 \\
& & 720 & 0.430 & 0.448 & 0.436 & 0.454 & 0.433 & 0.448 \\ 
& & AVG & 0.379 & 0.401 & 0.382 & 0.411 & 0.382 & 0.404 \\
\cmidrule(l){2-9}
& \multirow{4}{*}{TD-Emb} 
& 96  & 0.288 & 0.340 & 0.287 & 0.343 & 0.289 & 0.339 \\
& & 192 & 0.375 & 0.382 & 0.374 & 0.398 & 0.375 & 0.392 \\
& & 336 & 0.417 & 0.429 & 0.417 & 0.423 & 0.424 & 0.428 \\
& & 720 & 0.425 & 0.445 & 0.425 & 0.440 & 0.422 & 0.439 \\ 
& & AVG & 0.376 & 0.399 & 0.376 & 0.401 & 0.378 & 0.400 \\
\cmidrule(l){2-9}
& \multirow{4}{*}{HD-Emb} 
& 96  & 0.289 & 0.342 & 0.287 & 0.338 & 0.288 & 0.340 \\
& & 192 & 0.372 & 0.382 & 0.374 & 0.393 & 0.373 & 0.391 \\
& & 336 & 0.419 & 0.427 & 0.416 & 0.427 & 0.419 & 0.426 \\
& & 720 & 0.416 & 0.439 & 0.418 & 0.435 & 0.425 & 0.441 \\ 
& & AVG & 0.374 & 0.398 & 0.374 & 0.398 & 0.376 & 0.400 \\
\midrule
\multirow{15}{*}{\begin{sideways}ETTm1\end{sideways}} 
& \multirow{4}{*}{Original}
& 96  & 0.336 & 0.371 & 0.324 & 0.364 & 0.318 & 0.360 \\
& & 192 & 0.369 & 0.389 & 0.372 & 0.392 & 0.363 & 0.390 \\
& & 336 & 0.397 & 0.411 & 0.399 & 0.408 & 0.399 & 0.409 \\
& & 720 & 0.455 & 0.443 & 0.458 & 0.445 & 0.463 & 0.446 \\ 
& & AVG & 0.389 & 0.403 & 0.388 & 0.402 & 0.386 & 0.401 \\
\cmidrule(l){2-9}
& \multirow{4}{*}{TD-Emb} 
& 96  & 0.336 & 0.372 & 0.322 & 0.363 & 0.319 & 0.361 \\
& & 192 & 0.367 & 0.388 & 0.365 & 0.381 & 0.365 & 0.392 \\
& & 336 & 0.394 & 0.408 & 0.397 & 0.404 & 0.397 & 0.407 \\
& & 720 & 0.451 & 0.439 & 0.456 & 0.435 & 0.467 & 0.449 \\ 
& & AVG & 0.387 & 0.402 & 0.385 & 0.396 & 0.387 & 0.402 \\
\cmidrule(l){2-9}
& \multirow{4}{*}{HD-Emb} 
& 96  & 0.332 & 0.368 & 0.320 & 0.361 & 0.318 & 0.359 \\
& & 192 & 0.371 & 0.391 & 0.362 & 0.380 & 0.361 & 0.388 \\
& & 336 & 0.393 & 0.405 & 0.388 & 0.398 & 0.393 & 0.406 \\
& & 720 & 0.441 & 0.435 & 0.440 & 0.431 & 0.455 & 0.438 \\ 
& & AVG & 0.384 & 0.400 & 0.378 & 0.393 & 0.382 & 0.398 \\
\midrule
\multirow{15}{*}{\begin{sideways}ETTm2\end{sideways}} 
& \multirow{4}{*}{Original}
& 96  & 0.176 & 0.260 & 0.177 & 0.263 & 0.172 & 0.255 \\
& & 192 & 0.247 & 0.309 & 0.250 & 0.310 & 0.243 & 0.303 \\
& & 336 & 0.305 & 0.344 & 0.311 & 0.349 & 0.310 & 0.345 \\
& & 720 & 0.408 & 0.407 & 0.423 & 0.415 & 0.415 & 0.405 \\ 
& & AVG & 0.284 & 0.330 & 0.291 & 0.334 & 0.285 & 0.327 \\
\cmidrule(l){2-9}
& \multirow{4}{*}{TD-Emb} 
& 96  & 0.177 & 0.261 & 0.177 & 0.261 & 0.175 & 0.254 \\
& & 192 & 0.246 & 0.306 & 0.241 & 0.303 & 0.245 & 0.305 \\
& & 336 & 0.305 & 0.343 & 0.302 & 0.347 & 0.317 & 0.349 \\
& & 720 & 0.411 & 0.409 & 0.402 & 0.401 & 0.422 & 0.411 \\ 
& & AVG & 0.285 & 0.330 & 0.281 & 0.328 & 0.290 & 0.330 \\
\cmidrule(l){2-9}
& \multirow{4}{*}{HD-Emb} 
& 96  & 0.177 & 0.261 & 0.175 & 0.262 & 0.174 & 0.252 \\
& & 192 & 0.243 & 0.305 & 0.241 & 0.306 & 0.237 & 0.299 \\
& & 336 & 0.301 & 0.341 & 0.305 & 0.348 & 0.314 & 0.348 \\
& & 720 & 0.406 & 0.406 & 0.412 & 0.404 & 0.417 & 0.404 \\ 
& & AVG & 0.282 & 0.328 & 0.283 & 0.330 & 0.286 & 0.326 \\
\midrule
\multirow{15}{*}{\begin{sideways}Weather\end{sideways}} 
& \multirow{4}{*}{Original}
& 96  & 0.171 & 0.213 & 0.175 & 0.218 & 0.158 & 0.204 \\
& & 192 & 0.217 & 0.256 & 0.221 & 0.256 & 0.207 & 0.251 \\
& & 336 & 0.276 & 0.297 & 0.280 & 0.298 & 0.265 & 0.292 \\
& & 720 & 0.353 & 0.348 & 0.356 & 0.349 & 0.341 & 0.344 \\ 
& & AVG & 0.254 & 0.279 & 0.258 & 0.280 & 0.243 & 0.273 \\
\cmidrule(l){2-9}
& \multirow{4}{*}{TD-Emb} 
& 96  & 0.166 & 0.210 & 0.172 & 0.216 & 0.158 & 0.215 \\
& & 192 & 0.215 & 0.254 & 0.220 & 0.259 & 0.211 & 0.255 \\
& & 336 & 0.279 & 0.299 & 0.271 & 0.295 & 0.264 & 0.287 \\
& & 720 & 0.345 & 0.342 & 0.349 & 0.340 & 0.335 & 0.343 \\ 
& & AVG & 0.251 & 0.276 & 0.253 & 0.278 & 0.242 & 0.275 \\
\cmidrule(l){2-9}
& \multirow{4}{*}{HD-Emb} 
& 96  & 0.167 & 0.211 & 0.176 & 0.221 & 0.153 & 0.200 \\
& & 192 & 0.218 & 0.256 & 0.226 & 0.263 & 0.204 & 0.247 \\
& & 336 & 0.274 & 0.295 & 0.281 & 0.297 & 0.261 & 0.285 \\
& & 720 & 0.355 & 0.350 & 0.348 & 0.345 & 0.332 & 0.340 \\ 
& & AVG & 0.254 & 0.278 & 0.258 & 0.282 & 0.238 & 0.268 \\
\midrule
\multirow{15}{*}{\begin{sideways}Electrcity\end{sideways}} 
& \multirow{4}{*}{Original}
& 96  & 0.177 & 0.266 & 0.180 & 0.273 & 0.198 & 0.275 \\
& & 192 & 0.185 & 0.274 & 0.187 & 0.280 & 0.198 & 0.278 \\
& & 336 & 0.202 & 0.291 & 0.204 & 0.296 & 0.212 & 0.293 \\
& & 720 & 0.249 & 0.326 & 0.246 & 0.328 & 0.254 & 0.326 \\ 
& & AVG & 0.203 & 0.289 & 0.204 & 0.294 & 0.215 & 0.293 \\
\cmidrule(l){2-9}
& \multirow{4}{*}{TD-Emb} 
& 96  & 0.176 & 0.265 & 0.188 & 0.281 & 0.201 & 0.279 \\
& & 192 & 0.188 & 0.276 & 0.201 & 0.295 & 0.199 & 0.291 \\
& & 336 & 0.201 & 0.289 & 0.220 & 0.314 & 0.208 & 0.290 \\
& & 720 & 0.245 & 0.323 & 0.248 & 0.341 & 0.258 & 0.328 \\ 
& & AVG & 0.203 & 0.288 & 0.214 & 0.308 & 0.217 & 0.297 \\
\cmidrule(l){2-9}
& \multirow{4}{*}{HD-Emb} 
& 96  & 0.178 & 0.266 & 0.185 & 0.276 & 0.197 & 0.274 \\
& & 192 & 0.183 & 0.273 & 0.182 & 0.283 & 0.199 & 0.276 \\
& & 336 & 0.201 & 0.294 & 0.201 & 0.303 & 0.205 & 0.288 \\
& & 720 & 0.242 & 0.322 & 0.249 & 0.333 & 0.251 & 0.324 \\ 
& & AVG & 0.201 & 0.289 & 0.204 & 0.299 & 0.213 & 0.291 \\
\midrule
\multirow{15}{*}{\begin{sideways}Exchange\end{sideways}} 
& \multirow{4}{*}{Original}
& 96  & 0.087 & 0.205 & 0.097 & 0.216 & 0.102 & 0.227 \\
& & 192 & 0.181 & 0.304 & 0.182 & 0.304 & 0.202 & 0.322 \\
& & 336 & 0.340 & 0.422 & 0.342 & 0.426 & 0.354 & 0.431 \\
& & 720 & 0.861 & 0.698 & 0.951 & 0.731 & 0.915 & 0.723 \\ 
& & AVG & 0.367 & 0.407 & 0.393 & 0.419 & 0.393 & 0.425 \\
\cmidrule(l){2-9}
& \multirow{4}{*}{TD-Emb} 
& 96  & 0.082 & 0.201 & 0.083 & 0.202 & 0.089 & 0.208 \\
& & 192 & 0.175 & 0.299 & 0.177 & 0.299 & 0.190 & 0.311 \\
& & 336 & 0.332 & 0.418 & 0.329 & 0.416 & 0.340 & 0.422 \\
& & 720 & 0.840 & 0.688 & 0.841 & 0.695 & 0.874 & 0.705 \\ 
& & AVG & 0.357 & 0.402 & 0.358 & 0.403 & 0.373 & 0.412 \\
\cmidrule(l){2-9}
& \multirow{4}{*}{HD-Emb} 
& 96  & 0.082 & 0.202 & 0.087 & 0.207 & 0.091 & 0.210 \\
& & 192 & 0.172 & 0.296 & 0.180 & 0.301 & 0.194 & 0.316 \\
& & 336 & 0.334 & 0.417 & 0.330 & 0.416 & 0.344 & 0.426 \\
& & 720 & 0.833 & 0.685 & 0.847 & 0.692 & 0.880 & 0.709 \\ 
& & AVG & 0.355 & 0.400 & 0.361 & 0.404 & 0.377 & 0.415 \\
\bottomrule[1.5pt]
\end{tabular}}
\label{tab:fore-all}
\end{adjustbox}
\end{table*}

\end{document}